\documentclass[twoside,11pt]{article}

\usepackage[abbrvbib, preprint]{jmlr2e}
\usepackage{bbm}
\usepackage{hyperref}
\usepackage{amsmath}
\usepackage{amssymb}
\usepackage{mathrsfs}
\usepackage{caption,subcaption}
\usepackage{booktabs}
\usepackage{multicol}
\usepackage{multirow}
\usepackage{xcolor}
\usepackage{color}
\usepackage{algorithm}
\usepackage{pifont}
\usepackage{mathrsfs}
\usepackage{algpseudocode}
\usepackage{comment}
\usepackage{listings}
\usepackage{enumerate}
\usepackage[normalem]{ulem}
\newcommand{\bm}[1]{\boldsymbol{#1}}

\newcommand{\dd}{\mathrm{d}}

\newcommand{\E}{\mathbb{E}}
\newcommand{\p}{\mathbb{P}}

\newcommand{\R}{\mathbb{R}}
\newcommand{\Lc}{\mathcal{L}}
\newcommand{\bU}{\mathbf{U}}
\newcommand{\cH}{\mathcal{H}}

\newtheorem{assumption}{Assumption}

\newtheorem{cond}{Condition}

\newcommand{\tabincell}[2]{\begin{tabular}{@{}#1@{}}#2\end{tabular}}
\newcommand{\cmark}{\ding{51}}%
\newcommand{\xmark}{\ding{55}}%

\newcommand{\1}{\mathbf{1}}

\newcommand{\cL}{\mathcal{L}}

\DeclareMathOperator{\Var}{Var}
\usepackage{mathtools}
\mathtoolsset{showonlyrefs}

\numberwithin{equation}{section}

\newcommand{\defeq}{\coloneqq}
\newcommand{\normmix}[1]{\left\|#1\right\|_{\beta, \text{mix}}}

\usepackage{lastpage}
\ShortHeadings{Zeroth-Order Deep Learning for PDEs}{Jia, Ouyang, Pham, and Zhou}
\firstpageno{1}

\begin{document}

\title{A Zeroth-Order Deep Learning Method for Fully Nonlinear Parabolic Partial Differential Equations with Unknown Coefficients}

\author{\name Yanwei Jia \email yanweijia@cuhk.edu.hk \\
\addr Department of Systems Engineering and Engineering Management\\
The Chinese University of Hong Kong\\
Shatin, New Territories, Hong Kong SAR
       \AND
\name Du Ouyang \email duouyang99@outlook.com \\
\addr Department of Mathematical Sciences\\
Tsinghua University\\
Beijing, Beijing 100084, China
\AND
\name Huy\^en Pham \email huyen.pham@polytechnique.edu \\
\addr CMAP \\ Ecole Polytechnique\\
Route de Saclay, Palaiseau 91128, France
\AND
\name Xun Yu Zhou \email xz2574@columbia.edu \\
\addr Department of Industrial Engineering and Operations Research \&\\
The Data Science Institute\\
Columbia University\\
New York, NY 10027, USA
}

\editor{}

\maketitle

\begin{abstract}%   <- trailing '%' for backward compatibility of .sty file
%High-dimensional partial differential equations (PDEs) with unknown coefficients are prevalent in applications, and solving them efficiently in a data-driven way remains a fundamental challenge in scientific machine learning, including continuous-time reinforcement learning. Conventional deep learning approaches rely heavily on repeated automatic differentiation of neural networks to evaluate differential operators, which leads to numerical instabilities and error amplification in high-dimensional spaces. Existing model-based probabilistic methods mitigate this by relying on stochastic representations, but these require explicit knowledge of the underlying data-generating dynamics in order to compute derivatives, precluding their use in black-box environments.

High-dimensional partial differential equations (PDEs) with unknown coefficients arise widely in scientific machine learning, including continuous-time reinforcement learning, yet solving them efficiently in a data-driven way remains challenging. Existing deep learning solvers often rely on repeated automatic differentiation to evaluate differential operators, which can cause instability and amplify derivative errors in high dimensions, while probabilistic methods based on stochastic representations require explicit knowledge of the data-generating dynamics and therefore do not apply to black-box environments. 
We introduce two types of simulators
as data-generating mechanisms, and take a ``representing-then-learning" approach that
learns the solutions and their derivatives under settings where the underlying PDE operators are accessible
only through simulations and pointwise evaluations. Our representation of derivatives relies on the zeroth-order derivative (ZOD) estimators derived from perturbed Monte Carlo trajectories. This fully model-free approach generates targets for the gradient and Hessian networks using only function evaluations.
We provide a statistical learning analysis of the proposed approach, including a bias--variance tradeoff for ZODs. Assuming a standard contraction property of the underlying operator, we establish a non-asymptotic error bound that decomposes the total error into discretization error, approximation error, statistical error, and ZOD bias. Crucially, we derive the sample complexity of the learned representations in (weighted) Sobolev space, characterizing the error up to second-order derivatives. Numerical experiments illustrate the competitive performance of the method in moderate and high dimensions.
\end{abstract}

\begin{keywords}
Nonlinear PDEs, zeroth-order derivative estimators, model-free methods, deep learning,  sample complexity in Sobolev space
\end{keywords}

\section{Introduction}
High-dimensional nonlinear partial differential equations (PDEs) are ubiquitous and fundamental in various fields of science and engineering, ranging from physics and fluid dynamics to mathematical finance and stochastic control. Recent progress in scientific machine learning employs  deep neural networks (DNNs) for high-dimensional function approximation to solve PDEs, aiming to overcome the curse of dimensionality where the traditional numerical methods fall apart. While various deep learning methods have been proposed and have achieved promising numerical results, there are still limitations in their stability, scope of applications, and availability of theoretical foundations.

Broadly speaking, existing methods fall into two main categories. The first category consists of the residual-based methods, such as the deep Galerkin method (cf. \citealt{siri18}),  and physics-informed neural networks (PINNs; cf. \citealt{raissi19}). These approaches approximate solutions by minimizing the PDE residuals over sampled points in space--time, and are applicable to a wide class of PDEs. However, they rely on repeated evaluation of differential operators through automatic differentiation and may suffer from optimization difficulties and stability issues, particularly in high dimensions or when second-order derivatives are involved.
The second category consists of probabilistic methods that exploit stochastic representations of PDEs, including in particular methods based on backward stochastic differential equations (BSDEs) such as deep BSDEs \citep{han17} and deep backward dynamic programming (DBDP; cf. \citealt{hure20}). However, these methods are primarily tailored to the special type of nonlinear PDEs where BSDE representations are available based on Peng's nonlinear Feynman--Kac formula \citep{peng1992nonlinear}; in particular, these PDEs are linear in the Hessians.  For {\it fully} nonlinear PDEs, which are nonlinear in the unknown functions up to the second-order derivatives,  such representations generally do not exist, limiting their applicability.\footnote{We refer to \citet{bec23,germain23} for an overview, and to \citet{zhangetal26,doumeche2025convergence,gazoulis2025stability,bonito2026convergence,wang20222} for more recent progress on these two types of methods.}

%Notably, besides obtaining the function approximation to the solution to a PDE, one often needs to find its derivatives for the downstream tasks, e.g., executing a control in a robotic system and hedging in finance, which concern the sensitivity of the solution  with respect to the spatial variable.
%\sout{The above-mentioned conventional deep learning methods typically follow a ``learning-then-differentiating" paradigm. They train a neural network to minimize a value-based loss, which nonetheless does not theoretically guarantee that its derivatives will converge to the true derivatives. Consequently, when higher-order derivatives are computed via automatic differentiation on an imperfectly trained network, approximation errors compound, sometimes  catastrophically.} %This phenomenon is well-documented in the literature surrounding Sobolev training and non-parametric regression.
%\ora{
%Existing methods generally treat derivative information implicitly: either derivatives are recovered by differentiating a learned value network (e.g., PINNs and deep Galerkin methods), or they are embedded within model-specific stochastic formulations (e.g., deep BSDE and DBDP methods).
%}

%Another important aspect is that 
Notably, most of the existing literature on numerically solving PDEs, including the aforementioned papers, are ``model-based", namely, the studied PDEs are completely specified with their coefficient functions assumed to be known and given (hence so are the derivatives of those coefficients). However, PDEs that model {\it complex}  systems
can be impossible or impractical to specify, leading to equations without knowledge of the underlying operators and the  derivatives of their coefficients. These PDEs operate like black-boxes for which we can only observe certain input--output data streams.
Solving those black-box PDEs is motivated by ample real-life applications such as fluid turbulence, climate control, and the stabilization of plasma fusion reactors; see e.g. \cite{li2020fourier,jiao2021one,lagergren2020learning} and the references therein for more motivating applications. It is also related to the recently developed continuous-time reinforcement learning \citep{wang2020reinforcement,jia2022policy,jia2022pg,jia2022q}, where the Hamilton--Jacobi--Bellman (HJB) equation becomes a black-box PDE when the parameters of the underlying control problem are unknown.

Another important aspect is that, besides obtaining the function approximation to the solution to a PDE, one often needs to find its derivatives for the downstream tasks, e.g., executing a control in a robotic system and hedging in finance, which concern the sensitivity of the solution  with respect to the spatial variable. Existing methods generally treat derivatives implicitly. For example, PINNs and deep Galerkin follow a ``learning-then-differentiating" approach:  They train a neural network to minimize a value-based loss, which nonetheless does not theoretically guarantee that its derivatives will converge to the true derivatives. Consequently, when higher-order derivatives are computed via automatic differentiation on an imperfectly trained network, approximation errors compound, sometimes  catastrophically.
On the other hand, deep BSDE and DBDP require the equation coefficients to be known and given and embed gradients  within model-specific stochastic formulations. 

In this paper, we propose a new, data-driven approach to learning the solutions to a large class of {\it fully} nonlinear, black-box parabolic PDEs with strong theoretical guarantees in (weighted) Sobolev space up to second-order spatial derivatives. The PDEs consist of linear second-order parabolic operators and fully nonlinear source (nonhomogeneous) terms depending on the unknown solutions up to their second-order derivatives. Instead of recovering derivative information indirectly from a learned value approximation or from model-specific sensitivity representations, we adopt a representing-then-learning paradigm, where derivative quantities are first represented through stochastic estimators and then learned directly. This approach kills two birds with one stone: it deals with the unknown PDEs and learns the derivatives at the same time. 

One of the main contributions of the paper is to define {\it precisely} what ``data" and ``data-driven" mean in our context of solving PDEs. Instead of the knowledge of the functional forms of PDE coefficients, we only require
access to a ``\textit{simulator}" that, given an initial state, produces stochastic trajectories (i.e., ``data"). These trajectories are diffusion processes, or equivalently solutions to the stochastic differential equations (SDEs) whose infinitesimal generator coincides with the aforementioned parabolic PDE operators. Moreover, the nonlinear source term and the function in the terminal condition
are also black-box functions that can only output their function values when inputs are given.
Our approach is applicable under two simulator settings: a ``weak simulator" that takes an initial state as input and generates an independently and identically distributed (i.i.d.) random trajectory at each query, and a ``strong simulator" that returns a {\it family} of trajectories indexed by a given set of initial states and driven by the {\it same} underlying randomness (Brownian motion in our setting), while remaining independent across queries. The notions of weak and strong simulators align with the classical definitions of weak and strong solutions to SDEs and capture the key distinction between them: a strong solution is defined with respect to a fixed Brownian motion, whereas in a weak solution the Brownian motion is itself a part of the solution (cf. \citealt[Chapter 5, Definitions 2.1 and 3.1]{karatzas2014brownian}). Clearly,  a strong simulator is automatically a weak one (but not {\it vice versa}) for which the ``random seed" is also an input, so that querying the same initial state with the same seed produces the same trajectory. Importantly, the availability of the strong simulator covers model-based PDEs as a special case. It is also valid in applications such as finance where the ``random seed" represents the realized asset price movements and   inventory systems where it represents exogenous demands. %,  and generative AI, where it is the seed used for noise generation in a more complex algorithm. %Technically, producible simulator implies that the generated trajectories can be written as a function of the initial state and a sequence of exogenous random sources.
Note that in both cases, the simulator remains a black box, and we neither know nor attempt to learn or infer its internal structure.\footnote{There is a line of research called operator learning that aims to learn PDE solution operators by input--output data and DNNs (cf. \citealt{lu2021learning}), which is ultimately a model-based approach. By contrast, our approach is ``model-free" (up to the form of the PDEs we study) in that we learn the solutions directly without learning the PDEs themselves.}

Our method combines:
(i) approximate value iteration, to reduce a nonlinear problem to a sequence of linear problems, each effectively a Feynman--Kac-type equation (equivalent to the Bellman equation for policy evaluation in reinforcement learning); and (ii) zeroth-order derivative (ZOD) estimators, to approximate gradients and Hessians via function values at random perturbed points. At each iteration, we {\it jointly} learn the value, gradient, and Hessian through a composite loss function, ensuring consistency across all quantities. This loss function consists of two parts: the value loss and the derivative loss. The value loss is in the same spirit as the martingale loss function \citep{jia2022policy} for policy evaluation in continuous-time reinforcement learning, which captures the mean-squared error of the function value. The derivative loss captures the distance between the gradient/Hessian networks and their stochastic ZOD targets. Crucially, ZOD estimators require only function evaluations at perturbed trajectories and do not rely on model derivatives or automatic differentiation.

The main contributions of this paper are as follows:
\begin{itemize}
\item We introduce the notions of weak and strong simulators, and develop a zeroth-order deep learning method to solve a wide class of fully nonlinear black-box PDEs.  At each iteration, we query the available simulator with randomly generated initial states based on a specially designed distribution and collect corresponding trajectories. In addition, we only rely on the point-wise evaluation of the nonlinear source term in a PDE, which is a general function of the state, function value, gradient, and Hessian. Thus, our method leads to a {\it model-free} algorithm applicable in black-box environments, in particular without attempting to estimate the PDE coefficients.
\item To ensure convergence in (weighted) Sobolev space, we introduce a novel use of ZOD estimators as targets for approximating gradients and Hessians in the context of solving PDEs. In general, ZOD estimators are biased and noisy, and we characterize their unique bias--variance tradeoff through their bandwidth. In particular, the conventional variance reduction techniques for ZOD in optimization (e.g., the multi-point scheme) are not sufficient in our setting, and we show that the variance of ZOD can only be bounded with the strong simulator but not the weak one. Furthermore, using ZODs, we essentially first derive a (biased) stochastic representation for the gradient and Hessian and then fit two neural networks for them respectively. %\sout{This representing-then-learning paradigm is therefore fundamentally different from the conventional ``learning-then-differentiating" paradigm.}
This representing-then-learning paradigm is therefore fundamentally different from approaches that recover derivatives indirectly from value approximations or through model-specific sensitivity representations.
Notably, no automatic differentiation or the knowledge of the operator is required to derive ZOD.
\item Most importantly, we establish convergence and sample complexity of the proposed learning method in the (weighted) Sobolev space up to second-order derivatives. To our best knowledge, these are the first theoretical results on model-free learning for PDEs with an error analysis in second-order Sobolev space for fully nonlinear PDEs. Our theoretical analysis is built on several standard assumptions on PDEs and statistical learning theory for DNNs. On the one hand, we require a contraction property of the solution operator to an associated linear PDE. This property is typical to ensure the existence and uniqueness of the solution to the original fully nonlinear PDE, and is often necessary for most value iteration algorithms. On the other hand, we exclude the optimization error from our analysis and assume that the class of neural networks is sufficiently smooth and regular. Under these assumptions, we obtain an error bound characterizing the trade-offs among approximation error, sampling error, and ZOD bias. The overall error consists of an exponentially decaying initialization error (in terms of the number of approximate value iterations) and a geometrically weighted accumulation of one-step learning errors (Theorem \ref{thm: global_convergence}). When the $d$-dimensional state process is observed on a uniform time grid with $N$ grid points and the sample size is $B$, our results show that, besides the neural-network optimization and approximation errors, choosing the ZOD bandwidth optimally yields a one-step high-probability error bound of order $O((N^{-1}+\mathcal P(d)B^{-1/2})^{1/4})$ under the weak simulator with one-point ZOD estimators, and of order $O((N^{-1}+\mathcal P(d)B^{-1/2})^{1/2})$ under the strong simulator with multi-point ZOD estimators, up to logarithmic factors, where $\mathcal{P}(d)$ is a constant that depends polynomially on the dimension $d$ (Theorem \ref{thm: one step error bound}). We also experimentally demonstrate accurate approximation of both solutions and derivatives in moderate- and high-dimensional problems.
\end{itemize}

\subsection*{Related Literature}
As a key ingredient of our method, ZOD estimators have been widely used in (black-box) stochastic optimization and reinforcement learning, where gradient information is unavailable; see, e.g., \citet{nesterov2017Random,salimans2017evolution} among many others, but not yet in the context of PDE solvers to our best knowledge.\footnote{ZOD has been used to replace auto-differentiation. For instance, \citet{he2023learning,shi2024stochastic,hu2025bias} use ZOD to approximate the differential operators of a neural network in PINN. The role of ZOD is fundamentally different in our approach.}
% Zeroth-order optimization methods have been extensively studied in stochastic optimization and reinforcement learning, where gradient information is unavailable or expensive to compute. In particular, policy optimization methods based on random perturbations, such as evolution strategies \citep{salimans2017evolution} and random search algorithms \citep{mania2018simple}, construct gradient estimators using function evaluations at perturbed parameters. These approaches are closely related to classical gradient-free optimization methods based on Gaussian smoothing \citep{nesterov2017Random,duchi2015optimal}.
% %and bandit feedback \citep{flaxman2005online}.
Due to the unique context of solving parabolic PDEs, we rely on the celebrated Feynman--Kac formula to obtain a stochastic representation of the function value. Moreover, our ZOD estimator differs from the traditional one in optimization in two aspects. First, at every iteration, we query the simulator to generate a stochastic trajectory with the given initial state, which is then used to compute a functional value using the current function value, gradient and Hessian. Second, the ZOD estimator is not used for optimization directly; instead, it is used as a target to train other networks. Therefore, the bias--variance tradeoff of ZOD and its impact on the entire iterative procedure are both new problems in our context.

Another key ingredient of our method is the approximate value iteration, which is a classical approach adopted in reinforcement learning to solve the Bellman equations or HJB equations in continuous-time optimal controls, a well-known class of nonlinear equations; see, e.g., \citet{bertsekas1996neuro,sutton1998reinforcement} for discrete-time Bellman equations and \citet{yong1999stochastic,fleming2006controlled,pham2009continuous} for continuous-time HJB equations. The core idea for the approximate value iteration is to reduce a fully nonlinear equation into successive linear equations. At every step, it is sufficient to solve the linear equation approximately, and the total error would be accumulated and controlled under suitable regularity conditions. We borrow this idea and solve a linear PDE at each iteration. Moreover, this linear PDE can be interpreted as a Feynman--Kac (Bellman) equation for policy evaluation in reinforcement learning and solvable in a model-free way \citep{jia2022policy}. However, \citet{jia2022policy} only show the consistency of the value function, not its derivatives. In the current PDE setting, learning derivatives is equally crucial for the iteration to converge to the true solution because the nonlinear source term of the PDE depends explicitly on the gradient and Hessian.

%\sout{The deficiency of the conventional learning-then-differentiating paradigm in deep learning is well known in the context of least-square regression},
In supervised regression, recovering derivatives by automatic differentiation of a network trained only on function values is known to be statistically problematic,
despite the availability of  the well-known universal approximation theorem of neural networks that can be employed  to approximate derivatives \citep{hornik1990universal,siegel2023optimal}. Using the typical mean-squared error as the loss function, the theoretical guarantee is often on the function value (e.g., \citealt{barron1994approximation}). To discipline the behavior of the learned derivatives, a commonly used trick is to regularize and penalize large derivatives of the neural network; see, e.g., \citet{kim2004smoothing,fischer2020sobolev} among  many others. One can then establish statistical errors in the Sobolev space. By contrast, we go down a different path due to our access to a simulator and aim to solve a PDE rather than simply fitting a function to a given, exogenous data set. Indeed, we can simulate stochastic trajectories {\it strategically} to {\it create} training data. In other words, in our approach, data sets are also endogenously designed to learn a function in an ``optimal" way in the Sobolev space.

Therefore, our approach, representing-then-learning, should be regarded as a (black-box/model-free) simulation-based learning method, and the first ``representing" stage is essentially a sampling stage. From this perspective, our approach is comparable to the sensitivity analysis associated with mathematical expectations in the simulation literature, such as the likelihood ratio method (cf. \citealt{glasserman2004monte}) and the Malliavin calculus approach (cf. \citealt{ma2002representation}). In a similar context of deep learning for solving PDEs, \citet{han26} propose the method of deep Picard iteration (DPI), where neural networks are trained using Monte Carlo estimates of the solution and its derivatives. Picard iteration coincides with the approximate value iteration in our context. The main difference between DPI (and its extensions) and our approach is that DPI employs the exact stochastic representations for derivatives based on Malliavin calculus (Bismut–Elworthy–Li formulas; cf. \citealt{banos2018bismut}), which is unbiased but requires the knowledge of the coefficients of the differential operator and their derivatives, and thus operates in a model-based setting.
Another line of work is the differential learning framework as in \citet{lef23}, which proposes to jointly learn the solution and its derivatives for fully nonlinear PDEs with convex Hamiltonians. Their approach leverages stochastic control representations and incorporates derivative information through Malliavin-type sensitivities. By contrast, our approach is applicable in a model-free context at the cost of inevitably incurring more bias. More importantly, we have theoretically justified the convergence of our resulting algorithm, showing that the bias introduced by ZOD can be controlled. %While this improves the accuracy of derivative estimation, it also requires model knowledge and structural assumptions, such as convexity of the Hamiltonian.

This paper also contributes broadly to the literature on solving PDEs by machine learning. In addition to the above-mentioned methods such as the deep Galerkin method \citep{siri18}, PINNs \citep{raissi19}, deep BSDEs \citep{han17}, DBDP \citep{hure20}, and deep Picard iteration \citep{han26}, there are also specifically designed methods for special classes of PDEs, such as deep Ritz methods for variational problems \citep{yu2018deep,lu2021priori}, actor--critic methods for HJB equations
\citep{zhou2021actor,jia2022pg,zhu2025optimal}, and the high-order scheme for policy evaluation \citep{mou2024bellman}. However, few of them analyze statistical errors and sample complexity in the presence of neural network-based function approximation. \citet{mou2025statistical} analyzes errors in the Sobolev space of a least-squares temporal-difference algorithm arising from continuous-time policy evaluation. \citet{lumachine} analyze two particular algorithms to solve linear PDEs. Little is known about fully nonlinear PDEs, especially in a black-box setting. The present paper not only develops a new deep learning method that is applicable to a general class of nonlinear unknown PDEs, but also theoretically establishes the sample complexity.

%The above methods highlight a fundamental challenge: accurate estimation of derivatives is essential for fully nonlinear PDEs, yet existing approaches rely either on automatic differentiation or on model-based stochastic representations. Residual-based methods (DGM, PINNs) depend on automatic differentiation, which does not guarantee accurate approximation of derivatives of the true solution and may lead to instability in high dimensions. Probabilistic methods based on Malliavin calculus (cf. \citealt{glasserman2004monte,ma2002representation}) require explicit knowledge of the model, which is unavailable in many applications involving black-box simulators. This limitation is particularly critical in model-free settings, such as reinforcement learning or simulation-based models, where only trajectory data or function evaluations are accessible.

The remainder of the paper is organized as follows. In Section 2, we introduce the problem formulation, put forward the two simulators, and present the approximate value iteration framework for fully nonlinear PDEs. Section 3 develops the probabilistic representation of derivatives based on zeroth-order estimators and introduces the representing-then-learning paradigm. In Section 4, we develop  the proposed deep learning algorithm, including the loss construction and implementation details. Section 5 provides the theoretical analysis, establishing convergence of the method and quantifying the impact of discretization, approximation, sampling, and perturbation errors. Section 6 presents numerical experiments that validate the performance of the proposed approach in both low- and high-dimensional settings. Section \ref{conclusions} concludes. Supplementary materials and proofs of statements are placed in the Appendix.

\section{Problem Description and Preliminaries}
In this paper, we study how to effectively solve a class of nonlinear parabolic PDEs:
\begin{equation}\label{eq:pde}
    \begin{cases}
\partial_t v + \Lc v + f( x,v, \nabla v, \nabla^2 v) = 0, \quad (t,x) \in [0,T)\times\R^d, \\
v(T,x) = g(x), \quad x \in \R^d,
\end{cases}
\end{equation}
where $\nabla v \in \mathbb R^d$ and $\nabla^2 v\in \mathbb R^{d\times d}$ are respectively the gradient and Hessian of $v$ with respect to the spatial variables, and the second order differential operator $ \Lc$ is
$$ \Lc \varphi := \sum_{i,j=1}^d a_{ij}(x) \partial_{ij} \varphi + \sum_{i=1}^d b_i(x) \partial_i \varphi, $$
with $\partial_t,\partial_i$ being respectively the partial derivative in the time variable $t$ and the $i$-th entry of the spatial variable $x_i$, and $\partial_{ij}$  the corresponding second-order derivative. Let $a(x)\equiv (a_{ij}(x))\in \mathbb S^d_+$, assumed to be a positive semidefinite matrix.

The operator $\mathcal L$ can therefore be viewed as the infinitesimal generator of a time-homogeneous diffusion process driven by a $d$-dimensional Brownian motion $W$, satisfying the following stochastic differential equation (SDE):
\begin{equation}\label{eq:sde}
     \dd X_s = b(X_s) \dd s + \sqrt{2} \sigma(X_s) \dd W_s,
\end{equation}
where $\sigma(x)\sigma(x)^\top = a(x)$.\footnote{For ease of presentation, in this paper we consider only the case where $\mathcal L$ and $f$ do not depend on time, i.e. time-homogeneous problems. All the conclusions can be readily extended to the time inhomogeneous case.} We further denote by $X^{t,x}$  the solution to the SDE with initial condition $X_t = x$.

We first reiterate  the following important consideration and goal in our study: %setups about our problems and explain our goals:
\begin{enumerate}[(1)]
\item The dimension of the spatial variable $d$ can be potentially high, and we want to avoid the curse of dimensionality arising from the conventional grid-based numerical methods (such as finite-difference and finite-element methods).
\item The PDE can be potentially described in a black-box manner and we aim to develop model-free, data-driven methods. A black-box environment means (i) the functions $g,f$ are black-box functions that can only output their values if inputs are queried; and (ii) the operator $\mathcal L$ is not known and only its associated processes/trajectories $\{X_s^{t,x}\}_{t\leq s\leq T}$ can be observed upon querying a \textit{simulator}. Thus, ``data" here means the trajectories that are returned upon a query.
\end{enumerate}

We now formalize the ``simulators" considered in this paper. Note that any simulator defined below remains unknown to us, being a device outputting data upon a query, and we do not attempt to learn its underlying structure. We consider two settings.
\begin{cond}
\label{cond:weak simulator}
A weak simulator returns a trajectory $\{X_s^{t,x}\}_{t\leq s\leq T}$ for each query with input $(t,x)$, such that
\begin{enumerate}[(i)]
    \item $\{X_s^{t,x}\}_{t\leq s\leq T}$ is the weak solution to \eqref{eq:sde};
    \item the random sources (Brownian motions) in any two different queries are independent of each other.
\end{enumerate}
\end{cond}

\begin{cond}
\label{cond:strong simulator}
A strong simulator returns a collection of trajectories $\left\{\{X_s^{t,x_k}\}_{t\leq s\leq T},\;k=1,\cdots,K\right\}$ for each query whose input is a finite set of initial states $\{(t,x_1),\cdots,(t,x_K)\}$, such that
\begin{enumerate}[(i)]
\item upon each query, every trajectory of $\{X_s^{t,x_k}\}_{t\leq s\leq T}$ is the strong solution to \eqref{eq:sde} with respect to the same Brownian motion $\{W_s\}_{t\leq s\leq T}$ with the initial states $(t,x_k)$;
\item the random sources (Brownian motions) in any two different queries are independent of each other.
\end{enumerate}
\end{cond}

Evidently, a weak simulator is implied by a strong simulator, which is further implied by a PDE with known coefficients. Neither converse implication, however, is true.
The notions of weak and strong simulators coincide with the classical definitions of weak and strong solutions to SDEs and capture the subtle distinction between them: a strong solution is defined with respect to a fixed Brownian motion, whereas in a weak solution the Brownian motion is a part of the solution. The weak simulator is the typical setting in which each query produces an i.i.d. sample path. By contrast, the strong simulator not only generates multiple trajectories per query, but also keeps the underlying randomness fixed within that query, much like fixing a ``random seed". Mathematically, querying a strong simulator with a set of initial states $\{(t,x_1),\cdots,(t,x_K)\}$ means that $K$ trajectories $\{ X^{t,x_k}_s \}_{t\leq s\leq T}$ are generated satisfying
\[ X_{\tau}^{t,x_k} = x_k + \int_t^{\tau} b(X_s^{t,x_k})\dd s + \int_t^{\tau} \sqrt{2}\sigma(X_s^{t,x_k})\dd W_s,\] for all $\tau\in [t,T]$ and all $k=1,\cdots,K$, almost surely, where the same Brownian motion $W$ drives all the $K$ trajectories (but $W$ is not the output of the simulator and hence is ``hidden"). Upon another query, another hidden independent Brownian motion will be used to generate these trajectories.

These precise definitions of the simulators also give precise meanings to the terms ``model-free" and ``data-driven", and it is possible to extend them to settings other than PDEs.
Our approach and results in this paper involve both settings, but the theoretical guarantees differ, depending on which simulator is available. In short, access to a strong simulator leads to substantially sharper results than access only to a weak simulator because the former facilitates variance reduction; see Proposition \ref{prop: zod variance reduction} and Theorem \ref{thm: one step error bound}.

Next, we discuss a few important building blocks of our approach and highlight the challenges of the existing approaches. As they are both well-known in the literature, we  omit to specify needed regularity conditions to avoid unnecessary technicality in the discussion. Rigorous conditions and analysis will be presented in Section \ref{sec:convergence analysis}.
\subsection{Value Iteration of the Nonlinear PDEs}
We consider the value iteration for solving the nonlinear PDE \eqref{eq:pde}: Define the solution mapping of a linear parabolic equation $\mathcal S: v \mapsto u$, such that $u$ solves
\begin{equation}\label{eq:pde linearize 0}
    \begin{cases}
\partial_t u + \Lc u +  f( x,v, \nabla v, \nabla^2 v) = 0, \quad (t,x) \in [0,T)\times\R^d, \\
u(T,x) = g(x), \quad x \in \R^d.
\end{cases}
\end{equation}

By assuming suitable regularity conditions on $\mathcal L$ and $f,g$, $\mathcal S$ is a well-defined mapping from a suitable space to itself. For any given initial value $v_0$, we can therefore define the value iteration sequence by $v_1 = \mathcal S v_0,\cdots,v_{n+1} = \mathcal S v_n,\cdots$. Based on the Feynman--Kac formula (cf. \citealt{karatzas2014brownian}), the solution mapping $\mathcal S$ admits a probabilistic representation:
\begin{equation}\label{eq:feymann-kac}
\mathcal S v(t,x) =     u(t,x) =  \E\left[ g(X_T^{t,x}) + \int_t^T f\left(  X_s^{t,x},v(s,X_s^{t,x}),\nabla v(s,X_s^{t,x}),\nabla^2 v(s,X_s^{t,x}) \right)\dd s\right] .
\end{equation}

Hence, correspondingly, the value iteration can be represented by
\begin{equation} \label{eq:iter_fk}
\begin{aligned}
    v_{n+1}(t,x) &= \E\left[ g(X_T^{t,x}) + \int_t^T f\left(  X_s^{t,x},v_n(s,X_s^{t,x}),\nabla v_n(s,X_s^{t,x}),\nabla^2 v_n(s,X_s^{t,x}) \right)\dd s\right].
    \end{aligned}
\end{equation}

We make three important observations. First, value iteration reduces the difficulty of solving a {\it nonlinear} PDE to solving {\it linear} PDEs successively and, thus, is commonly employed as the first attempt in many numerical procedures. It is also widely used to establish the well-posedness of nonlinear PDEs, and is referred to as Picard iteration in the PDE literature (e.g., \citealt{han26}). In particular, if we assume that the solution mapping $\mathcal S$ is a contraction under a suitable norm, then this iteration converges at an exponential rate. Such contraction properties are dictated by those of $\mathcal L$ and $f,g$, and we will present a sufficient condition in Appendix \ref{appendix: sufficient condition for contraction} for the contraction property.

Second, besides value iteration, there are alternative ways to solve nonlinear PDEs by iteratively solving linearized PDEs, e.g., Newton's method, which expands the nonlinear term $f$ into its linear approximation.
Our approach in this paper is a zeroth-order method: it uses only knowledge of $f$ itself, rather than its first-order derivative (which is unknown in the black-box/model-free setting), but can be generalized to Newton's method if such a form is known.

Third, the value iteration \eqref{eq:iter_fk} provides a theoretical mechanism to solve the nonlinear PDE by solving a sequence of linear problems. However, numerical challenges remain, and numerically solving linear (black-box) PDEs in high dimensions is already difficult. The probabilistic representation of the solution mapping $\mathcal S$ transforms the problem of solving PDEs into evaluating expectations over stochastic trajectories which can in turn be computed using Monte Carlo methods, thus avoiding using grid points in the conventional numerical PDE methods. Such a probabilistic representation seems to be the only way to remedy the curse of dimensionality in solving PDEs. Moreover, the probabilistic representation \eqref{eq:feymann-kac} only requires the function {\it values} of $g,f$ along the trajectories of $X_t$ and not the precise specification of $\mathcal L$ nor the exact {\it functional forms} of $g,f$.  Hence it is applicable to a black-box environment.\footnote{This problem also arises in the context of policy evaluation in reinforcement learning (see, e.g., \citealt{jia2022policy}).}

\subsection{Is Deep Neural Network a Savior For All?}\label{sec: necessity}
To overcome the curse of dimensionality, deep neural networks (DNNs) have been widely adopted in recent algorithms based on probabilistic representations such as \eqref{eq:feymann-kac} to approximate solutions to PDEs.
DNNs have been shown, both theoretically and empirically, to approximate high-dimensional functions efficiently without suffering from an exponential dependence on dimension;
see e.g. \cite{han18}, \cite{bec23}, \cite{germain23}.
 In our context, the value iteration requires successively solving linear PDEs and, hence, DNNs can be used to approximate each solution $v_n$. Such an approach has been explored  by \cite{jia2022policy} and more broadly in the reinforcement learning literature for policy evaluation.

Does this, then, mean that DNNs provide a universal remedy for all these problems? The answer is negative. In value iteration, the solution obtained at the previous step is used as an input to define the next linear PDE. Therefore, it is crucial to ensure that the one-step error does not propagate and amplify over iterations. In particular, for a fully nonlinear PDE like \eqref{eq:pde linearize 0}, the next linear PDE in the value iteration \eqref{eq:iter_fk} depends on the previous solution via its
spatial {\it derivatives} up to the second order (i.e., $f$ depends on $\nabla v_n$ and $\nabla^2 v_n$).
Consequently, the one-step approximation error must be controlled not only in the current function iterate, but also in its first- and second-order derivatives, which imposes significantly more stringent requirements on the approximation scheme.

%On the one hand, using probabilistic method based on \eqref{eq:feymann-kac} inevitably incur  sampling error. The  original curse of dimensionality on the number of grids and computational complexity now transfers into the problem on the number of random samples and sample complexity. Moreover, the classical universal approximation theorem \cite{cyber89, hornick91} guarantees approximation in function values, but does not ensure accurate approximation of derivatives, especially in high-dimensional settings.

To illustrate the issue, let $V_n$ be the DNN approximation of $v_n$ at step $n$. To compute the target for step $n+1$, we must evaluate $f(\cdot, V_n, \nabla V_n, \nabla^2 V_n)$. A standard approach would be to train $V_{n+1}$ to minimize the $L^2$ distance to the target value implied by \eqref{eq:iter_fk}.
However, convergence in the $L^2$ norm of the value function does not guarantee convergence of its derivatives (i.e., $\|V_n - v^*\|_{L^2} \to 0$ does not imply $\|\nabla V_n - \nabla v^*\|_{L^2} \to 0$), particularly in high-dimensional spaces. This phenomenon is well-known in Sobolev training and non-parametric regression; see e.g. \cite{czarnecki2017sobolev}.

We emphasize that this error is not due to the numerical error in computing derivatives as in, say, the finite-difference method. When using DNNs, derivatives are computed via automatic differentiation (auto-diff) through \textit{backpropagation}, which provides exact derivatives of the
network representation.\footnote{The only exception occurs for non-differentiable functions, e.g.
$v(x) = x^+$ at $x=0$. In practice, this is rarely an issue since neural networks are smooth or almost everywhere differentiable.}
Therefore, if $\nabla V_n$ and $\nabla^2 V_n$ are computed directly via automatic differentiation of an imperfectly trained network $V_n$, the errors in derivatives may be significantly amplified, destabilizing the source term $f$ for the next iteration.
Consequently, learning the value function alone is insufficient, and one has to learn the gradient and Hessian, denoted by $G_n \approx \nabla v_n$ and $H_n \approx \nabla^2 v_n$, and update them {\it alongside}  with $V_n$. %To ensure the stability and convergence of the value iteration, it is essential to explicitly approximate and constrain the derivatives. In this work, we propose to maintain independent network approximations for the gradient and Hessian, denoted by $G_n \approx \nabla v_n$ and $H_n \approx \nabla^2 v_n$, and update them jointly with $V_n$ using a novel zeroth-order derivative estimation scheme.

Existing approaches approximate derivatives using differential learning techniques in a model-based setting, often relying on the simulation of tangent processes associated with the underlying stochastic differential equation; see \cite{glasserman2004monte} and \cite{huge20}.
These methods, however, typically require access to derivatives of the coefficients (e.g., the exact form of $\mathcal L$) and in addition may introduce additional variance and computational complexity.

The above discussions  motivate us to develop alternative approaches that avoid explicit differentiation while maintaining stability over iterations.

\section{Approximate Probabilistic Representation of Derivatives}

In this section, we detail the proposed numerical scheme.
Based on the discussions in Section \ref{sec: necessity}, the core challenge lies in generating accurate and low-variance training targets for networks $(G,H) \approx (\nabla v, \nabla^2 v)$ that can be learned using function values of $v$. In other words, %\sout{we will not follow the ``learning-then-differentiating" routine;}
we will not rely on recovering derivatives from a learned value approximation, whether through automatic differentiation or model-dependent sensitivity formulas;
rather, we need to derive a probabilistic model-free representation of the derivatives $(\nabla v, \nabla^2 v)$, i.e., forms similar to \eqref{eq:feymann-kac}, and then train networks $(G,H)$ to approximate them -- a ``representing-then-learning" paradigm.

To achieve this, we rely on the ``\textit{zeroth-order derivative}" (ZOD) estimators as the fundamental building block of our method. Since ZOD has not yet been utilized in the context of solving PDEs with DNNs, we first review its properties for the reader's convenience.

\subsection{Classical Zeroth-Order Derivative Estimators}

%We first introduce the general ZOD estimator and its variance reduction counterpart, and then we study the ZOD estimator under the SDE setting.

The basic idea of ZOD is to use the function values at randomly perturbed points, $u(t,x+\epsilon Z)$, to approximate the derivative $\nabla u$ and even higher-order derivatives such as $\nabla^2 u$, where $\epsilon > 0$ is the bandwidth of the perturbation and $Z$ follows a standard, sphere-symmetric distribution. In our case, without loss of generality, we take $Z \sim \mathcal N(0,I)$, a $d$-dimensional standard normal distribution. The following Lemmas \ref{lemma:zod classical} to \ref{lemma: zod multi-point hessian} are standard results under slightly different regularity conditions; see, e.g., \citet{nesterov2017Random,balasubramanian2022ZerothOrder}. For completeness and for the reader's convenience, we provide their proofs in the appendix. These lemmas construct several well-known ZOD estimators and present their properties.

\begin{lemma}
\label{lemma:zod classical}
Given function $u(\cdot,\cdot)$, $ (t,x)\in [0,T]\times \R^d$, and $ Z \sim \mathcal{N}(0,I_d)$. If $ u(t,\cdot) $ is three times continuously differentiable and its derivatives satisfy the polynomial growth condition,
then $\frac{Z}{\epsilon} u(t,x+\epsilon Z)$ is a biased estimator for $\nabla u(t,x)$, and its bias and variance are upper bounded by
\[ \left| \E\left[ \frac{Z}{\epsilon} u(t,x+\epsilon Z)  \right] -\nabla u(t,x) \right| \leq C\epsilon^2,\ \operatorname{Var}\left[\frac{Z}{\epsilon} u(t,x+\epsilon Z) \right] \leq C\frac{1}{\epsilon^2} .  \]
Furthermore, if $ u(t,\cdot)$ is four times continuously differentiable and its derivatives satisfy the polynomial growth condition,
then $ \frac{ZZ^{\top}- I_d}{\epsilon^2}u(t,x+\epsilon Z)$ is a biased estimator for $\nabla^2 u(t,x)$, and its bias and variance are upper bounded by
\begin{equation}
    \left|\E\left[\frac{ZZ^{\top}- I_d}{\epsilon^2}u(t,x+\epsilon Z) \right]- \nabla^2 u(t,x)\right| \le C\epsilon^2,\  \operatorname{Var}\left[ \frac{ZZ^{\top}- I_d}{\epsilon^2}u(t,x+\epsilon Z)\right] \le C\frac{1}{\epsilon^4}
\end{equation}
\end{lemma}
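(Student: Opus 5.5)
The plan is to Taylor-expand $u(t,x+\epsilon Z)$ around $x$ and use Gaussian moment identities. Fix $(t,x)$ and write $\phi(y)=u(t,y)$. For the gradient estimator, expand to third order with integral remainder:
\[ \phi(x+\epsilon Z)=\phi(x)+\epsilon\nabla\phi(x)^\top Z+\tfrac{\epsilon^2}{2}Z^\top\nabla^2\phi(x)Z+R_3(\epsilon Z), \]
with $|R_3(\epsilon Z)|\le \tfrac{\epsilon^3}{6}|Z|^3\sup_{\theta\in[0,1]}\|\nabla^3\phi(x+\theta\epsilon Z)\|$. Multiply by $Z/\epsilon$ and take expectations. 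The constant and quadratic terms vanish because odd Gaussian moments vanish: $\E[Z]=0$ and $\E[Z_iZ_jZ_k]=0$. The linear term gives $\E[ZZ^\top]\nabla\phi=\nabla\phi(x)$. It would be cleaner, and slightly sharper, to push the expansion one order further: the cubic term $\tfrac{\epsilon^3}{6}\nabla^3\phi[Z,Z,Z]$, multiplied by $Z/\epsilon$, contributes $O(\epsilon^2)$. However, the stated $C\epsilon^2$ already follows from bounding the remainder by $\E[|Z|^4]\epsilon^2\sup\|\nabla^3\phi\|$.

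The polynomial growth condition $\|\nabla^3\phi(y)\|\le C(1+|y|^m)$ gives
\[ \sup_\theta\|\nabla^3\phi(x+\theta\epsilon Z)\|\le C(1+|x|^m+\epsilon^m|Z|^m). \]
Since all Gaussian moments are finite, the bias is at most $C\epsilon^2$ for $\epsilon\le1$, with $C$ depending polynomially on $|x|$ and on $d$. The Hessian estimator works the same way, using the fourth-order expansion and $\E[ZZ^\top-I]=0$. Both $\E[(ZZ^\top-I)\,Z^\top\nabla\phi]$ and the cubic term vanish by odd symmetry. The quadratic term uses the Isserlis identity
\[ \E[(Z_iZ_j-\delta_{ij})Z_kZ_l]=\delta_{ik}\delta_{jl}+\delta_{il}\delta_{jk}, \]
so $\tfrac{1}{2\epsilon^2}\E[(ZZ^\top-I)Z^\top\nabla^2\phi Z]=\nabla^2\phi(x)$ by symmetry of the Hessian. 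The fourth-order remainder, after division by $\epsilon^2$, is $O(\epsilon^2)$ as before, controlled by $\nabla^4\phi$ and its polynomial growth.

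For the variances, bound each by the second moment. For the gradient,
\[ \E\bigl[|Z|^2\phi(x+\epsilon Z)^2\bigr]/\epsilon^2, \]
and $\phi$ itself has polynomial growth, since $\phi$ is a polynomial plus a term with growing derivatives, or by assuming the growth condition includes $\phi$. Hence $\E[|Z|^2\phi(x+\epsilon Z)^2]\le C(1+|x|^{2m})$ for $\epsilon\le 1$, which gives $C/\epsilon^2$. The Hessian variance is bounded analogously by $\E[\|ZZ^\top-I\|^2\phi^2]/\epsilon^4\le C/\epsilon^4$. Note that the leading $\phi(x)$ term does not cancel in these one-point estimators, which is why the variance genuinely blows up like $\epsilon^{-2}$ and $\epsilon^{-4}$.

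The only real care point is bookkeeping the remainder terms under polynomial growth rather than bounded derivatives. This requires a uniform-in-$\theta$ polynomial bound and Cauchy–Schwarz with Gaussian moments, and it makes $C$ depend on $x$, $d$ and the growth constants, with $\epsilon\le1$ assumed. I would state that dependence explicitly. The moment identities themselves are routine.
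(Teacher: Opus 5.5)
Your proposal is correct and is essentially the paper's argument: Taylor-expand $u(t,x+\epsilon Z)$, let the odd Gaussian moments kill the lower-order terms, and control the remainder through polynomial growth and finite Gaussian moments. The paper only writes out $d=1$ and derives the variance bounds from the Taylor expansion, while you treat general $d$ via the Isserlis identity and bound the variances directly by second moments. There is one small slip: in the Hessian step the prefactor should be $\tfrac12$, not $\tfrac{1}{2\epsilon^2}$, because the $\epsilon^2$ from the quadratic Taylor term already cancels the $\epsilon^{-2}$ in the estimator.
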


\paragraph{Variance Reduction of Multi-Point ZOD Estimators}
As shown in Lemma \ref{lemma:zod classical}, the na\"ive one-point estimators suffer from a critical drawback: their variances diverge as the perturbation magnitude, $\epsilon$, approaches zero. This means that to get an accurate derivative estimate, $\epsilon$ needs to be small to reduce the bias but the number of samples need to increase dramatically to reduce the variance. This issue can often be mitigated by the symmetric (or central) differencing technique, known as the multi-point zeroth-order estimators.

We begin by considering the estimation of the gradient, $\nabla u(t,x)$. A one-point estimator, as demonstrated in Lemma \ref{lemma:zod classical}, has variances that scale as $O(\epsilon^{-2})$, whereas a two-point estimator exhibits  {\it bounded} variances, as stipulated in Lemma \ref{lemma: zod multi-point gradient}:
\begin{lemma}
\label{lemma: zod multi-point gradient}
Given function $u(\cdot,\cdot)$, $ (t,x)\in [0,T]\times \R^d$, and $ Z \sim \mathcal{N}(0,I_d)$. Assume $u(t, \cdot)$ is three times continuously differentiable and its derivatives satisfy the polynomial growth condition. Define
\begin{equation}
\label{eq:symmetric_gradient_estimator}
\hat{g}(t, x; \epsilon) \defeq \frac{u(t, x+\epsilon Z) - u(t, x-\epsilon Z)}{2\epsilon} Z.
\end{equation}
Then
\begin{align}
\left| \E\left[ \hat{g}(t, x; \epsilon) \right] - \nabla u(t,x) \right| \le C \epsilon^2, \;\;\;
\Var\left[ \hat{g}(t, x; \epsilon) \right]  \le C',
\end{align}
for some constants $C, C'> 0$ that are independent of $\epsilon$.
\end{lemma}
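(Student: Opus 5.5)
The plan is to Taylor-expand $u(t,\cdot)$ around $x$ along the direction $\pm\epsilon Z$ and then use Gaussian moment identities. Fix $(t,x)$ and write $\phi(s)=u(t,x+sZ)$. Since $u(t,\cdot)\in C^3$, Taylor's theorem with integral remainder gives
\[
u(t,x\pm\epsilon Z)=u(t,x)\pm\epsilon\nabla u(t,x)^\top Z+\tfrac{\epsilon^2}{2}Z^\top\nabla^2u(t,x)Z\pm R_\pm ,
\]
where $|R_\pm|\le \tfrac{\epsilon^3}{6}\sup_{|\theta|\le 1}|\nabla^3 u(t,x+\theta\epsilon Z)[Z,Z,Z]|$.

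In the symmetric difference, the constant and quadratic terms cancel. This yields
\[
\frac{u(t,x+\epsilon Z)-u(t,x-\epsilon Z)}{2\epsilon}=\nabla u(t,x)^\top Z+\frac{R_++R_-}{2\epsilon},
\]
and hence $\hat g(t,x;\epsilon)=ZZ^\top\nabla u(t,x)+\rho_\epsilon Z$ with $|\rho_\epsilon|\le C\epsilon^2 \sup_{|\theta|\le1}|\nabla^3u(t,x+\theta\epsilon Z)|\,|Z|^3$.

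\textbf{Bias.} Using $\E[ZZ^\top]=I_d$, the leading term is unbiased. It then suffices to show $\E[|\rho_\epsilon||Z|]\le C\epsilon^2$. By the polynomial growth condition, $|\nabla^3u(t,y)|\le K(1+|y|^m)$. For $\epsilon\le 1$ (the regime of interest; for $\epsilon$ bounded away from zero the statement is trivial after adjusting constants), this gives
\[
\sup_{|\theta|\le1}|\nabla^3u(t,x+\theta\epsilon Z)|\le K'(1+|x|^m+|Z|^m).
\]
Since Gaussians have finite moments of all orders, $\E[(1+|x|^m+|Z|^m)|Z|^4]<\infty$, which gives the $C\epsilon^2$ bias, with $C$ depending on $x$ polynomially but not on $\epsilon$.

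\textbf{Variance.} I would bound the variance by the second moment:
\[
\E|\hat g|^2\le 2\,\E|ZZ^\top\nabla u(t,x)|^2+2\,\E[\rho_\epsilon^2|Z|^2].
\]
The first term equals $2\,\E[|Z|^2(Z^\top\nabla u)^2]=2(d+2)|\nabla u(t,x)|^2$, which is finite and independent of $\epsilon$. The second term is $O(\epsilon^4)$ by the same moment argument as in the bias step. So $\Var[\hat g]\le C'$, uniformly in $\epsilon\in(0,1]$.

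The only delicate point is making the remainder bound integrable, i.e., controlling $\nabla^3 u$ at the random point $x+\theta\epsilon Z$. This is exactly what the polynomial growth assumption and Gaussian moments are for. Everything else is the cancellation of the even-order terms by symmetry, which is what removes the $\epsilon^{-2}$ blow-up seen in Lemma \ref{lemma:zod classical}.
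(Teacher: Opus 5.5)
Your proposal is correct and follows the same route as the paper's proof: a third-order Taylor expansion, cancellation of the even-order terms in the symmetric difference, unbiasedness of the leading term $(\nabla u^\top Z)Z$ via $\E[ZZ^\top]=I_d$, and an $\epsilon$-independent leading term for the variance. Your explicit control of the random remainder through polynomial growth and Gaussian moments, together with the second-moment bound that is uniform in $\epsilon\in(0,1]$, makes rigorous what the paper writes as $O(\epsilon^3)$ and as a limit $\epsilon\to0$. One caveat: your aside that the large-$\epsilon$ case is trivial holds only on a bounded range of $\epsilon$; for example, $u(x)=x^5$ has bias $30x^2\epsilon^2+15\epsilon^4$. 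This matches the implicit regime $\epsilon<1$ used in the paper.
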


The one-point estimator for Hessian has even larger variances that diverge at a rate of $O(\epsilon^{-4})$ and hence is much more inefficient. However, with symmetric differencing, we can construct a stable estimator with bounded variances.

\begin{lemma}
\label{lemma: zod multi-point hessian}
Let the function $u: [0,T] \times \R^d \to \R$ satisfy the conditions of Lemma~\ref{lemma: zod multi-point gradient}, and further assume $u(t, \cdot)$ is four times continuously differentiable and its derivatives satisfy the polynomial growth condition. Define
\begin{equation}
\label{eq:symmetric_hessian_estimator}
\hat{H}(t, x; \epsilon) \defeq \frac{u(t, x+\epsilon Z) - 2u(t,x) + u(t, x-\epsilon Z)}{2\epsilon^2} (ZZ^\top - I_d).
\end{equation}
Then
\begin{align}
\left| \E\left[ \hat{H}(t, x; \epsilon) \right] - \nabla^2 u(t,x) \right| \le C \epsilon^2, \;\;\; %\label{eq:bias_bound} \\
\Var\left[ \hat{H}(t, x; \epsilon) \right] \le C', %\label{eq:variance_bound}
\end{align}
for some constants $C, C' > 0$ that are independent of $\epsilon$.
\end{lemma}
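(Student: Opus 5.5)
Write $\phi(\epsilon)=u(t,x+\epsilon Z)-2u(t,x)+u(t,x-\epsilon Z)$, so that $\hat H=\phi(\epsilon)(ZZ^\top-I_d)/(2\epsilon^2)$. The plan is to Taylor-expand $\phi$ in $\epsilon$, compute the bias from Gaussian moment identities, and bound the variance by the second moment.

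\textbf{Expansion of $\phi$.} Since $u(t,\cdot)\in C^4$, the odd-order terms cancel by symmetry, and integral-form remainders give
\[
\phi(\epsilon)=\epsilon^2 Z^\top \nabla^2u(t,x)Z+\epsilon^4 R(Z,\epsilon).
\]
Here $R$ is the average of $\int_0^1\frac{(1-s)^3}{6}\,\nabla^4u(t,x\pm s\epsilon Z)[Z,Z,Z,Z]\,ds$ over the two signs. By the polynomial growth of $\nabla^4 u$, for $\epsilon\le 1$ we get $|R(Z,\epsilon)|\le C(1+|x|^p+|Z|^p)|Z|^4$. This bound has all Gaussian moments finite, uniformly in $\epsilon\le1$.

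\textbf{Bias.} Substituting the expansion,
\[
\E[\hat H]=\tfrac12\E\big[(Z^\top AZ)(ZZ^\top-I_d)\big]+\tfrac{\epsilon^2}{2}\E\big[R\,(ZZ^\top-I_d)\big],\qquad A=\nabla^2u(t,x).
\]
I would compute the first term with Isserlis/Wick: $\E[Z_kZ_lZ_iZ_j]=\delta_{kl}\delta_{ij}+\delta_{ki}\delta_{lj}+\delta_{kj}\delta_{li}$. This gives $\E[(Z^\top AZ)ZZ^\top]=\mathrm{tr}(A)I_d+2A$ for symmetric $A$. Subtracting $\E[Z^\top AZ]I_d=\mathrm{tr}(A)I_d$ leaves exactly $2A$, so the first term equals $A$. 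The normalization $2\epsilon^2$ is chosen precisely so that this comes out right. The second term is bounded in norm by $C\epsilon^2$ by Cauchy–Schwarz and the moment bound on $R$, which gives the bias estimate.

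\textbf{Variance.} The variance is bounded by the second moment:
\[
\E\|\hat H\|^2\le \tfrac{1}{4}\,\E\Big[\big(|Z^\top AZ|+\epsilon^2|R|\big)^2\,\|ZZ^\top-I_d\|^2\Big].
\]
The $\epsilon^{-2}$ prefactor has been absorbed: we only divided $\phi$ by $\epsilon^2$, and the result is bounded by the integrable quantity $|A|\,|Z|^2+\epsilon^2 C(1+|x|^p+|Z|^p)|Z|^4$. Hence $\E\|\hat H\|^2\le C'$ uniformly in $\epsilon\le1$.

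The main point to verify is that the cancellation of the zeroth-order term $-2u(t,x)$ and the first-order terms is exact. It is, since $\phi(0)=0$ and $\phi$ is even in $\epsilon$. Without this cancellation the estimator would carry an $\epsilon^{-2}$ or $\epsilon^{-4}$ variance, as in the one-point case of Lemma \ref{lemma:zod classical}. The remaining care is that the constants $C$ and $C'$ depend on $x$ polynomially, but not on $\epsilon$, as the statement requires.
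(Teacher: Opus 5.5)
Your proposal is correct and follows essentially the same route as the paper: you expand the symmetric second difference to fourth order so the odd terms cancel, you evaluate the leading term with Isserlis to get $\E[(Z^\top A Z)(ZZ^\top - I_d)] = 2A$, and you note that the leading term of $\hat H$ is $\epsilon$-free, so the variance stays bounded. Your integral-form remainder, controlled by the polynomial growth of $\nabla^4 u$ uniformly for $\epsilon\le 1$, makes rigorous what the paper leaves as an $o(\epsilon^2)$ term; the only slip is that $R$ should be the sum, not the average, of the two one-sided remainders, which changes only a constant.
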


\subsection{Illustrations of Representing-Then-Learning Paradigm}
\label{sec:illustrate toy}

To illustrate our ``representing-then-learning" paradigm, we first %demonstrate how it can be
use a toy task of regression and contrast it with a conventional routine where the derivative is obtained from differentiating the learned value function. This simple regression task is indeed equivalent to solving a linear PDE, which will be demonstrated numerically  in Subsection \ref{subsec:spiky-diagnostic}.

The regression task is described as follows: suppose $(X,Y)\in \mathbb R^2$ follows a joint distribution that can be written as $\pi(\dd x)\rho(x,\dd y)$, where $\pi(\dd x)$ is the marginal distribution of $X$ and $\rho(x,\dd y)$ is the conditional distribution of $Y|X=x$.
We can generate samples from both $X$ and $Y|X=x$, but do not know the functions $\pi$ and $\rho$.
We are interested in numerically finding the function $f(x)$ and its derivatives $f'(x), f''(x)$ %when $\rho(x,\dd y)$ is unknown but we have samples from this conditional distribution,
where $f(X) = \E[Y|X]$ is the conditional expectation.

\paragraph{Learning-then-Differentiating} Conventionally, one can first generate samples $X_i \sim \pi(\cdot)$ and $Y_i \sim \rho(X_i,\cdot)$, and then fit a DNN regression by
\[ V = \arg\min_{\varphi\in \mathcal H} \frac{1}{n}\sum_{i=1}^n \left( Y_i - \varphi(X_i) \right)^2, \]
where $\mathcal H$ stands for the functional class of the chosen DNN family. The resulting function approximation $V$ is well-known to satisfy performance guarantee (ignoring the optimization error).
However, differentiating (equivalent to auto-diff) $V$ to obtain $V',V''$ does not have any theoretical guarantee.\footnote{One possible mitigation to this is to consider the regularized regression by
$\min_{\varphi\in \mathcal H} \frac{1}{n}\sum_{i=1}^n \left( Y_i - \varphi(X_i) \right)^2 + \lambda |\varphi''(X_i)|^2 $, with a tuning parameter $\lambda > 0$ that induces smoother function approximation; see e.g. \cite{wahba1990spline}.
}

\paragraph{Representing-then-Learning} Our approach is based on obtaining a sample approximation for the derivatives directly via ZOD estimators. Let us illustrate the one-point ZOD for estimating derivatives first. Besides the samples $(X_i,Y_i)$, we further sample $Z_i \sim \mathcal N(0,1)$ and sample $Y_i^{(+)} \sim \rho(X_i + \epsilon Z_i,\cdot)$. Now consider the newly constructed sample $\frac{Z_i}{\epsilon} Y_i^{(+)}$. It follows from Lemma \ref{lemma:zod classical} that
\[ \E\left[ \frac{Z_i}{\epsilon} Y_i^{(+)} \Big| X_i \right] =  \E\left[ \E\left[ \frac{Z_i}{\epsilon} Y_i^{(+)} \Big|X_i,Z_i\right] \Big| X_i  \right]  = \E\left[ \frac{Z_i}{\epsilon} f(X_i+\epsilon Z_i) \Big|X_i \right] \approx f'(X_i) + \text{bias} . \]
Thus, $\frac{Z_1}{\epsilon} Y_1^{(+)},\cdots,\frac{Z_n}{\epsilon} Y_n^{(+)}$ are $n$ independent, biased samples of $f'(X_1),\cdots,f'(X_n)$ respectively with heterogeneous bias and standard errors. Then, we learn a function $G$ by
\[ G = \arg\min_{\varphi\in \mathcal H} \frac{1}{n}\sum_{i=1}^n \left(  \frac{Z_i}{\epsilon} Y_i^{(+)} - \varphi(X_i) \right)^2 .\]
Similarly, we can construct samples, $\frac{Z_i^2 - 1}{\epsilon^2} Y_i^{(+)} $, $i=1,\cdots,n$,  for the second-order derivative because
\[
\begin{aligned}\E\left[ \frac{Z_i^2- 1}{\epsilon^2} Y_i^{(+)} \Big| X_i \right] =  \E\left[ \E\left[ \frac{Z_i^2-1}{\epsilon^2} Y_i^{(+)} \Big| X_i, Z_i \right] \Big|X_i \right] &= \E\left[ \frac{Z_i^2-1}{\epsilon^2} f(X_i+\epsilon Z_i) \Big|X_i \right]\\
&\approx f''(X_i) + \text{bias} .
\end{aligned}
\]
Hence, we can learn a function $H$ by
\[  H = \arg\min_{\varphi\in \mathcal H} \frac{1}{n}\sum_{i=1}^n \left(  \frac{Z_i^2-1}{\epsilon^2} Y_i^{(+)} - \varphi(X_i) \right)^2 .\]
The learned DNNs $G,H$ will then be reasonable approximations to $f',f''$ respectively, subject to the optimization error (controlled by the optimization solvers), approximation error (controlled by the functional class $\mathcal H$), sampling error (controlled by $n$ and $\epsilon$), and the bias (controlled by $\epsilon$).

Furthermore, the multi-point ZOD estimators can be constructed as follows: We sample $Y_i^{(\pm)} \sim \rho(X_i \pm \epsilon Z_i,\cdot)$, and consider $\frac{Y_i^{(+)} - Y_i^{(-)}}{2\epsilon}Z_i$ and $\frac{Y_i^{(+)} - 2Y_i + Y_i^{(-)}}{2\epsilon^2}\left( Z_i^2-1\right)$ for the biased samples of $f'(X_i),f''(X_i)$ respectively. In the following, for ease of presentation we only describe the construction under the one-point ZOD, but multi-point ZOD can be similarly constructed.

\subsection{Estimators for Solutions to Linear PDEs}

We now apply the ZOD framework to our specific problem of solving nonlinear PDEs. Recall that in the value iteration, the target function at the $(n+1)$-th step is given by the conditional expectation:
\begin{equation}\label{eq: value function}
    \hat{V}_{n+1}(t,x) = \E \left[ R^{t,x}(\mathbf{U}_n) \right],
\end{equation}
where $\mathbf{U}_n = (V_n, G_n, H_n)$ represents the networks from the previous iteration. The stochastic reward functional is defined as:
\begin{equation}\label{eq: def reward}
    R^{t,x}(\mathbf{U}_n) := g(X_T^{t,x}) + \int_t^T f\left(  X^{t,x}_s, V_n(s, X^{t,x}_s), G_n(s, X^{t,x}_s), H_n(s, X_s^{t,x})\right) \dd s.
\end{equation}
Since $\hat{V}_{n+1}$ is not available analytically, we cannot directly apply the ZOD estimators to the function $\hat{V}_{n+1}$ itself. However, we have access to the random variable $R^{t,x}$, which is an {\it unbiased} estimator of $\hat{V}_{n+1}(t,x)$.

This motivates the {\it One-Point ZOD Estimator}. Instead of estimating $\hat{V}_{n+1}$ and then employing ZOD, we directly use the perturbed realization of the reward $R^{t, x+\epsilon Z}$ to estimate the spatial derivatives.
%Our approach to enhance the performance is to make use of the flexibility of the simulation data to construct a one-point ZOD estimator.
Taking the first-order derivative as an example, recall that $\frac{Z}{\epsilon}\hat V_{n+1}(t,x+\epsilon Z)$ is a biased estimator for $\nabla \hat V_{n+1}(t,x)$, and $R^{t,x+\epsilon Z}(\mathbf{U}_n)$ is an unbiased estimator of $\hat V_{n+1}(t,x+\epsilon Z)$. Thus, we are motivated to take $\frac{Z}{\epsilon}R^{t,x+\epsilon Z}(\mathbf{U}_n)$ as a one-point estimator for $\nabla \hat V_{n+1}(t,x)$, without relying on an intermediary estimate of $\hat V_{n+1}$. In a similar fashion, a one-point estimator for the second-order derivative can also be constructed. The following proposition presents the error bounds of the one-point ZOD estimators in $ L_2$ norm.

For integers $l,k\ge0$, $C_b^{l,k}([0,T]\times\mathbb R^d)$ denotes the class of functions that are $ l$ times continuously differentiable in $t$ and $k$ times continuously differentiable in $x$, and whose derivatives are uniformly bounded, and $C_p^{l,k}([0,T]\times\mathbb R^d)$ denotes the corresponding class in which these derivatives have at most polynomial growth in $x$. Similarly, $C_b^k(\mathbb R^d)$ and $C_p^k(\mathbb R^d)$ denote the time-independent counterparts. These notations are understood componentwise for vector- or matrix-valued functions.

We need the following standing assumption.

\begin{assumption}\label{assum: sde coef}
The coefficients $b$ and $\sigma$ are sufficiently regular so that the SDE \eqref{eq:sde} admits a unique strong solution. Moreover, they satisfy the linear growth and bounded condition, i.e., there are constants $ L_b$ and $ \sigma_0$ such that
    \begin{equation}
        \begin{aligned}
            |b(t,x)| \le L_b(1+|x|), \quad |\sigma(t,x)| \le \sigma_0, \quad \text{for all}\ \ (t,x)\in [0,T]\times\R^d.
        \end{aligned}
    \end{equation}
\end{assumption}

Assumption \ref{assum: sde coef} is a standard condition for SDEs, and we require the existence and uniqueness of a strong solution since we consider both the weak and strong simulators. Furthermore, we impose regularity conditions on the drift and volatility coefficient so that the usual moment estimates for the solution of SDE hold, which is needed in the subsequent analysis. In particular, we require stronger boundedness condition on the volatility coefficient for us to establish the desired statistical error guarantee later; otherwise, the underlying SDEs may be too volatile, losing the property required for the empirical loss functions used in the training to concentrate around its population counterpart.

Under Assumption \ref{assum: sde coef}, we present the property of ZOD estimators under our context.
\begin{proposition}
\label{prop:zod one-point}
Suppose Assumption \ref{assum: sde coef} holds. Assume also that the coefficients $ b,\sigma, g,f,V_n,G_n,H_n$ are smooth enough so that $ \hat V_{n+1}\in C_p^{0,3}([0,T]\times\R^d)$ and  $\mathbf{U}_n,f,g$ all have polynomial growth in $x$, uniformly in $t\in[0,T]$. Then $\frac{Z}{\epsilon}R^{t,X_t+\epsilon Z}(\mathbf{U}_n)$ is a biased estimator for $\nabla \hat V_{n+1}(t,X_t)$ conditioning on $ X_t$, and there is a constant $ C>0$ independent of $ \epsilon$ such that its bias and variance are upper bounded by
\[ \E\left[\int_0^T\left| \E\left[ \frac{Z}{\epsilon} R^{t,X_t+\epsilon Z}(\mathbf{U}_n)\Big| X_t  \right] -\nabla \hat V_{n+1}(t,X_t) \right|^2\dd t\right] \leq C\epsilon^4,\ \left|\operatorname{Var}\left[\frac{Z}{\epsilon} R^{t,X_t+\epsilon Z}(\mathbf{U}_n) \right]\right| \leq C\frac{1}{\epsilon^2} ,  \]
Moreover, if $\hat V_{n+1} \in C_p^{0,4}([0,T]\times\R^d)$, then $ \frac{ZZ^{\top}- I_d}{\epsilon^2}R^{t,X_t+\epsilon Z}(\mathbf{U}_n)$ is a biased estimator for $ \nabla^2 \hat V_{n+1}(t,X_t)$ conditioning on $ X_t$, and there is a constant $ C>0$ independent of $ \epsilon$ such that its bias and variance are upper bounded by
\begin{equation}
\begin{aligned}
\E\left[\int_0^T\left|\E\left[\frac{ZZ^{\top}- I_d}{\epsilon^2}R^{t,X_t+\epsilon Z}(\mathbf{U}_n) \Big |X_t\right]- \nabla^2 \hat V_{n+1}(t,X_t)\right|^2 \dd t \right] &\le C\epsilon^4, \\
    \left|\operatorname{Var}\left[ \frac{ZZ^{\top}- I_d}{\epsilon^2}R^{t,X_t+\epsilon Z}(\mathbf{U}_n)\right]\right| &\le C\frac{1}{\epsilon^4}.
    \end{aligned}
\end{equation}
\end{proposition}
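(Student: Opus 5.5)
The plan is to reduce the claim to Lemma \ref{lemma:zod classical} by conditioning, and then use moment estimates for the SDE to integrate the pointwise constants over the law of $X_t$.

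\emph{Conditional mean.} Here $Z$ is independent of $X_t$, and $R^{t,y}(\mathbf{U}_n)$ uses a fresh trajectory (the simulator's Brownian motion on $[t,T]$ is independent of $(X_t,Z)$). Given $(X_t,Z)$, $R^{t,X_t+\epsilon Z}(\mathbf{U}_n)$ has mean $\hat V_{n+1}(t,X_t+\epsilon Z)$ by \eqref{eq: value function}. The tower property then gives
\[
\E\Big[\tfrac{Z}{\epsilon}R^{t,X_t+\epsilon Z}(\mathbf{U}_n)\,\Big|\,X_t\Big]=\E\Big[\tfrac{Z}{\epsilon}\hat V_{n+1}(t,X_t+\epsilon Z)\,\Big|\,X_t\Big].
\]
The Hessian estimator is handled the same way with $(ZZ^\top-I_d)/\epsilon^2$. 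This reduces the bias to that of the classical estimator applied to $u=\hat V_{n+1}$ at the point $x=X_t$.

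\emph{Bias.} I would redo the proof of Lemma \ref{lemma:zod classical} so that the $x$-dependence of its constant is explicit. Taylor-expand $\hat V_{n+1}(t,x+\epsilon Z)$ to third order with integral remainder, and use Gaussian moments: odd moments vanish, $\E[ZZ^\top]=I$, and $\E[Z_iZ_jZ_k]=0$. The gradient bias is then
\[
\epsilon^{-1}\,\E\big[Z\cdot R_3(x,\epsilon Z)\big],
\]
with $|R_3|\le \tfrac{\epsilon^3}{6}|Z|^3\sup_{|\theta|\le1}|\nabla^3\hat V_{n+1}(t,x+\theta\epsilon Z)|$. Polynomial growth of $\nabla^3\hat V_{n+1}$, uniform in $t$, bounds this by $C\epsilon^2(1+|x|^m)$ for $\epsilon\le1$.

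There is a subtlety here: the natural bias term is of order $\epsilon^2\nabla^3$, which requires an odd-order cancellation. I would check that the even-order term in $\E[Z\,\cdot\,\text{(second order)}]$ vanishes; it does, since third Gaussian moments are zero. For the Hessian, I would expand to fourth order using $\hat V_{n+1}\in C^{0,4}_p$. The constant term is killed because $\E[ZZ^\top-I]=0$. The linear term is killed by odd moments. The quadratic term reproduces $\nabla^2\hat V_{n+1}$ via $\E[(ZZ^\top-I)Z_kZ_l]=\delta_{ik}\delta_{jl}+\delta_{il}\delta_{jk}$. The cubic term vanishes by odd moments. The quartic remainder is $O(\epsilon^2(1+|x|^m))$.

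Squaring and integrating gives a bound of the form $C\epsilon^4\,\E\int_0^T(1+|X_t|^{2m})\,\dd t$. This is finite by the standard moment estimate $\sup_t\E|X_t|^{p}<\infty$, which follows from Assumption \ref{assum: sde coef} (linear growth of $b$, bounded $\sigma$) and a polynomially integrable initial law.

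\emph{Variance.} I would bound the variance by the second moment:
\[
\E\Big[\tfrac{|Z|^2}{\epsilon^2}\big|R^{t,X_t+\epsilon Z}(\mathbf{U}_n)\big|^2\Big].
\]
Conditionally on $(X_t,Z)=(x,z)$, Jensen and the polynomial growth of $g,f,\mathbf{U}_n$ give
\[
\E|R^{t,y}|^2\le C\Big(1+\E\sup_{s\in[t,T]}|X^{t,y}_s|^{2m}\Big)\le C(1+|y|^{2m}),
\]
using the SDE moment bound. Substituting $y=x+\epsilon z$ and taking expectations over the Gaussian and $X_t$ moments yields $C\epsilon^{-2}$. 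The Hessian case is identical with $|ZZ^\top-I|^2/\epsilon^4$, which gives $C\epsilon^{-4}$.

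\emph{Main obstacle.} The expected difficulty is bookkeeping rather than a new idea. The constants from Lemma \ref{lemma:zod classical} must be made explicit and of polynomial growth in $x$, uniformly in $t$ and in $\epsilon\le 1$, so they can be integrated against the law of $X_t$. The independence structure used in the conditioning step must also be justified, namely a fresh simulator query at the perturbed point.
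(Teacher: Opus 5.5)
Your proposal is correct and follows the argument the paper relies on. The paper gives no separate written proof; it treats the proposition as Lemma \ref{lemma:zod classical} applied to $u=\hat V_{n+1}$ after conditioning on $(X_t,Z)$, and it invokes that same argument for the weak-simulator case of Proposition \ref{prop: zod variance reduction}. The variance is controlled through second moments of $R$ and SDE moment bounds, as in your plan. You also track explicitly how the Taylor-remainder constants grow polynomially in $x$, uniformly in $t$ and in $\epsilon\le 1$, which is what is needed to integrate the pointwise bounds of Lemma \ref{lemma:zod classical} against the law of $X_t$ and is more careful than the paper.
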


Proposition \ref{prop:zod one-point} describes the one-point ZOD estimators for the derivatives of the solution to a linear PDE and confirms that their biases are small and controlled by $\epsilon^2$. A significant limitation is that the variances of the estimators are generally  unbounded, and in fact have the orders of $\epsilon^{-2}$ and $\epsilon^{-4}$ for the gradient and Hessian respectively, similarly to the classical ZOD in Lemma \ref{lemma:zod classical}. We refer to them as ``ZOD-1" in our subsequent numerical experiment, and will demonstrate that they usually produce worse results in learning derivatives.

\paragraph{Variance Reduction with Strong Simulators}
Theoretically, it is natural to investigate the symmetric versions (analogous to Lemma \ref{lemma: zod multi-point gradient} and Lemma \ref{lemma: zod multi-point hessian}) of ZOD estimators for the gradient and Hessian of the solution to this linear PDE and inquire whether  this variance reduction technique is helpful in the current PDE setting as well. It turns out that the answer depends on which type of simulators is accessible to us, i.e., when simulating multiple  trajectories starting from {\it different} initial states $x$, $x+\epsilon Z$, and $x-\epsilon Z$, whether we can control the noises in the environment in a way that the corresponding rewards $R^{t,x},R^{t,x+\epsilon Z},R^{t,x-\epsilon Z}$ are highly correlated and therefore cancel the noises off to great extent.\footnote{This idea resembles the classical simulation techniques of \textit{control variates} and \textit{antithetic variates} (cf. \citealt[Chapter 4]{glasserman2004monte}) to use correlated samples for variance  reduction. } With a weak simulator, we need to query the simulator {\it separately}, one at a time,  to generate the trajectories $X^{t,x},X^{t,x+\epsilon Z},X^{t,x-\epsilon Z}$. Because the Brownian motions in the underlying stochastic processes are independent, these trajectories, as well as the associated rewards $R^{t,x},R^{t,x+\epsilon Z},R^{t,x-\epsilon Z}$, are conditionally independent (conditioned on the realized $Z$), and hence we cannot expect the benefit of variance reduction. However, with a strong simulator, we are presented with trajectories of $X^{t,x},X^{t,x+\epsilon Z},X^{t,x-\epsilon Z}$ driven by the {\it same} Brownian path upon one single query , and $R^{t,x},R^{t,x+\epsilon Z},R^{t,x-\epsilon Z}$ are expected to be highly correlated, leading to additional noise cancellation in $R^{t,x+\epsilon Z}-R^{t,x-\epsilon Z}$ and $R^{t,x+\epsilon Z}+R^{t,x-\epsilon Z}-2R^{t,x}$. The next Proposition \ref{prop: zod variance reduction} formalizes and indeed validates this idea.

\begin{proposition}\label{prop: zod variance reduction}
Suppose Assumption \ref{assum: sde coef} holds, and $\mathbf{U}_n,f,g$ all have polynomial growth in $x$, uniformly in $t\in[0,T]$. %For approximating $ \nabla \hat V_{n+1}(t,X_t)$ and $\nabla^2 \hat V_{n+1}(t,X_t) $, the multi-point ZOD estimators in our context are
Construct
    \begin{equation}
    \begin{aligned}
        \hat g &:= \frac{Z}{2\epsilon} \left(R^{t,X_t + \epsilon Z}(\mathbf{U}_n) - R^{t,X_t - \epsilon Z}(\mathbf{U}_n)\right),\\
        \hat h &:= \frac{ZZ^{\top}-I_d}{2\epsilon^2}\left(R^{t,X_t + \epsilon Z}(\mathbf{U}_n) + R^{t,X_t - \epsilon Z}(\mathbf{U}_n)  - 2R^{t,X_t}(\mathbf{U}_n)\right).
        \end{aligned}
    \end{equation}
\begin{enumerate}[(a)]
\item With a weak simulator as specified in Condition \ref{cond:weak simulator}, if the coefficients $ b,\sigma, g,f,V_n,G_n,H_n$ are smooth enough so that $ \hat V_{n+1}\in C_p^{0,3}([0,T]\times\R^d)$, then $ \hat g$ is a biased estimator for $ \nabla \hat V_{n+1}(t,X_t)$ conditioning on $ X_t$, and its bias and variance are upper bounded by
\begin{equation}
    \E\left[ \int_0^T \left| \E\left[ \hat g\mid X_t\right] - \nabla \hat V_{n+1}(t,X_t)\right|^2\dd t\right] \le C\epsilon^4, \ \left|\operatorname{Var}[\hat g]\right| \le C\frac{1}{\epsilon^2}.
\end{equation}
Furthermore, if $\hat V_{n+1} \in C_p^{0,4}([0,T]\times\R^d)$, then $ \hat h$ is a biased estimator for $ \nabla^2 \hat V_{n+1}(t,X_t)$ conditioning on $ X_t$, and its bias and variance are upper bounded by
\begin{equation}
\begin{aligned}
    \E\left[\int_0^T\left|\E\left[\hat h \mid X_t\right]- \nabla^2 \hat V_{n+1}(t,X_t)\right|^2 \dd t \right] \le C\epsilon^4, \
    \left|\operatorname{Var}[ \hat h]\right| \le C\frac{1}{\epsilon^4}.
    \end{aligned}
\end{equation}
\item With a strong simulator as specified in Condition \ref{cond:strong simulator}, if $ b,\sigma \in C_b^{0,4}([0,T]\times\R^d)$, $\tilde f \in C_p^{0,3}([0,T]\times \R^d)$ where $\tilde f(t,x) := f(t,x,\mathbf{U}_n(t,x))$ and $g\in C_p^3(\R^d)$,  then $ \hat g$ has the same order of biases as the one-point estimator in Proposition \ref{prop:zod one-point}
    but bounded variances as $ \epsilon \rightarrow 0$.
    Furthermore, if $ b,\sigma \in C_b^{0,5}([0,T]\times\R^d)$, $ \tilde f \in C_p^{0,4}([0,T]\times\R^d)$ and $g\in C_p^4(\R^d)$,  then $ \hat h$ has the same order of biases as  the one-point estimator in Proposition \ref{prop:zod one-point}     but bounded variances as $ \epsilon \rightarrow 0$.
\end{enumerate}

\end{proposition}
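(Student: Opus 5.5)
The argument has three pieces. A conditioning argument reduces every bias statement to the deterministic ZOD lemmas. A conditional-independence decomposition handles the weak simulator. A pathwise differentiability argument, based on the first and second variation processes of \eqref{eq:sde}, handles the strong simulator.

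\textbf{Biases (both simulators).} Under either simulator, conditionally on $(X_t,Z)$ each perturbed reward $R^{t,X_t\pm\epsilon Z}(\mathbf{U}_n)$ is an unbiased estimate of $\hat V_{n+1}(t,X_t\pm\epsilon Z)$. This holds because its marginal law is that of the (unique-in-law) solution started at that point, and joint coupling does not affect means.

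By the tower property, $\E[\hat g\mid X_t]$ and $\E[\hat h\mid X_t]$ therefore coincide with the deterministic symmetric estimators of Lemmas \ref{lemma: zod multi-point gradient} and \ref{lemma: zod multi-point hessian}, applied to $u=\hat V_{n+1}$ at $x=X_t$. Those lemmas give the pointwise bound $C(X_t)\epsilon^2$. Tracking their proofs, $C(x)$ is a polynomial in $|x|$, coming from the polynomial growth of the third or fourth derivatives of $\hat V_{n+1}$ along the segment $x+\theta\epsilon Z$ together with Gaussian moments of $Z$.

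Squaring, integrating over $t\in[0,T]$, and using the standard moment bound $\sup_t\E|X_t|^p<\infty$ (valid under Assumption \ref{assum: sde coef}) gives the stated $C\epsilon^4$ bounds. In case (b), the stronger hypotheses imply via the Feynman--Kac representation that $\hat V_{n+1}\in C^{0,3}_p$ (respectively $C^{0,4}_p$), so the same argument applies.

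\textbf{Variances under a weak simulator.} The three rewards are conditionally independent given $(X_t,Z)$. Hence $\E[|\hat g|^2]\le \frac{1}{4\epsilon^2}\E\big[|Z|^2\,\E[(R^{+}-R^{-})^2\mid X_t,Z]\big]$. The conditional second moment is at least the sum of the two conditional variances, which are of order one and do not vanish, and it is bounded above by $C(1+|X_t|^p+|Z|^p)$ using the polynomial growth of $f,g,\mathbf{U}_n$ and SDE moments. This yields the upper bound $C\epsilon^{-2}$. For $\hat h$ one uses $|ZZ^\top-I|^2$ and the prefactor $\epsilon^{-4}$ in the same way, giving $C\epsilon^{-4}$. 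No cancellation is available in this case.

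\textbf{Bounded variances under a strong simulator.} Now all trajectories are driven by the same $W$, so $x\mapsto X^{t,x}_s$ is a stochastic flow. For $b,\sigma\in C^{0,4}_b$ (respectively $C^{0,5}_b$), this flow is pathwise $C^3$ (respectively $C^4$) in $x$, by Kunita's theory. Its derivative processes $\partial_x X$ and $\partial_x^2X$, and the higher ones, solve linear SDEs with bounded coefficients and so have moments of all orders that are bounded uniformly in $x$. Consequently $\Phi(x):=g(X^{t,x}_T)+\int_t^T\tilde f(s,X^{t,x}_s)\,\dd s$ is a.s.\ $C^3$ (respectively $C^4$) in $x$.

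A Taylor expansion with integral remainder then gives
\[
\Phi(x+\epsilon Z)-\Phi(x-\epsilon Z)=2\epsilon\,\nabla\Phi(x)\cdot Z+O\big(\epsilon^3|Z|^3\,\textstyle\sup_{\theta}|\nabla^3\Phi(x+\theta\epsilon Z)|\big),
\]
and the symmetric second difference equals $\epsilon^2 Z^\top\nabla^2\Phi(x)Z$ plus an $\epsilon^4$ remainder. The prefactors $\epsilon^{-1}$ and $\epsilon^{-2}$ cancel exactly, so
\[
\E|\hat g|^2\le C\,\E\big[|Z|^4|\nabla\Phi|^2\big]+C\epsilon^4(\dots),
\]
and similarly for $\hat h$. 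It remains to show that these expectations are finite uniformly in $\epsilon\le1$.

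This uniform moment bound is the main obstacle. One needs $\E\sup_{|y-x|\le\epsilon|Z|}|\nabla^k\Phi(y)|^2\le C(1+|x|^p+|Z|^p)$. I would obtain it by the chain rule: $\nabla^k\Phi$ is a polynomial combination of derivatives of $g,\tilde f$ (polynomial growth) evaluated along the flow, times products of the variation processes (uniform $L^p$ bounds). Hölder's inequality then finishes the pointwise bound. The supremum over the segment would be handled by writing the remainder as $\int_0^1$ of the derivative at $x+\theta\epsilon Z$ and using Fubini, so that only pointwise moment bounds at each $\theta$ are needed. Integrating against the law of $(X_t,Z)$ gives the bounded variance.
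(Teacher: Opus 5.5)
Your proposal is correct and follows essentially the paper's proof. For the strong simulator the paper likewise uses differentiability of the stochastic flow with uniform $L^p$ bounds on its derivatives (citing Protter and Kunita), a pathwise Taylor expansion of $R^{t,x}$ in $x$ so that the $\epsilon^{-1}$ and $\epsilon^{-2}$ prefactors cancel, and polynomial-growth moment bounds on $\nabla^k R$ combined with the moments of $X_t$; it refers the bias and weak-simulator parts back to Proposition~\ref{prop:zod one-point}, and your conditioning reduction to Lemmas~\ref{lemma: zod multi-point gradient} and~\ref{lemma: zod multi-point hessian} makes those parts explicit. Your integral-form remainder with Fubini is, if anything, slightly more careful than the paper's Lagrange remainder, where the random intermediate points $\theta_1,\theta_2$ are treated as if pointwise moment bounds on $\nabla^4 R$ sufficed.
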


Proposition \ref{prop: zod variance reduction} reveals a unique bias--variance tradeoff associated with the multi-point ZOD estimators in our particular context and the critical difference between the weak and strong simulators. It turns out that the conventional bounded variance in Lemmas \ref{lemma: zod multi-point gradient} and \ref{lemma: zod multi-point hessian} no longer holds for the weak simulator and only holds for the strong simulator. In particular, a multi-point ZOD estimator does not reduce variance with a weak simulator, and its variance still has the same order as the one-point ZOD in Proposition \ref{prop:zod one-point}. Hence, in general, there is no benefit to apply multi-point ZOD when only a weak simulator is accessible. We refer to the multi-point ZOD estimator with a strong simulator as ``ZOD-m" in our subsequent numerical experiments, and will demonstrate that it produces much more stable results in learning derivatives while requiring fewer samples.

\section{The Zeroth-Order Deep Learning Algorithm}

Let $\mathbf{U} = (V, G, H)$ denote the triplet of functions. We define the approximating neural networks as:
\[
\varphi^{\theta} := (\varphi^{\theta}_v, \varphi^{\theta}_g, \varphi^{\theta}_h),
\]
where $\varphi^{\theta}_v: [0,T] \times \mathbb{R}^d \to \mathbb{R}$, $\varphi^{\theta}_g: [0,T] \times \mathbb{R}^d \to \mathbb{R}^d$, and $\varphi^{\theta}_h: [0,T] \times \mathbb{R}^d \to \mathbb{R}^{d \times d}$ approximate the value, gradient, and Hessian, respectively. The parameters are denoted by $\theta \in \Theta \subset \mathbb{R}^p$.

We define the reward function starting from state $(t,x)$ given a function triplet $\mathbf{U}$ as:
\begin{equation}
    R^{t,x}(\mathbf{U}) := g(X_T^{t,x}) + \int_t^T f\left( X^{t,x}_s, V(s, X^{t,x}_s), G(s, X^{t,x}_s), H(s, X_s^{t,x})\right) \dd s.
\end{equation}
Crucially, the source term $f$ is evaluated using the explicit outputs of $G$ and $H$, rather than derivatives computed via auto-diff.

We implement the value iteration \eqref{eq:iter_fk} as follows: At iteration $n$, let $\mathbf{U}_n = (V_n, G_n, H_n)$ be the trained networks from the previous step. We aim to learn $\mathbf{U}_{n+1}$ such that:
\begin{enumerate}
    \item $V_{n+1}(t,x) \approx \hat{V}_{n+1}(t,x) := \E[R^{t,x}(\mathbf{U}_n)]$.
    \item $G_{n+1}(t,x) \approx \nabla \hat{V}_{n+1}(t,x)$.
    \item $H_{n+1}(t,x) \approx \nabla^2 \hat{V}_{n+1}(t,x)$.
\end{enumerate}

\subsection{Loss Function Design}

To train the candidate networks $\varphi = (\varphi_v,\varphi_g,\varphi_h)$ at iteration $n$, we minimize a composite loss function that enforces consistency between the networks and the stochastic targets derived from $\mathbf{U}_n$:
\begin{equation}\label{eq:total_loss}
    L(\varphi, \mathbf{U}_n) := L_v(\varphi_v, \mathbf{U}_n) +  L_g(\varphi_g, \mathbf{U}_n) +   L_h(\varphi_h, \mathbf{U}_n),
\end{equation}
where each individual loss will be defined below, with $\beta>0$ some fixed constant.

\subsubsection{Value Loss}
The value network $\varphi^{\theta}_v$ is trained to match the conditional expectation of the reward. The loss is defined as the mean squared error (MSE) against the stochastic realization:
\begin{equation}
    L_v(\varphi_v, \mathbf{U}_n) := \beta \E\left[ \int_0^T e^{\beta s}\left| \varphi_v(s, X_s) - R^{s,X_s}(\mathbf{U}_n) \right|^2 \dd s \right].
\end{equation}
Minimizing this loss implies $\varphi_v(t,x) \approx \E[R^{t,x}(\mathbf{U}_n)]$.

\subsubsection{Zeroth-Order Derivatives Losses}
Since $\hat{V}_{n+1}$ is defined as an expectation, its derivatives are not directly available. We employ the one-point ZOD estimators (Proposition \ref{prop:zod one-point}) to construct biased targets using perturbed trajectories.\footnote{Here we present the one-point ZOD estimators for illustration. Multi-point ZOD {\it \`a la} Proposition \ref{prop: zod variance reduction} can be discussed similarly.} Let $Z \sim \mathcal{N}(0, I_d)$ be a perturbation vector independent of the trajectory $X$.

\textbf{Gradient Loss.} We rely on the relationship $\nabla \E[R^{t,x}] \approx \E[\frac{Z}{\epsilon} R^{t, x+\epsilon Z}]$. The gradient network $\varphi^{\theta}_g$ minimizes:
\begin{equation}
    L_g(\varphi^{\theta}_g, \mathbf{U}_n) :=\beta\E\left[\int_0^T e^{\beta t}\left| \varphi^{\theta}_g(t, X_t) - \frac{Z}{\epsilon} R^{t,X_t+\epsilon Z}(\mathbf{U}_n) \right|^2 \dd t\right].
\end{equation}

\textbf{Hessian Loss.} Similarly, the Hessian network $\varphi^{\theta}_h$ minimizes the error against the second-order ZOD estimator, where the norm stands for the Frobenius norm for matrices:
\begin{equation}
    L_h(\varphi^{\theta}_h, \mathbf{U}_n) :=\E\left[\int_0^T e^{\beta t}\left| \varphi^{\theta}_h(t, X_t) - \frac{ZZ^{\top} - I_d}{\epsilon^2} R^{t,X_t+\epsilon Z}(\mathbf{U}_n) \right|^2 \dd t\right].
\end{equation}
\subsection{Discretization and Implementation}

In the numerical implementation, expectations are estimated via Monte Carlo sampling. We define a uniform time grid $\mathcal{G} := \{0=t_0 < \dots < t_N = T\}$ with steps $\Delta t_j := t_{j+1} - t_j = \frac{T}{N}$. Let $\{X^{(b)}\}_{b=1}^B$ denote a batch of base trajectories starting from $(0, x_0)$.

For the value loss, the empirical reward along a base trajectory is computed using the frozen networks $\mathbf{U}_n$:
\begin{equation}\label{eq: reward rv}
    \tilde R^{t_j,X^{(b)}_{t_j}}(\mathbf{U}_n) := g(X_T^{(b)}) + \sum_{i = j}^{N-1} f\left(X_{t_i}^{(b)}, V_{n}(t_i,X_{t_i}^{(b)}), G_n(t_i,X_{t_i}^{(b)}), H_n(t_i,X_{t_i}^{(b)})\right) \Delta t_i.
\end{equation}

To compute the derivative losses, for each sample $b$ and time step $j$, we generate a random perturbation $Z^{(b)} \sim \mathcal{N}(0, I_d)$ and simulate (observe) one perturbed trajectory $\{X^{t_j, X_{t_j}^{(b)}+ \epsilon Z^{(b)}}\}_{s \ge t_j}$ starting from $X_{t_j}^{(b)} + \epsilon Z^{(b)}$. The reward along this perturbed path is:
\begin{equation}\label{eq: reward der rv}
\begin{aligned}
    &\tilde R^{t_j, X_{t_j}^{(b)}+\epsilon Z^{(b)}}(\mathbf{U}_n) := g(X_T^{t_j,X_{t_j}^{(b)}+\epsilon Z^{(b)}}) \\
    +& \sum_{i = j}^{N-1} f\left(X_{t_i}^{t_j,X_{t_j}^{(b)}+\epsilon Z^{(b)}}, V_{n}(t_i,X_{t_i}^{t_j,X_{t_j}^{(b)}+\epsilon Z^{(b)}}), G_n(t_i,X_{t_i}^{t_j,X_{t_j}^{(b)}+\epsilon Z^{(b)}}), H_n(t_i,X_{t_i}^{t_j,X_{t_j}^{(b)}+\epsilon Z^{(b)}})\right) \Delta t_i.
    \end{aligned}
\end{equation}

The tractable empirical loss functions are:
\begin{equation}\label{eq: empirical loss}
\begin{aligned}
    \tilde L_v^{(B)}(\varphi_v,\mathbf{U}_n) &:= \beta\frac{1}{B}\sum_{b=1}^B\sum_{j=0}^{N-1} e^{\beta t_j}\left( \varphi_v(t_j,X_{t_j}^{(b)}) - \tilde R^{t_j,X_{t_j}^{(b)}}(\mathbf{U}_n) \right)^2 \Delta t_j, \\
    \tilde L_{g}^{(B)}(\varphi_g,\mathbf{U}_n) &:= \beta\frac{1}{B}\sum_{b = 1}^{B}\sum_{j = 0}^{N-1} e^{\beta t_j}\left| \varphi_g(t_j,X_{t_j}^{(b)}) - \frac{Z^{(b)}}{\epsilon} \tilde R^{t_j, X_{t_j}^{(b)}+\epsilon Z^{(b)}}(\mathbf{U}_n) \right|^2 \Delta t_j, \\
    \tilde L_{h}^{(B)}(\varphi_{h},\mathbf{U}_n) &:= \frac{1}{B}\sum_{b = 1}^{B}\sum_{j = 0}^{N-1} e^{\beta t_j}\left| \varphi_h(t_j,X_{t_j}^{(b)}) - \frac{Z^{(b)}(Z^{(b)})^{\top} - I_d}{\epsilon^2} \tilde R^{t_j, X_{t_j}^{(b)}+\epsilon Z^{(b)}}(\mathbf{U}_n) \right|^2 \Delta t_j.
\end{aligned}
\end{equation}

The empirical total loss is defined by
\begin{equation}\label{eq: empirical total loss}
    \tilde L^{(B)}(\varphi,\mathbf{U}_n):=  \tilde L_v^{(B)}(\varphi_v,\mathbf{U}_n) + \tilde L^{(B)}_{g}(\varphi_g,\mathbf{U}_n) + \tilde L^{(B)}_{h}(\varphi_{h},\mathbf{U}_n).
\end{equation}
The complete procedure is summarized in Algorithm \ref{alg:zo}.

\begin{algorithm}
\caption{Zeroth-Order PDE Solver via Independent Networks}\label{alg:zo}
\begin{algorithmic}[1] % Added [1] for line numbering usually
\State \textbf{Input:} Function $f$, terminal condition $g$, time horizon $T$, initial state $x_0$, time grid $ \mathcal{G} = \{0=t_0<\dots<t_N= T\}$ with $ \Delta t_j = t_{j+1} - t_j$, batch size $ B$, outer iterations $M$, gradient steps per iteration $K$, learning rate $\eta$, perturbation size $\epsilon$.
\State \textbf{Required Program:} A simulator generating sample trajectories whose infinitesimal generator is $\mathcal L$.
\State \textbf{Initialize:} Independent neural networks $\mathbf{U}_0 = (V_0, G_0, H_0)$, parameter $\theta_0$.
\For{$n = 0$ to $M-1$}
    \State Initialize candidate networks $\varphi^{\theta} = (\varphi^{\theta}_v, \varphi^{\theta}_g, \varphi^{\theta}_h)$ with parameters $\theta = \theta_n$
    \For{$k = 1$ to $K$}
        \State Observe $ B$ trajectories $ \{X^{(b)}_{s}\}_{s\in[0,T],b = 1,\dots,B}$ starting from $ (0,x_0)$.
        \State For $ 0\le j \le N$, $ 1\le b \le B$, evaluate $G_{n}(t_j,X_{t_j}^{(b)})$ and $H_{n}(t_j,X_{t_j}^{(b)})$ using frozen networks.
        \State Compute empirical reward random variable $ \tilde R^{t_j,X^{(b)}_{t_j}}(\mathbf{U}_n)$ by \eqref{eq: reward rv}.
        \State Compute $\tilde L^{(B)}_v :=  \beta\frac{1}{B}\sum_{b=1}^B\sum_{j=0}^{N-1}e^{\beta t_j}\left( \tilde R^{t_j,X_{t_j}^{(b)}}(\mathbf{U}_n) - \varphi^{\theta}_v(t_j,X_{t_j}^{(b)})\right)^2\Delta t_j.$
        \State Sample $\{Z^{(b)}\}_{b=1}^B \sim \mathcal{N}(0,I_d)$, for each $ 0\le j \le N$, $ 1\le b \le B$, observe one trajectory $ \left\{ X_{t}^{t_j,X_{t_j}^{(b)}+\epsilon Z^{(b)}}\right\}_{t_j\le t\le T}$ starting at $ (t_j,X_{t_j}^{(b)}+\epsilon Z^{(b)})$.
        \State For $ 0\le j\le N$, $ 1\le b\le B$, $ j\le i\le N$, evaluate $G_{n}(t_i,X_{t_i}^{t_j,X_{t_j}^{(b)}+\epsilon Z^{(b)}})$ and $H_{n}(t_i,X_{t_i}^{t_j,X_{t_j}^{(b)}+\epsilon Z^{(b)}})$ on the perturbed paths.
        \State Compute perturbed reward random variable $ \tilde R^{t_j,X_{t_j}^{(b)}+\epsilon Z^{(b)}}(\mathbf{U}_n)$ by \eqref{eq: reward der rv}.
        \State Compute the derivatives loss by zeroth-order estimators:
         \begin{align*}
        \tilde L^{(B)}_g &:= \beta\frac{1}{B}\sum_{b = 1}^{B}\sum_{j = 0}^{N-1} e^{\beta t_j}\left( \frac{Z^{(b)}}{\epsilon}\tilde R^{t_j,X_{t_j}^{(b)}+\epsilon Z^{(b)}}(\mathbf{U}_n) - \varphi^{\theta}_g(t_j,X_{t_j}^{(b)})\right)^2 \Delta t_j,\\
        \tilde L^{(B)}_h &:= \frac{1}{B}\sum_{b = 1}^{B}\sum_{j = 0}^{N-1} e^{\beta t_j}\left( \frac{Z^{(b)}(Z^{(b)})^{\top} - I_d}{\epsilon^2}\tilde R^{t_j,X_{t_j}^{(b)}+\epsilon Z^{(b)}}(\mathbf{U}_n) - \varphi^{\theta}_h(t_j,X_{t_j}^{(b)})\right)^2 \Delta t_j.
        \end{align*}
        \State Total Loss: $ \tilde L^{(B)}: = \tilde L^{(B)}_v+ \tilde L^{(B)}_g+ \tilde L^{(B)}_h$
        \State Update $ \theta \gets \theta - \eta D_{\theta}\tilde L^{(B)}$.
    \EndFor
    \State Set $\theta_{n+1} \gets \theta$ and $\mathbf{U}_{n+1} \gets \varphi^{\theta_{n+1}}$
\EndFor
\State \textbf{Output:} Trained networks $\mathbf{U}_{M} = (V_{M}, G_{M}, H_{M})$
\end{algorithmic}
\end{algorithm}

\section{Convergence Analysis}
\label{sec:convergence analysis}
In this section, we establish the convergence properties of the proposed zeroth-order based learning algorithm. Let $p(t,\cdot)$ denote the probability density function of $X_t$. We define the following weighted norms:
\begin{equation}
    \|v\|_{L_2(p)}^2 := \int_{\R^d} |v(t,x)|^2p(t,x)\dd x, \quad\|v\|_{\beta}^2 := \int_0^T e^{\beta t}\int_{\R^d} |v(t,x)|^2p(t,x)\dd x\dd t,
\end{equation}
as well as the mixed derivative norm:
\begin{equation}
    \|v\|^2_{\beta, \text{mix}}: = \beta \|v\|_{\beta}^2 + \beta\|\nabla v\|_{\beta}^2 + \|\nabla^2 v\|_{\beta}^2.
\end{equation}
For a function triplet $\mathbf{U} := (V, G, H)$, we abuse the notation slightly and define its corresponding norm as:
\begin{equation}
    \|\mathbf{U}\|_{\beta, \text{mix}}^2 : = \beta \|V\|_{\beta}^2 + \beta\|G\|_{\beta}^2 + \|H\|_{\beta}^2.
\end{equation}
We formally define the solution mapping $\mathcal{S}$ that maps a coefficient triplet to a solution triplet. Let $\mathbf{U} = (V, G, H)$ be a given triplet in an appropriate Hilbert space. Let $u$ be the unique classical solution to the linear parabolic PDE:
\begin{equation} \label{eq: linearized_pde_step}
    \begin{cases}
        \partial_t u + \mathcal{L}u + f\big(x, V(t,x), G(t,x), H(t,x)\big) = 0, & (t,x) \in [0,T) \times \mathbb{R}^d, \\
        u(T,x) = g(x), & x \in \mathbb{R}^d.
    \end{cases}
\end{equation}
The solution mapping $\mathcal{S}$ is then defined as the operator that returns the solution and its spatial derivatives:
\begin{equation}
    \mathcal{S}(\mathbf{U}) := (u, \nabla u, \nabla^2 u).
\end{equation}
Under this definition, the fixed-point problem $\mathbf{u}^* = \mathcal{S}(\mathbf{u}^*)$ is equivalent to the original nonlinear PDE \eqref{eq:pde} with $ v^*$ being the solution of PDE and $ \mathbf{u}^* = (v^*, \nabla v^*, \nabla^2 v^*)$.

To proceed with the analysis, we introduce the following contraction assumption on $\mathcal{S}$.
\begin{assumption}[Contraction]\label{assum: contraction}
    There exist constants $\beta > 0$ and $\gamma < 1$, such that for any two triplets $\mathbf{U}_1 = (V_1, G_1, H_1)$ and $\mathbf{U}_2 = (V_2, G_2, H_2)$, the following inequality holds
    \begin{equation}
        \|\mathcal{S}(\mathbf{U}_1) - \mathcal{S}(\mathbf{U}_2)\|_{\beta, \text{mix}} \le \gamma \|\mathbf{U}_1 - \mathbf{U}_2\|_{\beta, \text{mix}}.
    \end{equation}
\end{assumption}
%The primary contribution of this work is the development of a data-driven learning algorithm that integrates value iteration with the ZOD estimators. A key advantage of this approach is that it circumvents the need to differentiate through the SDE dynamics (via Malliavin calculus; see, e.g. \citealt{han26}), making it applicable to black-box simulators.

In our context, Assumption \ref{assum: contraction} represents the fundamental condition required for the well-posedness and convergence of the underlying value iteration scheme itself, independent of the numerical approximation errors.  While establishing the necessary and sufficient conditions for general nonlinear PDEs to satisfy this contraction property is beyond the scope of this paper, we will provide a rigorous justification for its validity in an important and relevant setting. Specifically, in Appendix \ref{appendix: sufficient condition for contraction} we will study the case of Langevin dynamics with a stationary distribution and  present a sufficient condition for Assumption \ref{assum: contraction}. % in Appendix \ref{appendix: sufficient condition for contraction}. Furthermore, we explicitly verify that this assumption holds for the case of Langevin dynamics with a stationary distribution, thereby demonstrating the theoretical soundness of our framework.

\subsection{Approximate Value Iteration}

In this subsection, we analyze the propagation of learning errors through the iterative process and establish a global error bound for the proposed algorithm.

Let $\mathcal{H}$ denote the space of neural network triplets.
We consider the empirical risk minimization (ERM) problem (cf. \citealt{shalev2014understanding}) at iteration $n$. Let $\mathbf{U}_n$ be the empirical minimizer:
\begin{equation}\label{eq: ERM minimizer}
    \mathbf{U}_n \in \arg\min_{\varphi \in \mathcal{H}} \tilde L^{(B)}(\varphi, \mathbf{U}_{n-1}),
\end{equation}
where $\tilde L^{(B)}$ is the empirical loss defined in \eqref{eq: empirical total loss}.
Under the ERM framework, omitting the optimization error, at each iteration $n \ge 1$, the algorithm produces a neural network triplet $\mathbf{U}_n = (V_n, G_n, H_n)$, which is the ERM minimizer, intended to approximate the theoretical target $ \mathcal{S}(\mathbf{U}_{n-1})$.

We define the {\it one-step approximation error} at iteration $n$ as the deviation between the trained network and the exact image of the previous iterate under the solution map $\mathcal{S}$:
\begin{equation} \label{eq: one_step_error_def}
    E_n := \|\mathbf{U}_n - \mathcal{S}(\mathbf{U}_{n-1})\|_{\beta, \text{mix}}.
\end{equation}
The quantity $E_n$ aggregates two sources of error: the statistical error arising from the Monte Carlo estimation of the reward and its derivatives (via ZOD estimators), and the approximation error associated with the neural network hypothesis space (optimization and statistical error) to be defined momentarily.

The following theorem characterizes the convergence of the sequence $\{\mathbf{U}_n\}_{n\ge 0}$ to the ground truth $\mathbf{u}^*$.

\begin{theorem}
\label{thm: global_convergence}
Suppose that Assumption \ref{assum: contraction} holds with a contraction constant $\gamma \in [0, 1)$. Let $\{\mathbf{U}_n = (V_n,G_n,H_n)\}_{n \ge 0}$ be the sequence of function triplets, which are ERM minimizers defined by \eqref{eq: ERM minimizer}. Then, almost surely, for any $n \ge 1$, the total error is bounded by:
\begin{equation} \label{eq: global_bound}
    \|\mathbf{U}_n - \mathbf{u}^*\|_{\beta, \text{mix}} \le \gamma^n \|\mathbf{U}_0 - \mathbf{u}^*\|_{\beta, \text{mix}} + \sum_{k=1}^n \gamma^{n-k} E_k.
\end{equation}
\end{theorem}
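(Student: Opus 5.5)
This is the standard error-propagation argument for approximate fixed-point iteration. We use only the triangle inequality for $\|\cdot\|_{\beta,\text{mix}}$, the contraction in Assumption \ref{assum: contraction}, and the fixed-point identity $\mathbf{u}^* = \mathcal{S}(\mathbf{u}^*)$. The argument is deterministic: it holds for every realization of the training samples, hence almost surely. The only randomness is in which ERM minimizers $\mathbf{U}_n$ are produced, and both the contraction inequality and the definition \eqref{eq: one_step_error_def} of $E_n$ apply pathwise.

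\textbf{Step 1 (one-step recursion).} Set $e_n := \|\mathbf{U}_n - \mathbf{u}^*\|_{\beta,\text{mix}}$. For $n\ge 1$, insert $\mathcal{S}(\mathbf{U}_{n-1})$ and write
\[
\mathbf{U}_n - \mathbf{u}^* = \bigl(\mathbf{U}_n - \mathcal{S}(\mathbf{U}_{n-1})\bigr) + \bigl(\mathcal{S}(\mathbf{U}_{n-1}) - \mathcal{S}(\mathbf{u}^*)\bigr).
\]
By the triangle inequality (the mixed norm is a weighted $L_2$-type norm, so it is a genuine seminorm on triplets), the first term has norm $E_n$. By Assumption \ref{assum: contraction}, the second is at most $\gamma e_{n-1}$. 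Hence $e_n \le \gamma e_{n-1} + E_n$.

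\textbf{Step 2 (unrolling).} By induction on $n$ we get $e_n \le \gamma^n e_0 + \sum_{k=1}^n \gamma^{n-k}E_k$. The base case $n=1$ is Step 1 itself. For the inductive step, multiply the bound for $e_{n-1}$ by $\gamma\ge 0$ and add $E_n$; this gives exactly \eqref{eq: global_bound}.

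\textbf{Main obstacle.} The arithmetic is trivial. The only subtle point is making sure each object lies where the norm and the contraction make sense. $\mathcal{S}(\mathbf{U})$ is a triplet $(u,\nabla u,\nabla^2 u)$, and it must be compared with the network triplet in the abused-notation norm $\|\cdot\|_{\beta,\text{mix}}$ for triplets. We must also check that $\mathbf{U}_{n-1}$ and $\mathbf{u}^*$ lie in the (Hilbert) space on which Assumption \ref{assum: contraction} is posited, so the contraction can be applied with $\mathbf{U}_1=\mathbf{U}_{n-1}$ and $\mathbf{U}_2=\mathbf{u}^*$. Since the assumption is stated for arbitrary triplets, this reduces to noting that neural-network triplets and $\mathbf{u}^*$ have finite mixed norm. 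If some $E_k$ is infinite, the bound holds trivially.
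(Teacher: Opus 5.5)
Your proposal is correct and follows essentially the same route as the paper's proof. The paper likewise inserts $\mathcal{S}(\mathbf{U}_{n-1})$, uses the triangle inequality together with the fixed-point identity $\mathbf{u}^*=\mathcal{S}(\mathbf{u}^*)$ and Assumption \ref{assum: contraction} to get $e_n\le\gamma e_{n-1}+E_n$, and then unrolls this recursion to obtain \eqref{eq: global_bound}.
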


Theorem \ref{thm: global_convergence} presents the convergence result of the proposed method as an \textit{approximate value iteration}. The error bound consists of two distinct components:
\begin{enumerate}
    \item The {\it contraction term} $\gamma^n \|\mathbf{U}_0 - \mathbf{u}^*\|_{\beta, \text{mix}}$, which decays exponentially. This term captures  the inherent stability of the value iteration for the underlying PDE.
    \item The {\it accumulation term} $ \sum_{k=1}^n \gamma^{n-k} E_k$, which represents the accumulated error. The magnitude of $ E_k$ is determined by the expressivity of the neural networks (approximation error) and the variance of the ZOD estimators (statistical error).
\end{enumerate}
This result highlights a critical trade-off: while the contraction constant $\gamma$ is dictated and fixed by the PDE physics, the approximation accuracy $ E_k$ can be controlled by increasing the network capacity and the batch size $B$, and properly tuning the ZOD perturbation $\epsilon$. In Section 5.2, we will provide a detailed analysis of $E_k$, explicitly characterizing its dependence on the sample size $B$ and perturbation parameter $\epsilon$.

\subsection{One-Step Sample Complexity Analysis}

In this subsection, we provide an analysis of the one-step approximation error $E_{n}$. Recall from \eqref{eq: one_step_error_def} that $E_n = \|\mathbf{U}_n - \mathcal{S}(\mathbf{U}_{n-1})\|_{\beta, \text{mix}}$, where $\mathbf{U}_n$ is the trained network triplet (the ERM minimizer) and $\mathcal{S}(\mathbf{U}_{n-1})$ is the ground truth target defined by the solution mapping.

To characterize the learning target, we first introduce the \textit{ZOD solution mapping} $\mathcal{S}^\epsilon$, which maps the previous iterate to the expected values of the zeroth-order estimators. Namely,  $  \mathcal{S}^\epsilon(\mathbf{U}_{n-1}):= (V^\epsilon, G^\epsilon, H^\epsilon) $ is the target triplet defined by the conditional expectations:
\begin{align}
    V^\epsilon(t,x) &:= \E\left[ R^{t,x}(\mathbf{U}_{n-1}) \right] \equiv \hat{V}_n(t,x)
    , \label{eq: V_eps_def}\\
    G^\epsilon(t,x) &:= \E\left[ \frac{Z}{\epsilon} R^{t,x+\epsilon Z}(\mathbf{U}_{n-1}) \right], \label{eq: G_eps_def} \\
    H^\epsilon(t,x) &:= \E\left[ \frac{ZZ^\top - I_d}{\epsilon^2} R^{t,x+\epsilon Z}(\mathbf{U}_{n-1}) \right]. \label{eq: H_eps_def}
\end{align}
Note that $\mathcal{S}^\epsilon(\mathbf{U}_{n-1})$ represents the ``smoothed" solution induced by the Gaussian perturbation, which serves as the proxy for the true solution $\mathcal{S}(\mathbf{U}_{n-1})$.

Under the strong simulator in Condition \ref{cond:strong simulator}, the derivative components of the empirical loss can alternatively be evaluated by the multi-point ZOD estimators $\hat g$ and $\hat h$ in Proposition \ref{prop: zod variance reduction}. In this multi-point construction, the rewards entering each estimator are generated with the same Brownian path upon a single query. By the symmetry of $Z$ and the linearity of expectation, these multi-point ZOD estimators have the same conditional means as $G^\epsilon$ and $H^\epsilon$. Thus the population target remains $\mathcal S^\epsilon(\mathbf U_{n-1})$; in other words, the multi-point construction changes the variance and the empirical-process analysis, but not the target being learned.

The following lemma establishes the relationship between the population loss function and this ZOD target.

\begin{lemma}
\label{lemma: total loss decomp}
For any triplet $\varphi$, the population loss with respect to the previous iterate $\mathbf{U}_{n-1}$ decomposes as:
\begin{equation}
    L(\varphi, \mathbf{U}_{n-1}) = \|\varphi - \mathcal{S}^\epsilon(\mathbf{U}_{n-1})\|_{\beta, \text{mix}}^2 + C,
\end{equation}
where $ C$ is a constant independent of $\varphi$.
\end{lemma}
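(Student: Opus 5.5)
The plan is the standard bias--variance (Pythagorean) decomposition of a squared loss around its conditional mean, applied separately to each of the three components $L_v$, $L_g$, $L_h$ and then summed. The weights line up with the mixed norm: $\beta$ for value and gradient, $1$ for the Hessian.

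I take the value loss first. For fixed $s$, let $Y_s := R^{s,X_s}(\mathbf{U}_{n-1})$ and $\mathcal F_s := \sigma(X_s)$. By the Markov property of the simulator, the trajectory started at $(s,X_s)$ depends on the past only through $X_s$, so $\E[Y_s\mid X_s] = \hat V_n(s,X_s) = V^\epsilon(s,X_s)$. Expanding $|\varphi_v(s,X_s)-Y_s|^2$ around $V^\epsilon(s,X_s)$ gives three pieces:
\begin{equation}
|\varphi_v - V^\epsilon|^2 + 2(\varphi_v - V^\epsilon)(V^\epsilon - Y_s) + |V^\epsilon - Y_s|^2 .
\end{equation}
The cross term has zero expectation, because $\varphi_v(s,X_s)-V^\epsilon(s,X_s)$ is $X_s$-measurable and $\E[V^\epsilon(s,X_s)-Y_s\mid X_s]=0$. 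Integrating against $\beta e^{\beta s}\,\dd s$ and using Fubini together with the fact that $X_s$ has density $p(s,\cdot)$, the first piece becomes $\beta\|\varphi_v - V^\epsilon\|_\beta^2$. The last piece, $\beta\E\int_0^T e^{\beta s}|V^\epsilon(s,X_s)-Y_s|^2\dd s$, does not involve $\varphi$ and goes into the constant.

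The gradient and Hessian losses are handled the same way, with targets $\frac{Z}{\epsilon}R^{t,X_t+\epsilon Z}$ and $\frac{ZZ^\top-I_d}{\epsilon^2}R^{t,X_t+\epsilon Z}$. The one additional step is to verify that the conditional expectations given $X_t$ are $G^\epsilon(t,X_t)$ and $H^\epsilon(t,X_t)$. Since $Z$ is independent of $X_t$ and the perturbed trajectory is freshly simulated from $(t,X_t+\epsilon Z)$, I would condition first on $(X_t,Z)$ and then on $X_t$, so that the definitions \eqref{eq: G_eps_def}–\eqref{eq: H_eps_def} evaluated at $x=X_t$ apply. The cross terms again vanish, and the remaining constants are finite conditional variances. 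In the multi-point (strong simulator) version, the same argument goes through using the remark that $\hat g,\hat h$ share the conditional means $G^\epsilon,H^\epsilon$.

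Adding the three pieces gives $\beta\|\varphi_v-V^\epsilon\|_\beta^2+\beta\|\varphi_g-G^\epsilon\|_\beta^2+\|\varphi_h-H^\epsilon\|_\beta^2 = \|\varphi-\mathcal S^\epsilon(\mathbf U_{n-1})\|_{\beta,\text{mix}}^2$, plus a constant $C$.

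The main obstacle is technical rather than conceptual: justifying integrability and the interchange of expectation with the time integral. This is needed so that the cross terms are genuinely zero and $C$ is finite. I would use the polynomial growth of $\mathbf U_{n-1},f,g,\varphi$, the moment bounds on $X$ that follow from Assumption \ref{assum: sde coef}, and the Gaussian moments of $Z$, which bound every term in $L^1(\dd\p\otimes\dd t)$. Fubini and the tower property then justify each step. A secondary point is the measurability of the conditional expectation as a function of $(t,x)$, which follows from the Markov and flow structure of the strong solution.
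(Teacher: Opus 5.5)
Your proposal is correct and follows essentially the same route as the paper. You expand each squared loss around its conditional mean given $X_t$ (which is $V^\epsilon$, $G^\epsilon$, or $H^\epsilon$ by the Markov property and the independence of $Z$ and the freshly simulated perturbed path), note that the cross term vanishes, and sum the three pieces with weights $\beta,\beta,1$ to get $\|\varphi-\mathcal S^\epsilon(\mathbf U_{n-1})\|_{\beta,\text{mix}}^2$ plus a $\varphi$-independent constant; the paper writes out only the gradient component and leaves the integrability and measurability points implicit, so your extra care there refines the argument rather than changing it.
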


As a direct consequence of Lemma \ref{lemma: total loss decomp}, the triplet $\mathcal{S}^\epsilon(\mathbf{U}_{n-1})$ is the unique minimizer of the population loss over the space of all measurable functions:
\begin{equation}
    \mathcal{S}^\epsilon(\mathbf{U}_{n-1}) \in \arg\min_{\varphi} L(\varphi, \mathbf{U}_{n-1}).
\end{equation}
Hence, the learning process can be viewed as approximating the ZOD target $\mathcal{S}^\epsilon(\mathbf{U}_{n-1})$ via empirical risk minimization.

We now define the network architecture and hypothesis space used in this paper. For a smooth activation function $\rho:\mathbb R\to\mathbb R$, we use the same notation $\rho(z)$ for its element-wise application to a vector $z$.

\begin{definition}
\label{def: hypothesis space}
Given input dimension $d_{in}$, output dimension $d_{out}$, depth $L$, and width vector $\mathbf m=(m_0,m_1,\ldots,m_L)$ with $m_0=d_{in}$ and $m_L=d_{out}$, we define the multilayer perceptron (MLP) $\Phi(\cdot;\vartheta):\mathbb R^{d_{in}}\to\mathbb R^{d_{out}}$ by
\begin{equation}
\Phi(x;\vartheta)
:=
\mathcal A_L\circ \rho\circ \mathcal A_{L-1}\circ \cdots \circ \rho\circ \mathcal A_1(x),
\end{equation}
where each $\mathcal A_l:\mathbb R^{m_{l-1}}\to\mathbb R^{m_l}$ is the affine map $\mathcal A_l(z)=W_lz+b_l$. The parameter vector $\vartheta$ collects all trainable parameters of this MLP, namely $\vartheta={(W_l,b_l)}_{l=1}^L$.
The hypothesis space is defined by
\begin{equation}
\mathcal H
:=
\left\{
\varphi_{\theta}
=
(V_{\theta},G_{\theta},H_{\theta})
\mid
\theta\in\Theta
\right\},
\end{equation}
where $V_{\theta}:[0,T]\times\mathbb R^d\to\mathbb R$, $G_{\theta}:[0,T]\times\mathbb R^d\to\mathbb R^d$, and $H_{\theta}:[0,T]\times\mathbb R^d\to\mathbb R^{d\times d}$ are three possibly different MLPs. The output of $H_{\theta}$ has dimension $d^2$ and is reshaped into a $d\times d$ matrix.
\end{definition}

Note that we use the same symbol $\theta$ for all the three networks: $\theta$ denotes the collection of their trainable parameters, and when appearing in one of $V_\theta$, $G_\theta$, or $H_\theta$ it refers to the corresponding subcollection of parameters.

\begin{assumption}
\label{assum: hypothesis space}
The activation function satisfies $\rho\in C_b^5(\mathbb R)$.  The parameter set $\Theta\subset\mathbb R^p$ is compact and satisfies $|\theta|_\infty\le R$ for all $\theta\in\Theta$, where $p$ is the total number of trainable parameters in the three networks.
\end{assumption}

Under the above construction and assumption, due to the smoothness of $\rho$ and the compactness of $\Theta$, all functions in $\mathcal{H}$ and their partial derivatives up to the fourth order with respect to $x$ are uniformly bounded.

Next we introduce the regularity assumption on the expected reward function induced by any candidate in the hypothesis space. This is crucial for establishing the one-step sample complexity result.

For any $\varphi \in \mathcal{H} $, denote
\begin{equation}
    J_1(t, x; \varphi) := \E \left[ R^{t,x}(\varphi) \right],\quad J_2(t,x;\varphi):= \E[(R^{t,x}(\varphi))^2], \quad \tilde f(t,x;\varphi) := f(t,x,\varphi(t,x)).
\end{equation}

\begin{assumption}
\label{assum: reward and f regularity}
The functions $J_1(\cdot, \cdot; \varphi)\in C_p^{1,4}([0,T]\times\R^d), J_2(\cdot,\cdot;\varphi)\in C_p^{1,2}([0,T]\times \R^d), \tilde f \in C_p^{1,2}([0,T]\times \R^d)$ uniformly over the hypothesis space $ \varphi\in \mathcal{H}$ in the sense that their derivatives have polynomial growth with constants
independent of $\varphi$. The terminal function $g$ has polynomial growth.
\end{assumption}

Assumption \ref{assum: reward and f regularity} is a regularity condition on the solutions to the linear PDE in the value iterate. Typical sufficient conditions for it to hold include stronger regularity conditions on the SDE coefficients and/or uniform ellipticity.

\begin{theorem}\label{thm: one step error bound}
Suppose Assumptions \ref{assum: sde coef}, \ref{assum: hypothesis space}, and \ref{assum: reward and f regularity} hold. Let $0<\epsilon<1$ and $0<\delta<1$, and define $L_{B,\delta}:=\log B+\log(1/\delta)$. Let $\mathcal P$ be a polynomial determined by the architecture of $\mathcal H$, $C$ be a constant depending only on the PDE coefficients and the constants in the above assumptions. Let \(q\) be a constant depending only on the
polynomial growth orders of the functions in Assumption \ref{assum: reward and f regularity}.
 \begin{enumerate}[(a)]
\item With the weak simulator in Condition \ref{cond:weak simulator}, if the derivative terms in the empirical loss are evaluated using the one-point ZOD estimators, then, with probability at least $1-\delta$,
    \begin{equation}
        \begin{aligned}
        E_n^2 \le& C\left( \underbrace{\frac{1}{N\epsilon^{4}}}_{\text{Discretization Error}} + \underbrace{\epsilon^{4}}_{\text{ZOD Bias}}
        + \underbrace{\left(1+\sqrt{\log(1/\delta)}\right)
        \frac{\mathcal P(d)\sqrt{\log(1/\epsilon)}L_{B,\delta}^{q}}
        {B^{1/2}\epsilon^4}}_{\text{Statistical Error}}\right) \\
        &+ \underbrace{2 \inf_{\varphi\in \mathcal{H}} \normmix{\varphi - \mathcal{S}^\epsilon(\mathbf{U}_{n-1})}^2}_{\text{Approximation Error}}.
        \end{aligned}
    \end{equation}
\item Suppose further that $b,\sigma \in C^{0,5}_b([0,T]\times\R^d)$, $g\in C^{4}_p(\R^d)$ and $f(t,x,\varphi(t,x)) \in C^{1,5}_p([0,T]\times\R^d)$ uniformly over the hypothesis space $ \varphi\in \mathcal{H}$ in the sense that their derivatives have polynomial growth with constants independent of $\varphi$. With the strong simulator in Condition \ref{cond:strong simulator}, if the derivative terms in the empirical loss are evaluated using the multi-point estimators stipulated in Proposition \ref{prop: zod variance reduction}, then, with probability at least $1-\delta$,
       \begin{equation}
        \begin{aligned}
        E_n^2 \le& C\left( \underbrace{\frac{1}{N}}_{\text{Discretization Error}} + \underbrace{\epsilon^{4}}_{\text{ZOD Bias}}
        + \underbrace{\left(1+\sqrt{\log(1/\delta)}\right)
        \frac{\mathcal P(d)\Gamma_{\epsilon,B,\delta}}{B^{1/2}}}_{\text{Statistical Error}}\right)\\
        &+ \underbrace{2 \inf_{\varphi\in \mathcal{H}} \normmix{\varphi - \mathcal{S}^\epsilon(\mathbf{U}_{n-1})}^2}_{\text{Approximation Error}},
        \end{aligned}
    \end{equation}
    where
$\Gamma_{\epsilon,B,\delta}
        :=
        \min\left\{
        \epsilon^{-4}\sqrt{\log(1/\epsilon)}L_{B,\delta}^{q},
        \exp\!\left(C\sqrt{L_{B,\delta}}\right)
        \right\}$.
    
    \end{enumerate}
\end{theorem}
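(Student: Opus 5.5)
The proof follows the standard ERM error decomposition, adapted to our setting. Let $L$ be the population loss and $\tilde L^{(B)}$ the discretized empirical loss. Let $\varphi^\dagger\in\mathcal H$ be a (near-)minimizer of $\normmix{\varphi-\mathcal S^\epsilon(\mathbf U_{n-1})}$. By Lemma \ref{lemma: total loss decomp},
\[
\normmix{\mathbf U_n-\mathcal S^\epsilon(\mathbf U_{n-1})}^2 = L(\mathbf U_n,\mathbf U_{n-1})-L(\varphi^\dagger,\mathbf U_{n-1})+\normmix{\varphi^\dagger-\mathcal S^\epsilon(\mathbf U_{n-1})}^2 .
\]
Using $\tilde L^{(B)}(\mathbf U_n)\le\tilde L^{(B)}(\varphi^\dagger)$, the first difference is bounded by $2\sup_{\varphi\in\mathcal H}|L(\varphi)-\bar L^{(N)}(\varphi)| + 2\sup_{\varphi\in\mathcal H}|\bar L^{(N)}(\varphi)-\tilde L^{(B)}(\varphi)|$. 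Here $\bar L^{(N)}:=\E\tilde L^{(B)}$ is the time-discretized population loss. We then combine this with
\[
E_n^2\le 2\normmix{\mathbf U_n-\mathcal S^\epsilon(\mathbf U_{n-1})}^2+2\normmix{\mathcal S^\epsilon(\mathbf U_{n-1})-\mathcal S(\mathbf U_{n-1})}^2 .
\]
The last term is the ZOD bias. It is $O(\epsilon^4)$ by Propositions \ref{prop:zod one-point} and \ref{prop: zod variance reduction}, applied with $\hat V_n=J_1(\cdot;\mathbf U_{n-1})\in C_p^{1,4}$ from Assumption \ref{assum: reward and f regularity}, together with moment bounds on $X_t$ from Assumption \ref{assum: sde coef}. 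This gives the $\epsilon^4$ and approximation-error terms.

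\textbf{Discretization.} We bound $|L-\bar L^{(N)}|$ uniformly over $\mathcal H$ in two parts. The first is the time-integral error $\int_0^T e^{\beta t}(\cdot)\,\dd t$ versus the Riemann sum. The second is the error from replacing $R$ by the sum $\tilde R$. Each squared residual is expanded as $|\varphi|^2-2\varphi\cdot(\text{target})+|\text{target}|^2$, and $\varphi$-independent terms cancel in the difference. Conditional expectations of the targets are $J_1$ and its perturbed versions, and $J_2$ governs the quadratic term. These are $C^{1,\cdot}_p$ in $t$, which gives $O(1/N)$ Riemann errors. The $f$-sum error is $O(1/N)$ via $\tilde f\in C^{1,2}_p$ and Itô's formula. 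Under the weak simulator, the one-point targets carry prefactors $\epsilon^{-1}$ and $\epsilon^{-2}$. The cross terms are harmless, but $|\text{target}|^2$ and the error in $R$ multiplied by $|Z|^2/\epsilon^4$ inflate the bound to $1/(N\epsilon^4)$. Under the strong simulator, the differences $R^{x+\epsilon Z}-R^{x-\epsilon Z}$ and so on are $O(\epsilon)$ and $O(\epsilon^2)$ pathwise in $L^2$. This uses differentiability of the flow $x\mapsto X^{t,x}$, which needs $b,\sigma\in C_b^{0,5}$ as in Proposition \ref{prop: zod variance reduction}(b). The $\epsilon$-powers then cancel and leave $1/N$.

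\textbf{Statistical error, the main obstacle.} We need a uniform concentration bound for $\sup_{\varphi\in\mathcal H}|\bar L^{(N)}(\varphi)-\tilde L^{(B)}(\varphi)|$. The summands are i.i.d. over $b$ but unbounded: they involve polynomials of Gaussian-type $X$, $Z$ and rewards with polynomial growth. Our plan has three steps. (i) Truncate on the event $\{\max_{b,j}|X^{(b)}_{t_j}|,\ |Z^{(b)}|\le c\sqrt{L_{B,\delta}}\}$, which holds with probability $\ge1-\delta/2$ by the boundedness of $\sigma$ and Gaussian tails. On this event all summands are bounded by $\epsilon^{-4}L_{B,\delta}^{q}$ times a polynomial in $d$, and the truncation bias is negligible. (ii) Use Lipschitz dependence of $\varphi_\theta$ on $\theta$ on the compact set $\Theta$, from Assumption \ref{assum: hypothesis space}. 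This gives a covering number $\log\mathcal N(\eta)\lesssim p\log(\mathcal P(d)/\eta)$. (iii) Apply Hoeffding's inequality plus a union bound over an $\eta$-net with $\eta\sim\epsilon$ (or $B^{-1/2}$). This yields the $(1+\sqrt{\log(1/\delta)})\mathcal P(d)\sqrt{\log(1/\epsilon)}L_{B,\delta}^q/(B^{1/2}\epsilon^4)$ term. In case (b), the range bound is instead $\min\{\epsilon^{-4}L^q,\ \exp(C\sqrt{L_{B,\delta}})\}$. The second option comes from writing the multi-point differences via pathwise flow derivatives. Their sup over the truncated event is controlled by Gronwall-type exponential bounds in the truncation level, producing $\Gamma_{\epsilon,B,\delta}$. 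The delicate point is keeping $\epsilon$-uniform control of these flow-derivative bounds on high-probability events. I would first try Kolmogorov/Garsia--Rodemich--Rumsey estimates for the random field $x\mapsto R^{t,x}$.
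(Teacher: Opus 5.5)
Your architecture is the paper's: the ERM decomposition through Lemma~\ref{lemma: total loss decomp} (your $\varphi^\dagger$ is the paper's $\bU_n^{\mathcal H}$), the inner/outer discretization split, the ZOD bias from Proposition~\ref{prop:zod one-point}, and truncation followed by a uniform concentration bound. Using an $\eta$-net with Hoeffding in place of McDiarmid plus the parametric Rademacher bound is an equivalent choice. The net should be taken at parameter scale $B^{-1/2}$ divided by the $\epsilon^{-2}(1+K^q)$ Lipschitz constant of the clipped loss; that, not $\eta\sim\epsilon$, is where $\sqrt{\log(1/\epsilon)}$ comes from.

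The first gap is your truncation in (a). The event $\{\max_{b,j}|X^{(b)}_{t_j}|,\,|Z^{(b)}|\le c\sqrt{L_{B,\delta}}\}$ does not bound the summands. The one-point targets contain $\tilde R^{t_j,X^{(b)}_{t_j}+\epsilon Z^{(b)}}$, i.e.\ $g$ and $f$ evaluated along the branched paths $X^{t_j,X^{(b)}_{t_j}+\epsilon Z^{(b)}}_{t_i}$, $i\ge j$. Under the weak simulator these are driven by fresh independent noise. Adding them to a global good event needs a union bound over $BN$ independent branches, hence $K^2\gtrsim\log(BN/\delta)$ and a $\log N$ factor the statement does not have. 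The paper instead clips each grid summand on its own event, $G^{(K)}=\sum_j\Delta t_j\,\ell_j\1_{\Omega_{K,j}}$, where $\Omega_{K,j}$ contains $\{\sup_{s\ge t_j}|X_s^{t_j,X_{t_j}+\epsilon Z}|<K\}$. It controls the discarded mass only in expectation: $\E R_K\le\sum_j\Delta t_j\,(\E[\sup_\varphi\ell_j^2])^{1/2}\,\mathbb P(\Omega_{K,j}^c)^{1/2}\le C\epsilon^{-4}e^{-CK^2}$. Here $\sum_j\Delta t_j=T$ removes the $N$-dependence. Markov's inequality then handles the empirical remainder, and the resulting $\delta^{-1}$ is absorbed by $K^2\asymp L_{B,\delta}$.

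The second gap is that the $\exp(C\sqrt{L_{B,\delta}})$ entry of $\Gamma_{\epsilon,B,\delta}$ in (b) is never derived; you flag the ``delicate point'' and leave it open. Bounds on $X$ and $Z$ do not control the flow derivatives $\partial_x^rX^{t,x}$, which solve linear SDEs with multiplicative noise and have log-normal-type tails. Gronwall bounds ``in the truncation level'' would require truncating the martingale parts of the derivative equations uniformly over the segment $x+s\epsilon Z$, $s\in[-1,1]$. That is what pushes you toward Kolmogorov/GRR estimates.

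The missing idea is to truncate the targets themselves rather than the paths. Start from $Y_g^{\epsilon,m}=\tfrac12 Z\int_{-1}^{1}(\nabla_x\tilde R^{t,x+s\epsilon Z})^{\top}Z\,\dd s$ and the analogous formula for $Y_h^{\epsilon,m}$ with weight $1-|s|$. Minkowski's inequality then needs only pointwise-in-$x$ moment bounds of the flow derivatives. BDG and Gronwall on the derivative SDEs give these in the form $\|\cdot\|_{L^p}\le Ce^{cp}$. Hence $\|Y_g^{\epsilon,m}\|_{L^p}+\|Y_h^{\epsilon,m}\|_{L^p}\le Ce^{cp}$ uniformly in $\epsilon\in(0,1]$. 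Markov's inequality at $p\asymp\log K$ yields $\mathbb P(|Y^{\epsilon,m}|>K)\le C\exp(-c(\log K)^2)$. Clipping on $\{|Y_g^{\epsilon,m}|,|Y_h^{\epsilon,m}|\le K\}$ gives an envelope $C(1+K^q)$. Making $\delta^{-1}\exp(-c(\log K)^2)$ negligible forces $K=\exp(C\sqrt{L_{B,\delta}})$, which is exactly the second entry of $\Gamma_{\epsilon,B,\delta}$.
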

% \begin{proof}
%     See Section \ref{proof sec: one step error bound}.
% \end{proof}

Theorem \ref{thm: one step error bound} provides high-probability upper bounds on the one-step error,  under both the weak and strong simulators,  in terms of four components: the discretization error from approximating the time integral, the ZOD bias from using ZOD-based derivative representations instead of exact stochastic representations, the statistical error from the  empirical loss due to a finite sample size, and the approximation error of the neural-network family. Several points are worth noting. First, the discretization error is controlled by the number of time-grid points used to compute the integrals. Since we use the simplest forward Euler scheme, this term is of order $N^{-1}$, which can be improved by adopting a higher-order numerical integration scheme. Importantly, increasing the number of time-grid points does not introduce instability into the algorithm or the loss function. Second, the ZOD bias is controlled by the bandwidth and is of order $\epsilon^4$ in both simulator settings. This part of the error follows from Propositions \ref{prop:zod one-point} and \ref{prop: zod variance reduction}. Third, the statistical error is jointly controlled by the sample size and the complexity of the neural-network class. Our statistical bound is conservative: the squared one-step error contains a sample-size term of order \(B^{-1/2}\) and depends polynomially on the dimension of the state variable; this rate arises from our use of a global Rademacher-complexity bound for the empirical-loss term.\footnote{%rather than the fast \(B^{-1}\) rate available in well-specified least-squares regression under stronger localized complexity and Bernstein-type conditions.
Sharper localized critical radius analyses are possible in principle, but they require additional technical assumptions and substantially more involved arguments, which are not the focus of this paper; see, e.g., \citet{bartlett2005Local,wainwright2019HighDimensional}.} Finally, the approximation error is in terms of the weighted Sobolev norm of the neural network classes, which reflects the quality of  the neural network classes chosen.

Moreover, under the weak simulator, the ZOD estimator has exploding variances. As a consequence, both the discretization error and the statistical error contain the factor $\epsilon^{-4}$, which substantially worsens the bound and creates a bias--variance tradeoff against the ZOD bias. Ignoring logarithmic factors, the dominant terms are of the order $(N^{-1}+\mathcal P(d)B^{-1/2})\epsilon^{-4}+\epsilon^4$. This suggests choosing  $\epsilon=O((N^{-1}+\mathcal P(d)B^{-1/2})^{1/8})$ to yield the squared one-step error $E_n^2=O((N^{-1}+\mathcal P(d)B^{-1/2})^{1/2})$, up to logarithmic factors and the approximation error.

By contrast, under the strong simulator, the variance of the multi-point ZOD estimator is bounded uniformly in $\epsilon$. Hence neither the discretization error nor the statistical error suffers from the $\epsilon^{-4}$ blow-up. We note, however, that to cancel this $ \epsilon^{-4}$ term, one relies on the tangent process. Since the global Rademacher complexity argument requires a bounded loss class, a truncation argument is needed. The tangent process typically has a heavier tail than the original state process, which leads to the factor $\exp(C\sqrt{\log B+\log(1/\delta)})$ in the statistical error bound. Although this factor is worse than the logarithmic factor appearing in the weak one-point estimate, it is still sub-polynomial in both $B$ and $1/\delta$, because $\exp(C\sqrt{L_{B,\delta}})=(B/\delta)^{o(1)}$.  Consequently, in the asymptotic sense,  choosing $\epsilon=O((N^{-1}+\mathcal P(d)B^{-1/2})^{1/4})$ balances the ZOD bias with the discretization and statistical errors, and gives the squared one-step error $E_n^2=O(N^{-1}+\mathcal P(d)B^{-1/2})$, up to the approximation error. To sum, compared with the weak simulator, the strong simulator gives a substantially better asymptotic order by removing the unfavorable $\epsilon^{-4}$ dependence.

Combining our analysis for the approximate value iteration and one-step sample complexity results, Theorem \ref{thm:total error} presents the total error of the algorithm up to the $n$-th value iterate with any fixed bandwidth $\epsilon$, the number of time discretization grids $N$, and the sample size $B$ for training.

\begin{theorem}
\label{thm:total error}
Suppose Assumptions \ref{assum: sde coef}--\ref{assum: reward and f regularity} hold. Let $0<\epsilon<1$ and $0<\delta<1$, and write $L_{B,\delta,n}:=\log B+\log(n/\delta)$. Let $\mathcal P$ be a polynomial determined by the architecture of $\mathcal H$,  $C$ be a constant depending only on the PDE coefficients and the constants in the assumptions. Let \(q\) be a constant depending only on the
polynomial growth orders of the functions in Assumption \ref{assum: reward and f regularity}.
\begin{enumerate}[(a)]
\item Under the conditions of Theorem \ref{thm: one step error bound}-(a),
with probability at least $1-\delta$, we have
\begin{equation}
\begin{aligned}
    \|\mathbf{U}_n - \mathbf{u}^*\|_{\beta, \text{mix}} &\le \gamma^n \|\mathbf{U}_0 - \mathbf{u}^*\|_{\beta, \text{mix}} + \sqrt{2}\sum_{k=1}^n\gamma^{n-k}\inf_{\varphi \in \mathcal{H}}\|\varphi - \mathcal{S}^\epsilon(\mathbf{U}_{k-1})\|_{\beta,\text{mix}}\\
    + C\frac{1-\gamma^n}{1-\gamma}&\left( N^{-1/2}\epsilon^{-2} + \epsilon^2 + \left( 1+ (\log (n/\delta))^{1/4}\right)\frac{\sqrt{\mathcal{P}(d)}(\log(1/\epsilon))^{1/4}L_{B,\delta,n}^{q/2}}{B^{1/4}\epsilon^2}\right).
    \end{aligned}
\end{equation}
\item Under the conditions of Theorem \ref{thm: one step error bound}-(b),
with probability at least $1-\delta$, we have
    %Suppose further that $b,\sigma \in C^{0,5}_b([0,T]\times\R^d)$, $f(t,x,\mathbf{U}_{n-1}(t,x)) \in C^{1,5}_p([0,T]\times\R^d)$, and $g\in C^{4}_p(\R^d)$. Under the strong simulator in Condition \ref{cond:strong simulator}, if the derivative components of the empirical loss are evaluated by the multi-point estimators in Proposition \ref{prop: zod variance reduction} with common-noise coupling, then with probability at least $1-\delta$,
    %\[
%        \Gamma_{\epsilon,B,\delta,n}
%        :=
%        \min\left\{
%        \epsilon^{-4}\sqrt{\log(1/\epsilon)}L_{B,\delta,n}^{4},
%        \exp\!\left(C\sqrt{L_{B,\delta,n}}\right)
%        \right\}.
%    \]
    \begin{equation}
\begin{aligned}
    \|\mathbf{U}_n - \mathbf{u}^*\|_{\beta, \text{mix}} \le &\gamma^n \|\mathbf{U}_0 - \mathbf{u}^*\|_{\beta, \text{mix}} + \sqrt{2}\sum_{k=1}^n\gamma^{n-k}\inf_{\varphi \in \mathcal{H}}\|\varphi - \mathcal{S}^\epsilon(\mathbf{U}_{k-1})\|_{\beta,\text{mix}}\\
    &+ C\frac{1-\gamma^n}{1-\gamma}\left( N^{-1/2} + \epsilon^2 + \left(1+(\log(n/\delta))^{1/4}\right)
        \frac{\sqrt{\mathcal P(d)\Gamma_{\epsilon,B,\delta,n}}}{B^{1/4}}\right),
    \end{aligned}
\end{equation}
where $\Gamma_{\epsilon,B,\delta,n}
        :=
        \min\left\{
        \epsilon^{-4}\sqrt{\log(1/\epsilon)}L_{B,\delta,n}^{q},
        \exp\!\left(C\sqrt{L_{B,\delta,n}}\right)
        \right\}$. 
 \end{enumerate}
\end{theorem}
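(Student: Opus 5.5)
The plan is to combine Theorem \ref{thm: global_convergence} with Theorem \ref{thm: one step error bound}, using a union bound over the $n$ iterations. Theorem \ref{thm: global_convergence} holds almost surely and gives
\[
\|\mathbf{U}_n - \mathbf{u}^*\|_{\beta,\text{mix}} \le \gamma^n\|\mathbf{U}_0-\mathbf{u}^*\|_{\beta,\text{mix}} + \sum_{k=1}^n \gamma^{n-k}E_k .
\]
So it suffices to bound every $E_k$, $k=1,\dots,n$, simultaneously with probability at least $1-\delta$.

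First, apply Theorem \ref{thm: one step error bound} at each iteration $k$ with confidence level $\delta/n$ in place of $\delta$. This is legitimate because the bound there holds conditionally on $\mathbf{U}_{k-1}$. Fresh samples are drawn at each iteration, and the constants $C$, $\mathcal P$, $q$ are uniform over $\mathcal H$ by Assumptions \ref{assum: hypothesis space}--\ref{assum: reward and f regularity}. Conditioning on $\mathbf{U}_{k-1}\in\mathcal H$ therefore yields an unconditional probability of at least $1-\delta/n$. Replacing $\delta$ by $\delta/n$ turns $L_{B,\delta}$ into $\log B+\log(n/\delta)=L_{B,\delta,n}$ and $\log(1/\delta)$ into $\log(n/\delta)$. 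A union bound over $k=1,\dots,n$ then gives an event of probability at least $1-\delta$ on which all $n$ one-step bounds hold.

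Next, on this event, take square roots using $\sqrt{a_1+\dots+a_m}\le\sum_i\sqrt{a_i}$. In case (a) this gives
\[
E_k \le \sqrt{C}\Big(N^{-1/2}\epsilon^{-2}+\epsilon^2+\big(1+(\log(n/\delta))^{1/4}\big)\tfrac{\sqrt{\mathcal P(d)}(\log(1/\epsilon))^{1/4}L_{B,\delta,n}^{q/2}}{B^{1/4}\epsilon^2}\Big)+\sqrt2\inf_{\varphi\in\mathcal H}\normmix{\varphi-\mathcal S^\epsilon(\mathbf U_{k-1})}.
\]
Here we use $\sqrt{1+\sqrt{x}}\le 1+x^{1/4}$. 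Case (b) is identical, with $N^{-1/2}$ in place of $N^{-1/2}\epsilon^{-2}$ and $\sqrt{\mathcal P(d)\Gamma_{\epsilon,B,\delta,n}}/B^{1/4}$ for the statistical term. For the exponential branch of $\Gamma$, the constant $C$ inside the exponential is kept as is, absorbed after substituting $L_{B,\delta,n}$.

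Finally, plug these bounds into the accumulation sum. The approximation terms keep their weights $\gamma^{n-k}$, which produces the $\sqrt2\sum_k\gamma^{n-k}\inf(\cdot)$ term. The remaining part of each bound is independent of $k$, so it factors out and is multiplied by $\sum_{k=1}^n\gamma^{n-k}=(1-\gamma^n)/(1-\gamma)$.

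The only delicate step is justifying the per-iteration application of Theorem \ref{thm: one step error bound} when $\mathbf{U}_{k-1}$ is itself random. I will handle it by conditioning on the sigma-field of earlier samples, using independence of the fresh batch at iteration $k$ and the uniformity of all constants over $\varphi\in\mathcal H$. The rest is bookkeeping of constants, which are enlarged as needed.
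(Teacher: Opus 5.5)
Your proposal is correct and is the same argument as the paper's one-line proof. You apply Theorem \ref{thm: one step error bound} at each iteration with $\delta/n$ in place of $\delta$, take a union bound, and substitute the result into Theorem \ref{thm: global_convergence}. You also spell out the steps the paper leaves implicit: the square-root bookkeeping (including $\sqrt{1+\sqrt{x}}\le 1+x^{1/4}$), the geometric sum $\sum_{k=1}^n\gamma^{n-k}=(1-\gamma^n)/(1-\gamma)$, and the conditioning on earlier fresh batches together with the uniformity of the constants over $\mathcal H$.
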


% --- SECTION 4: NUMERICAL EXPERIMENTS ---
\section{Numerical Experiments}
\label{sec:numerical_experiments}

In this section, we investigate the numerical performance of the proposed zeroth-order solver in learning the full solution triplet
$(v^\star,\nabla v^\star,\nabla^2 v^\star)$.

Let $\mathbf U_n=(V_n,G_n,H_n)$ denote the learned value, gradient, and Hessian networks respectively after the $n$-th value iteration. Throughout the numerical experiments, we report empirical relative root mean squared errors (rRMSE) between the learned networks and the ground truth solution by evaluating the errors on a set of randomly generated test samples. Specifically, given test samples $z_i=(t_i,x_i)$ and a target quantity $Q^\star$ evaluated at these samples, define
\begin{equation}
    \operatorname{rRMSE}(\widehat Q,Q^\star)
    =
    \left(
    \frac{\sum_{i=1}^{N_{\mathrm{test}}}
    \left|\widehat Q(z_i)-Q^\star(z_i)\right|^2}
    {\sum_{i=1}^{N_{\mathrm{test}}}
    \left|Q^\star(z_i)\right|^2}
    \right)^{1/2},
    \label{eq:numexp_rrmse}
\end{equation}
with the absolute value for scalar quantities, the Euclidean norm for vectors, and the Frobenius norm for matrices.\footnote{In our theoretical analysis, we rely on the weighted Sobolev norm that depends on a constant $\beta$. This constant exists in theory but is difficult to determine in experiments. Therefore, in the numerical study, we only calculate the unweighted Sobolev norm in training and present the results for function value, gradient, and Hessian in terms of the relative mean squared errors respectively.} For the solution triplet, $\widehat Q$ is taken directly from $V_n$, $G_n$, or $H_n$ and compared with $v^\star$, $\nabla v^\star$, or $\nabla^2v^\star$, respectively. Note once again that our approach learns the three networks separately, and the derivatives are not obtained by auto-differentiating the learned value network $V_n$.

For the PDE-solving experiments reported below, we use a short initialization stage as a pre-training stage. With the initial scalar network $V_0$ fixed, the derivative networks $G_0$ and $H_0$ are pretrained to fit $\nabla_x V_0$ and $\nabla_x^2 V_0$, respectively, where these derivatives are computed by automatic differentiation of $V_0$ on samples from the corresponding training law. This pretraining is used only to initialize the derivative networks. In the subsequent value iterations, the networks are warm-started: the parameters of $V_{n+1}, G_{n+1}, H_{n+1}$ are initialized from the trained parameters of $V_n,G_n,H_n$ rather than from a fresh random initialization. All numerical experiments were run on a single NVIDIA RTX 3090 GPU with 24 GB memory. Code is available at \url{https://github.com/du-ouyang/ZO4PDE}.

In the following subsections, we first present a simple toy example of a linear PDE in Subsection \ref{subsec:spiky-diagnostic}, which is effectively a regression problem explained in Subsection \ref{sec:illustrate toy}, to contrast our approach to the
%\sout{conventional learning-then-differentiating one}
existing ones. In Subsections \ref{sec:numexp:hjb_ou_20d} and \ref{sec:fully nonlinear}, we compare our method to the deep Picard iteration (DPI) in \cite{han26} on benchmark nonlinear PDEs where the accuracy can be measured against exact solutions. For a fairer  comparison in these DPI benchmarks, we use the multi-point ZOD estimators (ZOD-m) for gradients and Hessians (see Proposition \ref{prop: zod variance reduction}). This is because the DPI method is model-based, which assumes knowing the operator $\mathcal L$, a much stronger assumption including the availability of a strong simulator as a special case. Moreover, the SDEs in these benchmarks can be exactly simulated without numerical error. Hence, the time integrals appearing both in the training loss and in the reward random variable $R$ can be estimated by sampling $t$ uniformly on $[0,T]$ and then sampling $X_t$ exactly from its marginal law.\footnote{
More precisely, the time integral in the loss is evaluated by first sampling \(t\sim{\rm Unif}[0,T]\) and then sampling \(X_t\) exactly from its marginal law, using the identity \(\int_0^T e^{\beta t}\E[\ell(t,X_t)]\,\dd t=T\E[e^{\beta t}\ell(t,X_t)]\). Similarly, for the reward \(R^{t,x}=g(X_T^{t,x})+\int_t^T f(s,X_s^{t,x},\mathbf U(s,X_s^{t,x}))\,\dd s\), the integral term is estimated by first sampling \(s\sim{\rm Unif}[t,T]\) and then sampling \(X_s^{t,x}\) exactly from the transition law, so that \(\int_t^T \E[f(s,X_s^{t,x},\mathbf U(s,X_s^{t,x}))]\,\dd s=(T-t)\E[f(s,X_s^{t,x},\mathbf U(s,X_s^{t,x}))]\).} In this way, the reported errors isolate the learning and ZOD-estimation errors, without having to consider the time discretization errors.

\subsection{A Linear PDE: Regression}
\label{subsec:spiky-diagnostic}
We consider the following  (one-dimensional) linear PDE: % to demonstrate the limitation of the learning-then-differentiating
%paradigm.
\begin{equation}
\label{eq:spiky-value}
\partial_t v(t,x) + \frac{1}{2}\sigma^2\partial_{xx}v(t,x) = 0,\  (t,x)\in [0,T)\times \mathbb R,
\end{equation}
with terminal condition $v(T,x) = g(x)$.

We are interested only in the time-0 value, gradient, and Hessian, namely $u(x)=v(0,x)$, $u'(x)=\partial_x v(0,x)$, and $u''(x)=\partial_{xx} v(0,x)$. By the Feynman--Kac formula, this is equivalent to computing the conditional expectation
\[ u(x) = \E\left[g\left(X_T\right)\mid X_0 =x  \right], \]
where $\dd X_t = \sigma \dd W_t$. It is also equivalent  to a regression task introduced in Subsection \ref{sec:illustrate toy}, with $Y = g(X + \sigma \xi)$.

Set the terminal function $g$ as
\begin{equation}
    g(y)
    =
    A_1\sin(\alpha y)+A_2\cos(\beta y)
    +
    \sum_{j=1}^{J}
    a_j
    \exp\left(
        -\frac{(y-c_j)^2}{2\ell_j^2}
    \right),
\end{equation}
and the ground truth solution is available by Gaussian convolution:
\begin{equation}
\begin{aligned}
u^*(x)
&=
A_1 e^{-\alpha^2\sigma^2/2}\sin(\alpha x)
+
A_2 e^{-\beta^2\sigma^2/2}\cos(\beta x)  \\
&\quad+
\sum_{j=1}^{J}
a_j
\frac{\ell_j}{s_j}
\exp\left(
-\frac{(x-c_j)^2}{2s_j^2}
\right),
\qquad
s_j^2=\ell_j^2+\sigma^2 .
\end{aligned}
\end{equation}
The ground truth derivatives can then be computed analytically from the above expression, denoted by ${u^*}',{u^*}''$.

For the learning-then-differentiating paradigm, we first train a scalar network \(V_\vartheta\) with the least-squares loss function:
\begin{equation}
    \mathcal L_{\mathrm{NN}}(\vartheta)
    =
    \frac1B
    \sum_{i=1}^{B}
    \left|
        V_\vartheta(x_i)-g(x_i+\sigma\xi_i)
    \right|^2 ,
\end{equation}
where \(x_i\sim\mathrm{Unif}[-2,2]\) and
\(\xi_i\sim\mathcal N(0,T)\). The derivative estimates are then obtained by
automatic differentiation:
\[
    \widehat\Delta_{\mathrm{NN}}(x)=\partial_xV_\vartheta(x),
    \qquad
    \widehat\Gamma_{\mathrm{NN}}(x)=\partial_{xx}V_\vartheta(x).
\]

For our ZOD approach, we train three networks
\((V_{\theta_v},G_{\theta_g},H_{\theta_h})\), parameterized  collectively by
\(\theta\). At each step, we sample \(\{x_i\}_{i=1}^B\sim\mathrm{Unif}[-2,2]\),
\(\{z_i\}_{i=1}^B\sim\mathcal N(0,1)\), and \(\{\xi_i\}_{i=1}^B\sim\mathcal N(0,T)\). With the
multi-point ZOD estimator, define
\begin{align}
    \widehat V_i
    &=
    g(x_i+\sigma\xi_i),\\
    \widehat G_i
    &=
    \frac{z_i}{2\epsilon}
    \left[
        g(x_i+\epsilon z_i+\sigma\xi_i)
        -
        g(x_i-\epsilon z_i+\sigma\xi_i)
    \right],\\
    \widehat H_i
    &=
    \frac{z_i^2-1}{2\epsilon^2}
    \left[
        g(x_i+\epsilon z_i+\sigma\xi_i)
        +
        g(x_i-\epsilon z_i+\sigma\xi_i)
        -
        2g(x_i+\sigma\xi_i)
    \right].
\end{align}
The three networks are trained jointly by
\begin{equation}
\begin{aligned}
    \mathcal L_{\mathrm{ZOD}}(\theta)
    =
    \frac1B\sum_{i=1}^{B}
    \Big(
    &|V_{\theta_v}(x_i)-\widehat V_i|^2
    +
    |G_{\theta_g}(x_i)-\widehat G_i|^2  +
    |H_{\theta_h}(x_i)-\widehat H_i|^2
    \Big).
\end{aligned}
\end{equation}
Note that the same Brownian noise \(\xi_i\) is reused in definition of the multi-point ZOD estimator
\(\widehat G_i\) and \(\widehat H_i\), which is available only with the strong simulator.

In the experiment, we use
\[
\begin{gathered}
\sigma=0.02,\quad T= 1,\quad
A_1=0.22,\quad A_2=0.06,\quad
\alpha=1.3,\quad \beta=4.7,\quad J=7,\\
(c_j)=(-1.55,-1.05,-0.62,-0.18,0.24,0.71,1.28),\\
(\ell_j)=(0.065,0.055,0.070,0.060,0.055,0.065,0.060),\\
(a_j)=(0.035,-0.030,0.032,0.028,-0.034,0.030,-0.027).
\end{gathered}
\]
Both auto-differentiation and ZOD methods use MLPs with four hidden layers of width \(256\) and \(\tanh\)
activation. The rRMSEs are evaluated on \(1000\) uniformly generated initial spatial
points in \([-2,2]\).  NN-autodiff is trained with 10000 gradient steps and  batch size \(B = 32768\), learning rate \(3\times10^{-4}\). Our  ZOD estimator is trained with 5000 gradient steps, batch size \(B = 16384\), learning rate \(5\times10^{-4}\), and ZOD bandwidth \(\epsilon=0.01\).

\begin{table}[h]
\centering
\caption{\textbf{rRMSEs for the linear PDE \eqref{eq:spiky-value}.}}
\label{tab:spiky-results}
\small
\begin{tabular}{lcccc}
\toprule
Method & Value Error & Gradient Error & Hessian
Error & Time (s) \\
\midrule
NN-autodiff & $5.425\times10^{-2}$ & $4.532\times10^{-1}$ & $9.368\times10^{-1}$ & $47.2$ \\
ZOD-m & $7.184\times10^{-2}$ & $6.751\times10^{-2}$ & $3.633\times10^{-1}$ & $44.1$ \\
\bottomrule
\end{tabular}
\end{table}

Table~\ref{tab:spiky-results} summarizes the rRMSEs of both approaches under comparable running times. Observe that although the
value errors are of the same order, our ZOD approach yields
substantially smaller derivative errors in both the first and second order and outperforms by at least 3 times. Judging from the magnitude of these errors, one might conclude that NN-autodiff, while lagging behind ZOD,  could still achieve reasonable accuracy in the derivatives. However, rRMSE is an {\it averaged} measure (over the initial states) that may not fully capture the point-by-point errors of a learned function from its oracle. To visualize the latter, we draw diagnostic plots in Figure~\ref{fig:spiky-value-diagnostic}, where we put  the ground truth solution  and the learned functions by the two approaches together as functions of $x$. In terms of the learned value function, both methods produce very close fit to the true one, whereas the gap becomes significant for
the gradient and Hessian. In particular, the ZOD-based derivatives follow the oracle solutions closely all the time, capturing almost every peak and trough as well as the shapes of the functions, but the ones by NN-autodiff smoothly vary over the whole interval and miss most of the peaks and troughs. Thereby, despite the small average errors in the gradient and Hessian, the learned derivative functions are not close at all with NN-autodiff. This example highlights the limitations of auto-differentiating a learned value function directly when accurate derivative approximation is required, 
and justifies our method that aims for theoretical guarantee in not only function values, but also in derivatives values.

\begin{figure}[h]
\centering
\includegraphics[width=\textwidth]{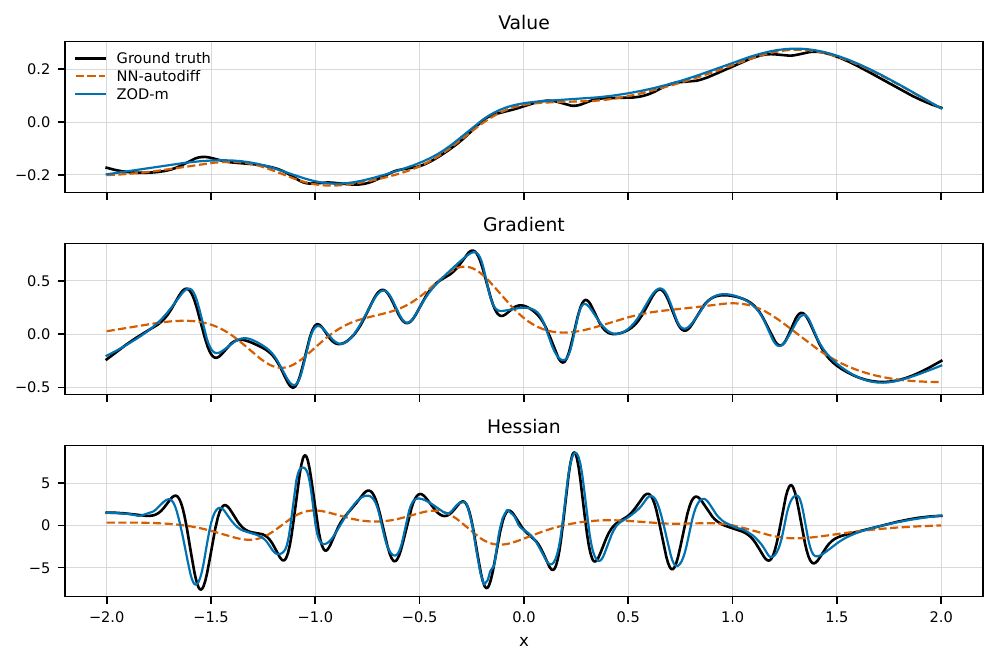}
\caption{\textbf{The learned value function, gradient, and Hessian for example \eqref{eq:spiky-value}}. NN-autodiff learns the value function by the least-squares method and obtains derivatives by automatic
differentiation. ZOD-m trains value, gradient, and Hessian networks directly
using multi-point ZOD estimators.}
\label{fig:spiky-value-diagnostic}
\end{figure}

\subsection{A Semilinear PDE}
\label{sec:numexp:hjb_ou_20d}

We next consider a semilinear PDE studied in \cite{han26}.
\begin{equation}
    \partial_t u(t,x)
    + \frac{1}{2}\Delta u(t,x)
    + x\cdot \nabla u(t,x)
    - \frac{1}{2}|\nabla u(t,x)|^2
    - d = 0,
    \qquad (t,x)\in [0,T)\times \mathbb{R}^d,
\label{eq:hjb_ou_20d_numexp}
\end{equation}
with terminal condition $
    u(T,x) = g(x) := -\log p_0(x)$. Here $p_0$ is the density of a five-component Gaussian mixture
\begin{equation}\label{eq:gmm_terminal_numexp}
    p_0(x) = \sum_{k=1}^{5}\pi_k\,\mathcal{N}(x;\mu_0^{(k)},\Sigma_0^{(k)}),
\end{equation}
where the means $\mu_0^{(k)}$ are sampled uniformly from $[-1,1]^d$, the covariance matrices are diagonal with $\Sigma_0^{(k)} = 2I_d$, and the weights $\pi_k$ are positive and normalized with $\sum_{k=1}^5 \pi_k=1$.

This equation has differential operator $\mathcal L u = \frac{1}{2}\Delta u$, which corresponds to the SDE
\begin{equation}\label{eq:sde for hjb}
    \dd X_t =    \dd W_t, \quad X_0 = \xi,
\end{equation}
and the nonlinear source term is $f(t,x,\nabla u) = x\cdot \nabla u(t,x) -\frac{1}{2}|\nabla u(t,x) |^2 - d$.

Then we can find the exact solution $ u^*$ of the PDE \eqref{eq:hjb_ou_20d_numexp} as\footnote{This is because we know the density $p(t,x)$ of $\hat X_t$, where $ \dd \hat X_t = -\hat X_t \dd t + \dd W_t$,  satisfies the Fokker--Planck equation $\partial_t p = \nabla\cdot(xp) + \frac{1}{2}\Delta p$. And we can verify that $ u(t,x) = -\log p(T- t,x)$ satisfies PDE \eqref{eq:hjb_ou_20d_numexp}. Hence, if $ \hat X_t$ has the initial distribution $p_0(x)$, we can deduce its distribution time-$t$ distribution $p(t,x)$.}
\begin{equation}
    u^\ast(t,x) = -\log p(T-t,x),
\label{eq:exact_solution_numexp}
\end{equation}
where
\begin{equation}\label{eq:gmm_time_evolved_numexp}
    p(t,x) = \sum_{k=1}^{5}\pi_k\,\mathcal{N}(x;\mu_{t}^{(k)},\Sigma_{t}^{(k)}),
\end{equation}
with
\begin{equation}
    \mu_{t}^{(k)} = e^{-t}\mu_0^{(k)},
    \qquad
    \Sigma_{t}^{(k)} = 2e^{-2t}I_d + \frac{1-e^{-2t}}{2}I_d. \label{eq:gmm_time_evolved_moments_numexp}
\end{equation}

In our experiment, we set $
d=20,T=1$. The above configuration is identical to that in \citet{han26} except that we lower the dimension to $d=20$. This is because we  also need to evaluation and compare the accuracy of the learned Hessian, which has an output dimension $d^2$, whereas \citet{han26} only evaluate the gradient whose output dimension is $d$. For the SDE \eqref{eq:sde for hjb} used to generate training and test datasets, we set $ \xi \sim \mathcal{N}(0,4 I_d)$ as in \citet{han26}.

For a fair comparison, both ZOD and the DPI baseline use the same neural network framework from \cite{han26}, which explicitly enforces the terminal condition. Denote
\begin{equation}
    u_\theta(t,x)
    =
    (r_\eta(T - t) - r_0(0))\,\langle N_\gamma(T-t,x),x\rangle
    +
    \bigl(1-r_\eta(T-t) + r_{\eta}(0)\bigr)\,g\!\left(e^{-(T-t)/2}x\right),
    \label{eq:pisgradnet_ansatz}
\end{equation}
where $N_\gamma(t,x)\in\mathbb{R}^d$ and $r_\eta(t) \in \R^+$ are neural networks.
Consequently,
$u_\theta(T,x)=g(x)$.
%The essential difference between the two solvers therefore lies not in the value-network ansatz, but in how derivative information is learned and controlled.

Note that we compare our ZOD approach with DPI in \cite{han26} on the $20$-dimensional PDE \eqref{eq:hjb_ou_20d_numexp}, using the same Gaussian-mixture terminal density \eqref{eq:gmm_terminal_numexp} and the same network  \eqref{eq:pisgradnet_ansatz} for the value function. This isolates the main algorithmic difference: how derivative information is represented and controlled over iterations.

For ZOD, we use the three-network model,
where the value network uses the parameterization \eqref{eq:pisgradnet_ansatz} and the gradient and Hessian networks are 4-layer MLPs with width $512$ and ELU activations. We use the Adam optimizer with a learning rate $10^{-3}$, batch size $16384$, $10$ value iterations, and $512$ gradient updating steps per iteration. The derivative networks are pretrained for $5000$ steps. The bandwidth of ZOD is $\epsilon=0.02$.

For DPI, we use the configuration in \cite{han26}. Specifically, the value network is again \eqref{eq:pisgradnet_ansatz} with the same 4 hidden layers of width $512$ and ELU activations. DPI uses $10$ outer Picard iterations, batch size $512$, data size $4096$, and $16$ epochs per outer iteration, with the Adam optimizer and a learning rate $10^{-3}$. Gradient supervision is enabled with the same weight as in \cite{han26}. Moreover,  second-order information is obtained by differentiating the learned scalar value approximation.

Figure~\ref{fig:hjb_ou_20d_zod_vs_dpi} reports the rRMSE of the learned value, gradient, and Hessian over iterations for the problem \eqref{eq:hjb_ou_20d_numexp}. The curves show the means over three random seeds, and the shaded regions indicate the empirical $2.5\%$--$97.5\%$ quantile range across seeds. We see that our ZOD approach converges faster than DPI in all the three quantities, and eventually achieves slightly smaller errors in terms of the value and gradient after sufficient numbers of iterations. However, the performance differs dramatically in terms of the Hessian accuracy. Over iterations, DPI barely reduces the error while ZOD effectively decreases it although the terminal error tends to be larger than its gradient and value counterparts.

\begin{figure}[h]
    \centering
    \includegraphics[width=\textwidth]{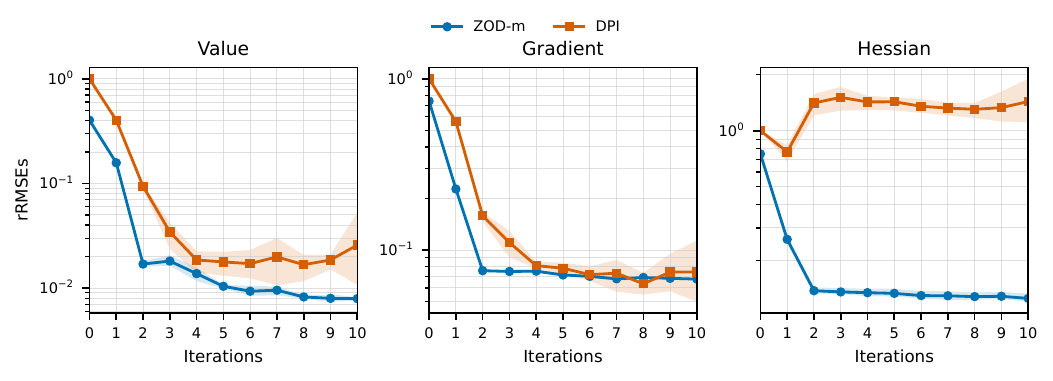}
    \caption{\textbf{rRMSE of the learned value, gradient, and Hessian over iterations} for the PDE \eqref{eq:hjb_ou_20d_numexp} with dimension $d=20$. ZOD-m and DPI use the same network \eqref{eq:pisgradnet_ansatz} for value function.}
    \label{fig:hjb_ou_20d_zod_vs_dpi}
\end{figure}

Table~\ref{tab:hjb_ou_20d_final} summarizes the corresponding mean final errors at the $10$ value (Picard) iterations. The two methods produce comparable approximations of the value function and its gradient. ZOD achieves a smaller rRMSE, while the gradient errors of the two are similar.
However, the final
Hessian rRMSE of ZOD is $1.252\times10^{-1}$, whereas DPI gives $1.434$, differing by more than 10 times.
This discrepancy is a result  of how the Hessian is obtained.
In DPI, the Hessian is not learned directly; rather  it is obtained by applying automatic differentiation twice on the learned value
network. Since the training loss
does not explicitly control the second-order derivative, the resulting Hessian
need not be accurate even when the value function and its gradient are
reasonably well approximated.
%This behavior is visible in Figure~\ref{fig:hjb_ou_20d_zod_vs_dpi}: under the present configuration, the DPI Hessian error remains at a relatively large level across the iterations.
By contrast, the proposed ZOD method augments the learning procedure with a dedicated Hessian network and a second-order zeroth-order objective. This
second-order training signal directly targets the Hessian, leading to a substantially more accurate approximation. Finally, it is notable that ZOD spends about 30\% less time obtaining these results.

\begin{table}[h]
    \centering
    \caption{\textbf{rRMSEs for the semilinear PDE \eqref{eq:hjb_ou_20d_numexp}}. The results are based on $10$ value (Picard) iterations.}
    \label{tab:hjb_ou_20d_final}
    \small
    \begin{tabular}{lcccc}
        \toprule
        {Method} & {Value Error} & {Gradient Error} & {Hessian Error} & {Time (s)} \\
        \midrule
        DPI & $2.576\times10^{-2}$ & $7.429\times10^{-2}$ & $1.434\times10^{0}$ & $1301$ \\
        ZOD-m & $7.921\times10^{-3}$ & $6.757\times10^{-2}$ & $1.252\times10^{-1}$ & $1025$ \\
        \bottomrule
    \end{tabular}
\end{table}

\subsection{A Fully Nonlinear PDE}\label{sec:fully nonlinear}

We next consider a fully nonlinear PDE studied in \cite{han26}:
\begin{equation}\label{eq:fully_nonlinear_benchmark}
    \partial_t u(t,x)
    +\frac{1}{2}\Delta u(t,x)
    +\frac{1}{4}\sum_{i=1}^{d}
    \left|\frac{\partial^2 u}{\partial x_i^2}(t,x)\right|
    -h(t,x)=0,
    \qquad (t,x)\in[0,T)\times\mathbb R^d .
\end{equation}
The form of this PDE is reverse-engineered from the exact solution
\begin{equation}\label{eq:fullynonlinear exact solution}
    u^\star(t,x)=\sum_{j=1}^{J} v_j
    \sin\!\left(t+\sum_{i=1}^{d} w_i^j x_i\right),
\end{equation}
with
\begin{equation}
    h(t,x)=
    \partial_t u^\star(t,x)
    +\frac{1}{2}\Delta u^\star(t,x)
    +\frac{1}{4}\sum_{i=1}^{d}
    \left|\frac{\partial^2 u^\star}{\partial x_i^2}(t,x)\right|.
\end{equation}
The PDE \eqref{eq:fully_nonlinear_benchmark} has differential operator $\mathcal L u = \frac{1}{2}\Delta u$, and the source term $f(t,x,\nabla^2 u) = \frac{1}{4}\sum_{i=1}^{d}
\left|\frac{\partial^2 u}{\partial x_i^2}(t,x)\right|-h(t,x)$. Hence, it corresponds to the SDE
$\dd X_t = \dd W_t$.

We use $d=20$, $T=1.0$, $J=2$ and set $ X_0 = 0$. The parameters in \eqref{eq:fullynonlinear exact solution} are sampled once and then fixed throughout the experiment:
\[
    w_i^j\sim d^{-1/2}\mathcal N(0,1),
    \qquad
    v_j\sim\mathcal N(0,1).
\]

For the DPI baseline, we use a
configuration with data size $4096$, integral sample size $1024$, three hidden
layers of width $128$, $16$ epochs per iteration, and in total $40$ iterations.

For ZOD, we use the three-network model with $3$ hidden layers of width $64$,
ELU activations, ZOD bandwidth $\epsilon=0.05$. The derivative networks are pretrained for
$5000$ steps. To ensure  a fair comparison, we control the computational time of ZOD to be comparable to that of DPI. To this end, in our implementation of ZOD, we use batch size
$32768$ and $4096$ gradient steps per value iteration.

%Alternatively, to see  and a more accurate ZOD configuration with the same batch size and $8192$ gradient steps.  Both ZOD configurations use $10$ outer value iterations.

Figure~\ref{fig:fn_case1_iter_curve} compares the time-matched ZOD run with the
$40$-iteration DPI run.  The rapid decay of the ZOD errors in the first few
value (Picard) iterations is consistent with the contraction estimate in
Theorem~\ref{thm: global_convergence}.  After a small number of Picard
iterations, the curves approach stable accuracy levels.  This behavior agrees
with the structure of \eqref{eq: global_bound}: once the one-step learning error
has been reduced to a fixed floor, the geometrically weighted accumulation term
limits the additional gain from further iterations.  Meanwhile, DPI decreases more
gradually under the reported configuration.  In particular, directly learning
the Hessian through a second-order ZOD target gives a substantially lower
second-order error than differentiating a learned scalar value approximation. Table~\ref{tab:fully-nonlinear-final-errors} further reports the final errors and
runtimes.  ZOD  is more accurate than DPI for all the
three quantities, and the gap is the largest for the Hessian.  %Increasing the ZOD training budget from $4096$ to $8192$ gradient steps further lowers the gradient and Hessian rRMSEs at a moderate additional computational cost.

\begin{figure}[!htbp]
\centering
\includegraphics[width=\linewidth]{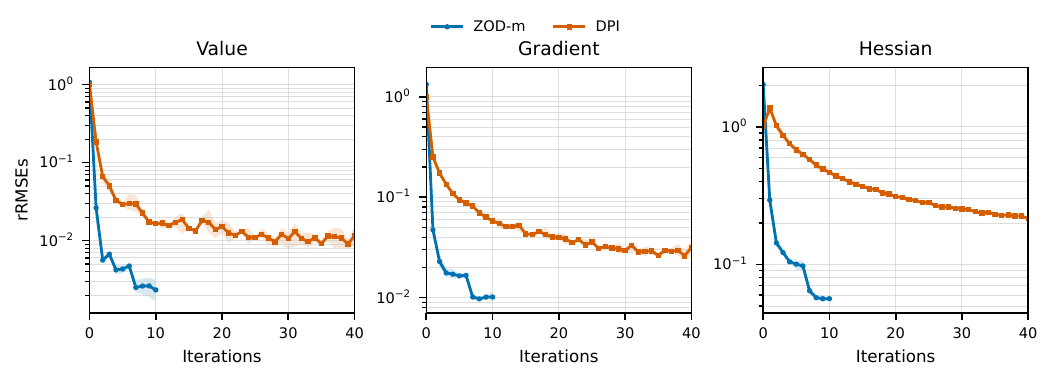}
\caption{\textbf{rRMSE of the learned value, gradient, and Hessian over iterations} for the fully nonlinear PDE \eqref{eq:fully_nonlinear_benchmark} with
$d=20$. The panels show value, gradient, and Hessian rRMSEs over
iterations for the time-matched ZOD configuration and DPI. Curves are
averaged over three random seeds, and shaded regions indicate empirical
$2.5\%$--$97.5\%$ quantile bands.}
\label{fig:fn_case1_iter_curve}
\end{figure}

\begin{comment}
Figure~\ref{fig:fully-nonlinear-10step-comparison} compares the first $10$
iterations of DPI with the two ZOD configurations.  The $4096$-step ZOD
run is the time-matched configuration, while the $8192$-step run represents the
more accurate ZOD configuration.  Increasing the number of gradient steps
reduces the one-step learning error in Theorem~\ref{thm: one step error bound}
and leads to a lower final error floor in Theorem~\ref{thm: global_convergence}.
Even under the time-matched budget, ZOD gives substantially smaller derivative
errors than DPI, with the clearest improvement appearing in the Hessian.

\begin{figure}[!htbp]
\centering
\includegraphics[width=\linewidth]{Figures/fully_nonlinear_10step_comparison.pdf}
\caption{Ten-iteration comparison for the fully nonlinear benchmark
\eqref{eq:fully_nonlinear_benchmark}. The panels show value, gradient, and
Hessian rRMSE histories for DPI, the time-matched ZOD run with $4096$ gradient
steps per iteration, and the more accurate ZOD run with $8192$ gradient
steps per iteration.}
\label{fig:fully-nonlinear-10step-comparison}
\end{figure}
\end{comment}

\begin{table}[!htbp]
\centering
\caption{\textbf{rRMSE for the fully nonlinear PDE
\eqref{eq:fully_nonlinear_benchmark}}. The results of ZOD are based on 10 value iterations,
while DPI is based on 40 iterations.}
\label{tab:fully-nonlinear-final-errors}
\small
\begin{tabular}{lcccc}
\toprule
{Method} & {Value Error} & {Gradient Error} & {Hessian Error} & {Time (s)} \\
\midrule
DPI & $1.155\times10^{-2}$ & $3.167\times10^{-2}$ & $2.160\times10^{-1}$ & $1553$ \\
ZOD-m & $2.333\times10^{-3}$ & $1.015\times10^{-2}$ & $5.649\times10^{-2}$ & $1886$ \\
\bottomrule
\end{tabular}
\end{table}

Finally, we examine two controlled ablations for our ZOD approach.
Table~\ref{tab:fully-nonlinear-ablation-summary} summarizes the impact of
training budget, batch size, and availability of the strong simulator.  In the training-budget panel, we vary the number of gradient
steps while fixing the batch size at $32768$ and  the bandwidth at
$\epsilon=0.05$.  In the batch-size panel, we
vary the batch size while fixing the training budget at $4096$ steps and the
bandwidth at $\epsilon=0.05$.  These two
panels demonstrate the expected monotone improvement as the training
budget and batch size increase,  consistent with the theoretical decomposition in Theorem \ref{thm: one step error bound}. Specifically, increasing the number of gradient steps reduces the optimization error in training the empirical loss (although we did not include the optimization error in Theorem~\ref{thm: one step error bound}). On the other hand, the batch-size ablation aligns directly with the theorem: a larger batch size decreases the statistical error term in the one-step bound, and therefore is expected to reduce the overall error. These results are also illustrated in Figure \ref{fig:fully-nonlinear-ablation-studies}.

The last panel of Table~\ref{tab:fully-nonlinear-ablation-summary} shows the effect of the strong simulator availability when everything else is the same. As predicted by the theoretical results, access to the strong simulator boosts performance.  In Appendix \ref{appendix:addition zod-1}, we further examine the performance of ZOD when only the weak simulator is available. Overall, ZOD-1 is slightly worse than ZOD-m, and  a small bandwidth is not always preferred with the former.

\begin{table}[!htbp]
\centering
\caption{\textbf{ZOD ablations for the fully nonlinear PDE
\eqref{eq:fully_nonlinear_benchmark}}. Each row reports the mean final rRMSE over
three random seeds.}
\label{tab:fully-nonlinear-ablation-summary}
\scriptsize
\resizebox{\textwidth}{!}{%
\begin{tabular}{lcccccccc}
\toprule
Setting & \tabincell{c}{Gradient\\Steps} & \tabincell{c}{Batch\\Size} & \tabincell{c}{Bandwidth\\$\epsilon$} & \tabincell{c}{Strong\\Simulator} & \tabincell{c}{Value\\Error} & \tabincell{c}{Gradient\\Error} & \tabincell{c}{Hessian\\Error} & \tabincell{c}{Time\\(s)} \\
\midrule
\multirow{3}{*}{\shortstack[l]{Varying\\training steps}} & 2048 & 32768 & 0.05 & \cmark & $3.601\times10^{-3}$ & $1.336\times10^{-2}$ & $7.736\times10^{-2}$ & $1006$ \\
 & 4096 & 32768 & 0.05 & \cmark & $2.333\times10^{-3}$ & $1.015\times10^{-2}$ & $5.649\times10^{-2}$ & $1886$ \\
 & 8192 & 32768 & 0.05 & \cmark & $2.032\times10^{-3}$ & $7.518\times10^{-3}$ & $4.083\times10^{-2}$ & $3665$ \\
\midrule
\multirow{3}{*}{\shortstack[l]{Varying\\batch size}} & 4096 & 16384 & 0.05 & \cmark & $3.016\times10^{-3}$ & $1.140\times10^{-2}$ & $6.713\times10^{-2}$ & $1061$ \\
 & 4096 & 32768 & 0.05 & \cmark & $2.333\times10^{-3}$ & $1.015\times10^{-2}$ & $5.649\times10^{-2}$ & $1886$ \\
 & 4096 & 65536 & 0.05 & \cmark & $1.997\times10^{-3}$ & $7.607\times10^{-3}$ & $4.575\times10^{-2}$ & $3636$ \\
\midrule
\multirow{2}{*}{\shortstack[l]{Strong simulator\\availability}} & 2048 & 65536 & 0.25 & \xmark & $6.398\times10^{-3}$ & $2.781\times10^{-2}$ & $6.192\times10^{-1}$ & $1330$ \\
 & 2048 & 65536 & 0.25 & \cmark & $3.687\times10^{-3}$ & $1.205\times10^{-2}$ & $5.178\times10^{-2}$ & $1819$ \\
% \midrule
% \multirow{4}{*}{\shortstack[l]{Varying \\bandwidth}} & 2048 & 65536 & 0.1 & \xmark & $3.854\times10^{-2}$ & $4.571\times10^{-2}$ & $2.320\times10^{0}$ & $1328$ \\
%  & 2048 & 65536 & 0.2 & \xmark & $9.834\times10^{-3}$ & $3.168\times10^{-2}$ & $9.591\times10^{-1}$ & $1326$ \\
%  & 2048 & 65536 & 0.25 & \xmark & $6.398\times10^{-3}$ & $2.781\times10^{-2}$ & $6.192\times10^{-1}$ & $1330$ \\
%  & 2048 & 65536 & 0.5 & \xmark & $4.255\times10^{-3}$ & $3.697\times10^{-2}$ & $2.703\times10^{-1}$ & $1327$ \\
% \midrule
% \multirow{3}{*}{\shortstack[l]{Varying \\batch size}} & 2048 & 65536 & 0.25 & \xmark & $6.398\times10^{-3}$ & $2.781\times10^{-2}$ & $6.192\times10^{-1}$ & $1330$ \\
%  & 2048 & 131072 & 0.25 & \xmark & $4.896\times10^{-3}$ & $2.481\times10^{-2}$ & $4.939\times10^{-1}$ & $2512$ \\
%  & 2048 & 262144 & 0.25 & \xmark & $5.000\times10^{-3}$ & $1.982\times10^{-2}$ & $3.955\times10^{-1}$ & $4880$ \\
\bottomrule
\end{tabular}
}
\end{table}

\begin{figure}[!htbp]
\centering
\includegraphics[width=0.8\linewidth]{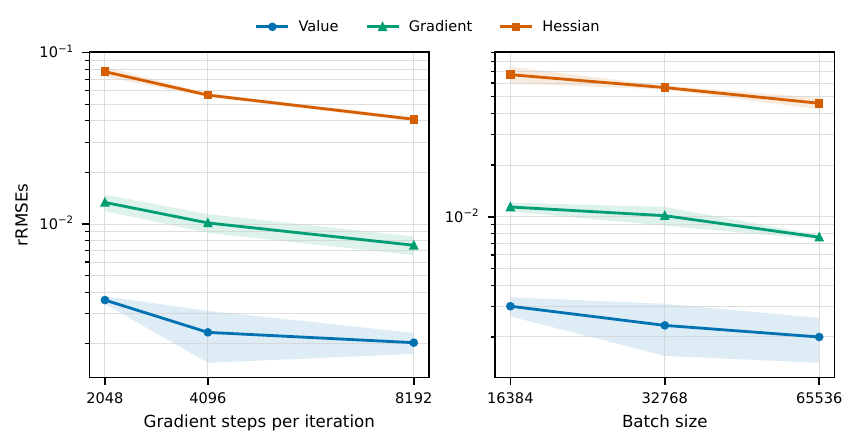}
\caption{\textbf{Ablations for the fully nonlinear PDE
\eqref{eq:fully_nonlinear_benchmark}.} All curves report the mean rRMSE over
3 seeds, and shaded regions indicate one empirical standard deviation.}
\label{fig:fully-nonlinear-ablation-studies}
\end{figure}

\section{Concluding Remarks}\label{conclusions}

In this paper we develop a data-driven learning method for solving a broad class of potentially high-dimensional, fully nonlinear black-box PDEs. The thrust of our analysis is to represent the derivatives of the solutions via ZODs and then learn them in the same way as learning the solutions, rather than outright differentiating the learned solutions. Thus, the learning ultimately corresponds to policy evaluation \citep{jia2022policy} in continuous-time reinforcement learning (RL). Indeed, we solve the nonlinear PDEs through iterations each of which is a linear PDE and hence a policy evaluation problem. This idea further inspires us to formulate learning the gradient and Hessian also as policy evaluation problems.\footnote{It should be noted that although the PDEs studied in this paper include {\it in form} the class of HJB equations as a special case, our assumption that the value of the nonlinear source term is known for each given input excludes the applicability to HJB in the model-free RL setting.
However, \cite{jia2022pg,jia2022q} decompose the RL into policy evaluation and policy improvement/q-learning, and show that the latter can be reformulated as policy evaluation. In this sense, black-box HJB equations can also be solved essentially by policy evaluation, albeit in a very different way from this paper. }

The approach developed in this paper hints on an important implication in the study of data-driven methodologies in machine learning: the design of algorithms should be tailored to the availability of the data-generating mechanisms or the properties of simulators. With strong simulators similar to the one introduced in this paper, various variance reduction tricks in simulations may be employed to improve the efficiency of the algorithms. This opens a gate to further studying its potential in other tasks.

% Acknowledgements and Disclosure of Funding should go at the end, before appendices and references

\acks{
%All acknowledgements go at the end of the paper before appendices and references. Moreover, you are required to declare funding (financial activities supporting the submitted work) and competing interests (related financial activities outside the submitted work). More information about this disclosure can be found on the JMLR website.
Huy\^en  Pham is supported by the Soci\'et\'e G\'en\'erale Chair ``Risques Financiers", FiME (Laboratory of Finance and Energy Markets), and the EDF–CACIB Chair ``Finance and Sustainable Development''. Yanwei Jia is supported by the University Start-up Fund at The Chinese University of Hong Kong. Xun Yu Zhou is supported by the Nie Center for Intelligent Wealth Management at Columbia. The authors thank Xuefeng Gao, Chun Liu, and Jiale Zha for discussions on the contents of this paper. 
}

% Manual newpage inserted to improve layout of sample file - not
% needed in general before appendices/bibliography.

\newpage

\appendix
\section{A Sufficient Condition for Assumption \ref{assum: contraction}}\label{appendix: sufficient condition for contraction}

We provide a concrete sufficient condition for Assumption \ref{assum: contraction}.

Consider the operator $ \cL v :=  \Delta v - \nabla h \cdot\nabla v$ that corresponds to the Langevin SDE:
\begin{equation}
    \dd X_t = -\nabla h(X_t) \dd t + \sqrt{2}\dd W_t.
\end{equation}
Consider the norm defined in Section \ref{sec:convergence analysis} with a weighting parameter $\beta$ to be specified later:
\begin{equation}
    \|v\|^2_{\beta,\textrm{mix}}: = \beta \|v\|_{\beta}^2 + \beta\|\nabla v\|_{\beta}^2 + \|\nabla^2 v\|_{\beta}^2.
\end{equation}
For a triplet $\mathbf U=(V,G,H)$, we denote the corresponding norm as
\begin{equation}
    \|\mathbf U\|^2_{\beta,\textrm{mix}}: = \beta \|V\|_{\beta}^2 + \beta\|G\|_{\beta}^2 + \|H\|_{\beta}^2,
\end{equation}
and denote the solution mapping as
\[\mathbf U \mapsto v = \mathcal S(\mathbf U),\text{ such that } \partial_t v + \cL v +f(x,V,G, H) = 0,\;\;v(T,x)=g(x) .  \]
For any two input triplets $\mathbf U_i=(V_i,G_i,H_i)$, $i=1,2$. Denote $ \delta f := f(x,\mathbf{U}_1) - f(x,\mathbf{U}_2)$, $ v := \mathcal{S}(\mathbf{U}_1) - \mathcal{S}(\mathbf{U}_2)$. Then $v$ satisfies the linear PDE
\begin{equation}\label{eq: linear pde langevin}
    \partial_t v + \cL v = -\delta f, \quad v(T,x) = 0.
\end{equation}
We aim to prove the contraction property through the bound
\begin{equation}
    \|v\|_{\beta,\textrm{mix}} \le \gamma\|\mathbf U_1-\mathbf U_2\|_{\beta,\textrm{mix}}
\end{equation}
for some constant $\gamma<1$. 
We need the following assumption in this section.
\begin{assumption}\label{assum: langevin contraction setting}
\begin{enumerate}[(i)]
    \item \label{item: smooth confining potential} $ h \in C^2_p(\R^d)$, the eigenvalues of $\nabla^2h$ have a uniform lower bound $-k$ for some $k\ge0$, and
    \begin{equation}\label{eq: increase condition}
        \lim_{|x|\to\infty}\frac{h(x)}{\log |x|}=\infty.
    \end{equation}
    \item The nonlinearity $f$ is Lipschitz in the value, gradient, and Hessian variables:
    \begin{equation}
        |f(x,V,G, H) - f(x,V',G', H')| \le L_v |V-V'| + L_g|G - G'| + L_h|H - H'|.
    \end{equation}
    \item  $v:= \mathcal{S}(\mathbf{U}_1)-\mathcal{S}(\mathbf{U}_2) \in C^{1,3}_p([0,T]\times\mathbb R^d)$.
\end{enumerate}
\end{assumption}

Under Assumption \ref{assum: langevin contraction setting} \eqref{item: smooth confining potential}, the potential $h$ is a smooth confining potential, and the Langevin dynamics admits the invariant density, see, e.g., \cite[Proposition 4.2]{pavliotis2014Stochastic},
    \begin{equation}\label{eq: density of stationary dist}
        p(x) :=\frac{1}{C}\exp\left\{ -h(x)\right\},
    \end{equation}
    where $C:=\int_{\R^d}\exp\{-h(x)\}\dd x<\infty$. 
    
    From now on, we take $X_0\sim p(\cdot)$, and then $X_t\sim p(\cdot)$ for all $t\ge0$. Consequently, the norms have the following explicit form:
    \begin{equation}
    \|v\|_{L_2(p)}^2 := \int_{\R^d} |v(t,x)|^2p(x)\dd x, \quad\|v\|_{\beta}^2 := \int_0^T e^{\beta t}\int_{\R^d} |v(t,x)|^2p(x)\dd x\dd t.
    \end{equation}

The following theorem provides a sufficient condition for the contraction property used in the main text.

\begin{theorem}\label{thm: langevin sufficient contraction}
    Suppose Assumption \ref{assum: langevin contraction setting} holds. If $12L_h^2<1$, then there exists a sufficiently large $\beta\ge k$ such that
    \begin{equation}\label{eq: eta beta sufficient condition}
        \eta_\beta
        :=12\left((L_v^2+L_g^2)\frac{\beta+1}{\beta^2}
        +L_h^2\frac{\beta+1}{\beta}\right)<1,
    \end{equation}
    and set $\gamma=\sqrt{\eta_\beta}$. Then
    \begin{equation}
        \|\mathcal S(\mathbf U_1)-\mathcal S(\mathbf U_2)\|_{\beta,\textrm{mix}}
        \le \gamma \|\mathbf U_1-\mathbf U_2\|_{\beta,\textrm{mix}}.
    \end{equation}
That is, Assumption \ref{assum: contraction} holds.
\end{theorem}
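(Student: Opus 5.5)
We need to bound each component of $\|v\|_{\beta,\mathrm{mix}}$ for the solution $v$ of \eqref{eq: linear pde langevin} by $\|\delta f\|_\beta$. The natural route is weighted energy estimates in $L_2(p)$. The starting point is that $\cL$ is symmetric in $L_2(p)$ with $\int \varphi\,\cL\psi\, p = -\int \nabla\varphi\cdot\nabla\psi\, p$. Polynomial growth of $v$ (Assumption \ref{assum: langevin contraction setting}(iii)) together with the growth condition \eqref{eq: increase condition} on $h$ should justify the integrations by parts, since $p$ decays faster than any polynomial.

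\textbf{Step 1 (value).} Multiply the PDE by $v e^{\beta t}p$, integrate over $x$, and integrate over $t\in[0,T]$. Using $v(T)=0$, this gives
\[
\tfrac12\|v(0)\|^2_{L_2(p)}+\tfrac{\beta}{2}\|v\|_\beta^2+\|\nabla v\|_\beta^2=\int_0^T e^{\beta t}\!\int v\,\delta f\,p.
\]
Applying Young's inequality then yields $\beta\|v\|_\beta^2+2\|\nabla v\|_\beta^2\lesssim \beta^{-1}\|\delta f\|_\beta^2$.

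\textbf{Step 2 (gradient).} Differentiate the PDE in $x_i$. The commutator is $\partial_i\cL v=\cL\partial_i v-\sum_j\partial_{ij}h\,\partial_j v$, so $w=\nabla v$ solves
\[
\partial_t w+\cL w-\nabla^2h\, w=-\nabla\delta f .
\]
To avoid differentiating $\delta f$, I will test the original equation against $-\cL v\, e^{\beta t}p$ instead. Using the symmetry, $\int \partial_t v\,(-\cL v)p=\tfrac12\partial_t\|\nabla v\|^2_{L_2(p)}$. The Bochner-type identity for the weighted Laplacian gives
\[
\|\cL v\|^2_{L_2(p)}=\|\nabla^2v\|^2_{L_2(p)}+\int \nabla v^\top\nabla^2h\,\nabla v\,p\ \ge\ \|\nabla^2 v\|^2_{L_2(p)}-k\|\nabla v\|^2_{L_2(p)}.
\]
Combining, we get
\[
\tfrac{\beta}{2}\|\nabla v\|^2_\beta+\|\cL v\|_\beta^2\le \tfrac12\|\cL v\|_\beta^2+\tfrac12\|\delta f\|_\beta^2,
\]
hence $\beta\|\nabla v\|_\beta^2+\|\nabla^2 v\|_\beta^2\le \|\delta f\|_\beta^2+k\|\nabla v\|_\beta^2$. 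With $\beta\ge k$ the $k$-term can be absorbed, using the Step 1 bound on $\|\nabla v\|_\beta^2$ if necessary. This gives roughly $\beta\|\nabla v\|^2_\beta+\|\nabla^2v\|_\beta^2\lesssim \frac{\beta+1}{\beta}\|\delta f\|_\beta^2$ and $\beta\|v\|_\beta^2\lesssim\beta^{-1}\|\delta f\|_\beta^2$.

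\textbf{Step 3 (Lipschitz bound).} By Assumption \ref{assum: langevin contraction setting}(ii) and $(a+b+c)^2\le 3(a^2+b^2+c^2)$,
\[
\|\delta f\|_\beta^2\le 3\big(L_v^2\|\delta V\|_\beta^2+L_g^2\|\delta G\|_\beta^2+L_h^2\|\delta H\|_\beta^2\big).
\]
Rewriting in terms of $\beta\|\delta V\|^2_\beta$, $\beta\|\delta G\|_\beta^2$ and $\|\delta H\|_\beta^2$ gives factors $(L_v^2+L_g^2)/\beta$ and $L_h^2$. Multiplying by the $\frac{\beta+1}{\beta}$ prefactors from Steps 1–2 and tracking the constants produces the form $\eta_\beta$ in \eqref{eq: eta beta sufficient condition}; the factor 12 should come from the 3 in the Lipschitz bound times a factor 4 from the Young splits. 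As $\beta\to\infty$, $\eta_\beta\to 12L_h^2<1$, so a large $\beta\ge k$ exists.

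\textbf{Main obstacle.} The main obstacle is Step 2: rigorously justifying the weighted Bochner identity and the boundary terms at infinity. I would handle this with a cutoff $\chi_R$, using $v\in C^{1,3}_p$ and the superlogarithmic growth of $h$ to let $R\to\infty$. A further difficulty is bookkeeping the constants so that exactly $12$ appears.
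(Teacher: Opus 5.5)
Your proof is correct and uses the same ingredients as the paper: the $L_2(p)$-symmetry of $\cL$, the weighted Bochner identity, $e^{\beta t}$-weighted energy estimates, and the factor $3$ from the Lipschitz bound. Your Step 1 is just the PDE form of the paper's It\^o computation for $e^{\beta t}v(t,X_t)^2$.

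The one genuine difference is the test function in the second estimate. The paper tests with $\partial_t v$ to get $\|\partial_t v\|_\beta^2+\beta\|\nabla v\|_\beta^2\le\|\delta f\|_\beta^2$. It then writes $\cL v=-\partial_t v-\delta f$ and uses $\|\cL v\|_\beta^2\le 2\|\partial_t v\|_\beta^2+2\|\delta f\|_\beta^2$ to reach $\|\nabla^2 v\|_\beta^2\le 4\|\delta f\|_\beta^2-(2\beta-k)\|\nabla v\|_\beta^2$. The negative term absorbs the $\beta\|\nabla v\|_\beta^2$ of the mixed norm once $\beta\ge k$, without using Step 1's gradient bound. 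This $4$, times the Lipschitz $3$, is where the $12$ comes from.

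Testing with $-\cL v$ instead gives $\beta\|\nabla v\|_\beta^2+\|\cL v\|_\beta^2\le\|\delta f\|_\beta^2$ with no such loss. The price is that you need Step 1, in the form $\frac{\beta}{4}\|v\|_\beta^2+\|\nabla v\|_\beta^2\le\beta^{-1}\|\delta f\|_\beta^2$, to control $k\|\nabla v\|_\beta^2$. Adding the two estimates gives
\[
\|v\|_{\beta,\mathrm{mix}}^2\le\Bigl(1+\tfrac{\max\{k,4\}}{\beta}\Bigr)\|\delta f\|_\beta^2 .
\]

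So your route will not reproduce the constant $12$, and your remark attributing a factor $4$ to Young splits does not apply to it. It does not need to: $3\bigl(1+\max\{k,4\}/\beta\bigr)\le 12(1+1/\beta)$ for $\beta\ge k$. Your bound therefore implies the stated one with $\gamma=\sqrt{\eta_\beta}$, and in fact gives contraction under the weaker condition $3L_h^2<1$. Just state that your coefficient is dominated by $\eta_\beta$.

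The regularity needed for $\int\partial_t v\,\cL v\,p=-\frac12\frac{\dd}{\dd t}\|\nabla v\|_{L_2(p)}^2$ (existence of $\nabla\partial_t v$) is the same as in the paper's Lemma~\ref{lemma: partial t bound}. So that is not an extra gap in your argument.
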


The rest of this appendix is devoted to proving Theorem \ref{thm: langevin sufficient contraction}. We first analyze the norm of the value and the gradient. The following lemma holds in a general diffusion setting, where $\mathcal L$ is the infinitesimal generator of the diffusion, $ \mathcal L v=b\cdot\nabla v+\frac{1}{2}\operatorname{Tr}\left(\sigma\sigma^{\top}\nabla^2 v\right)$.

\begin{lemma}
    Suppose Assumption \ref{assum: langevin contraction setting} holds. Assume that $\sigma\sigma^{\top}$ is uniformly elliptic, i.e., there is a constant $\lambda >0 $, such that $x^{\top} \sigma\sigma^{\top} x \ge \lambda |x|^2$. Then the value and gradient have the following upper bound:
    \begin{equation}\label{eq: value gradient bound}
    \frac{\beta}{2}\|v\|_{\beta}^2 + \lambda\|\nabla v\|_{\beta}^2 \le \frac{2}{\beta}\|\delta f\|_{\beta}^2.
\end{equation}
\end{lemma}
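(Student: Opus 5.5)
The plan is to apply It\^o's formula to the weighted square $e^{\beta t} v(t,X_t)^2$ along the diffusion $X$, whose time-$t$ density is $p(t,\cdot)$ (equal to the stationary $p$ in the Langevin case, though the argument does not need this). Then I take expectations so that the PDE \eqref{eq: linear pde langevin} turns the drift into the source term $-\delta f$. I write $a:=\sigma\sigma^\top$ for the diffusion matrix of the generator $\mathcal L v=b\cdot\nabla v+\frac12\operatorname{Tr}(a\nabla^2 v)$.

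\textbf{Step 1 (It\^o expansion).} Since $v\in C^{1,3}_p$ by Assumption \ref{assum: langevin contraction setting}(iii), It\^o's formula applies:
\begin{equation}
\dd\big(e^{\beta t}v^2\big)=e^{\beta t}\Big(\beta v^2+2v(\partial_t v+\mathcal L v)+\nabla v^\top a\,\nabla v\Big)\dd t+\dd M_t .
\end{equation}
Here $M$ is the stochastic integral $\int e^{\beta s}\,2v\,\nabla v^\top\sigma\,\dd W_s$. Substituting $\partial_t v+\mathcal L v=-\delta f$ replaces the middle drift term by $-2v\,\delta f$.

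\textbf{Step 2 (expectation and terminal condition).} I integrate over $[0,T]$ and use $v(T,\cdot)=0$. The martingale term $M$ is a true martingale because $v$ and $\nabla v$ grow at most polynomially, $\sigma$ is bounded, and $X$ has finite moments of all orders under Assumption \ref{assum: sde coef}. Taking expectations then gives
\begin{equation}
-\E\big[v(0,X_0)^2\big]=\int_0^T e^{\beta t}\,\E\Big[\beta v^2-2v\,\delta f+\nabla v^\top a\,\nabla v\Big](t,X_t)\,\dd t .
\end{equation}
I drop the nonpositive left-hand side and use uniform ellipticity, $\nabla v^\top a\nabla v\ge\lambda|\nabla v|^2$. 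This yields
\begin{equation}
\beta\|v\|_\beta^2+\lambda\|\nabla v\|_\beta^2\le 2\int_0^T e^{\beta t}\E\big[v\,\delta f\big]\dd t .
\end{equation}

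\textbf{Step 3 (Young's inequality).} I apply Cauchy--Schwarz in the weighted space and then Young's inequality $2xy\le \frac{\beta}{2}x^2+\frac{2}{\beta}y^2$:
\begin{equation}
2\int_0^T e^{\beta t}\E[v\,\delta f]\,\dd t\le 2\|v\|_\beta\|\delta f\|_\beta\le\frac{\beta}{2}\|v\|_\beta^2+\frac{2}{\beta}\|\delta f\|_\beta^2 .
\end{equation}
Absorbing $\frac{\beta}{2}\|v\|_\beta^2$ into the left side gives \eqref{eq: value gradient bound}.

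The only delicate point is Step 2: justifying that the local martingale has zero expectation and that all expectations are finite. I would handle this by localizing with stopping times $\tau_R=\inf\{t:|X_t|\ge R\}$. Then I pass to $R\to\infty$ by dominated convergence, which is justified by the polynomial growth of $v$, $\nabla v$ and $\delta f$ together with the moment bounds of $X$.
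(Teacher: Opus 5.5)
Your proposal is correct and follows the paper's proof essentially step for step: It\^o's formula for $e^{\beta t}v(t,X_t)^2$, the substitution $(\partial_t+\mathcal L)v=-\delta f$, the terminal condition $v(T,\cdot)=0$ with the term $-\E[v(0,X_0)^2]\le 0$ dropped, uniform ellipticity, and then Young's inequality with weights $\beta/2$ and $2/\beta$. The only differences are cosmetic: you apply Young's inequality after a weighted Cauchy--Schwarz rather than pointwise, and your localization with $\tau_R$ makes explicit the zero-mean martingale step that the paper takes for granted.
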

\begin{proof}
Denote $ Y_t:= e^{\beta t}v(t,X_t)^2$. It follows from Ito's formula that
\begin{equation}
    \begin{aligned}
        \dd Y_t = \beta e^{\beta t}v^2 \dd t + 2e^{\beta t}v(\partial_t + \mathcal{L})v \dd t + e^{\beta t}\nabla v^{\top}\sigma\sigma^{\top}\nabla v\dd t + \dd M_t,
    \end{aligned}
\end{equation}
where $ M$ is a martingale. Taking expectation on both sides and noting $ Y_T = 0, (\partial_t + \mathcal{L})v = -\delta f$, we have
\begin{equation}
    -\E[Y_0] = \E\left[\int_0^T \beta e^{\beta t}v^2 - 2e^{\beta t}v\delta f  + e^{\beta t}\nabla v^{\top} \sigma\sigma^{\top}\nabla v \dd t\right].
\end{equation}
Thus,
\begin{equation}
    \beta \|v\|_{\beta}^2 + \E\left[\int_0^Te^{\beta t}\nabla v^{\top} \sigma\sigma^{\top}\nabla v \dd t\right] = \E\left[\int_0^T2e^{\beta t}v\delta f\dd t\right] - \E[Y_0].
\end{equation}
It follows from the uniform ellipticity of $ \sigma\sigma^{\top}$ that $ \nabla v^{\top} \sigma\sigma^{\top}\nabla v \ge \lambda \|\nabla v\|^2$. This along with the inequality $ 2|v\delta f|\le \frac{\beta}{2}|v|^2 + \frac{2}{\beta}|\delta f|^2$ yields
\begin{equation}
    \beta\|v\|_{\beta}^2 + \lambda\|\nabla v\|_{\beta}^2 \le \frac{\beta}{2}\|v\|_{\beta}^2 + \frac{2}{\beta}\|\delta f\|_{\beta}^2-\E[Y_0].
\end{equation}
However,  $ Y_0\ge 0$; thus
\begin{equation}
    \frac{\beta}{2}\|v\|_{\beta}^2 + \lambda\|\nabla v\|_{\beta}^2 \le \frac{2}{\beta}\|\delta f\|_{\beta}^2.
\end{equation}
The desired result then follows.
\end{proof}

The preceding estimate controls the value component and provides an initial control on the gradient. To handle the Hessian component, we use the reversibility of the Langevin generator with respect to the stationary density.

\begin{lemma}\label{lemma: int by parts}
Suppose Assumption \ref{assum: langevin contraction setting} \eqref{item: smooth confining potential} holds. Let $r\in C^2_p(\R^d)$ and $g\in C^1_p(\R^d)$.  Then
    \begin{equation}
        \int_{\R^d} (\cL r) g(x) p(x) \dd x = -\int_{\R^d}\nabla r\cdot \nabla g p(x) \dd x.
    \end{equation}
\end{lemma}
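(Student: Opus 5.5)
The plan is to prove the identity by direct computation, using the explicit form of the stationary density $p=C^{-1}e^{-h}$ from \eqref{eq: density of stationary dist}. The key observation is that the Langevin generator can be written in divergence form with respect to $p$. Since $\nabla p=-p\nabla h$, we have
\begin{equation}
p\,\cL r = p\Delta r - p\nabla h\cdot\nabla r = \nabla\cdot(p\nabla r).
\end{equation}
The statement is then a Green's formula (integration by parts) on $\R^d$ with weight $p$:
\begin{equation}
\int_{\R^d}(\cL r)\,g\,p\,\dd x=\int_{\R^d}\nabla\cdot(p\nabla r)\,g\,\dd x=-\int_{\R^d}\nabla r\cdot\nabla g\,p\,\dd x,
\end{equation}
provided the boundary term vanishes and all integrals converge.

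To make this rigorous I would work on the ball $B_\rho$ and apply the divergence theorem there. This gives
\begin{equation}
\int_{B_\rho}(\cL r)\,g\,p\,\dd x=-\int_{B_\rho}\nabla r\cdot\nabla g\,p\,\dd x+\int_{\partial B_\rho}g\,p\,\nabla r\cdot \nu\,\dd S.
\end{equation}
This step needs only $r\in C^2$ and $g\in C^1$, together with $p\in C^2$, which follows from $h\in C^2_p$.

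The remaining step is the limit $\rho\to\infty$, which I expect to be the main obstacle. There are two things to check.

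First, integrability of both volume integrals. The integrands $\nabla r\cdot\nabla g\,p$ and $(\Delta r-\nabla h\cdot\nabla r)\,g\,p$ are bounded by polynomials in $|x|$ times $p$, using $r\in C^2_p$, $g\in C^1_p$ and $h\in C^2_p$. Condition \eqref{eq: increase condition} says $h(x)/\log|x|\to\infty$. So for every $m$ we have $h(x)\ge m\log|x|$ for large $|x|$, hence $p(x)\le C^{-1}|x|^{-m}$ eventually. Choosing $m>d+\deg$ makes every polynomial times $p$ integrable. Dominated convergence then passes both volume integrals to the limit.

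Second, the boundary term. Its absolute value is at most $|\partial B_\rho|\sup_{|x|=\rho}|g|\,|\nabla r|\,p$, which is $O\big(\rho^{d-1}\rho^{k}\rho^{-m}\big)$ for the polynomial degree $k$. This tends to zero once $m$ is large enough, which the superlogarithmic growth of $h$ again permits.

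Combining the two limits yields the identity. The lower bound $-k$ on the eigenvalues of $\nabla^2 h$ is not needed for this lemma. It only guarantees that $p$ is the invariant density, which is already assumed.
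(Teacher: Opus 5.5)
Your proof is correct and follows the paper's argument. The paper integrates the Laplacian term by parts and cancels the drift term using $\nabla p + p\nabla h = 0$, which is your divergence-form identity $p\,\cL r = \nabla\cdot(p\nabla r)$. Your truncation to balls $B_\rho$ with $p(x)\le C^{-1}|x|^{-m}$ makes rigorous the paper's parenthetical claim that the boundary terms vanish by \eqref{eq: increase condition}.

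One correction to your closing aside: the eigenvalue bound $-k$ is indeed unused in this lemma. Its actual use in the paper is the estimate $\int \nabla v^\top \nabla^2 h\, \nabla v\, p\,\dd x \ge -k\int |\nabla v|^2 p\,\dd x$ in Theorem~\ref{thm: hessian bound}, so it does more than guarantee that $p$ is the invariant density.
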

\begin{proof}
Note that $ \nabla p = \frac{1}{C}\exp\left\{ -h(x)\right\} (-\nabla h) = -p\nabla h$. Thus,
\begin{equation}\label{eq: identity for p}
    \nabla p + p\nabla h = 0.
\end{equation}
    It follows from the integration by parts (the remainder terms vanish due to \eqref{eq: increase condition}) that
    \begin{equation}
        \begin{aligned}
            \int_{\R^d}(\cL r) g(x) p(x) \dd x &= \int_{\R^d} (\Delta r - \nabla h \cdot\nabla r) g(x) p(x) \dd x\\
            & = -\int_{\R^d} \nabla r \cdot \nabla (gp) \dd x - \int_{\R^d}\nabla h \cdot\nabla r g p \dd x\\
            & = -\int_{\R^d}\nabla r\cdot\nabla g p\dd x - \int_{\R^d} \nabla r\cdot (\nabla p + p\nabla h)g \dd x\\
            & = -\int_{\R^d} \nabla r\cdot \nabla g p\dd x.
        \end{aligned}
    \end{equation}
    where the last equality follows from the identity \eqref{eq: identity for p}.
\end{proof}
%The value and gradient bound can be derived by Ito's formula, and they satisfy the upper bound \eqref{eq: value gradient bound}.
Next, we turn to the Hessian bound. The following lemma represents the norm of the Hessian $ \nabla^2 v$.

\begin{lemma} Under Assumption \ref{assum: langevin contraction setting}, for $ \cL v = \Delta v - \nabla h\cdot \nabla v$, we have
\begin{equation}\label{eq: bochner}
    \frac{1}{2}\cL(|\nabla v|^2) - \nabla v \cdot \nabla(\cL v) = |\nabla^2 v|^2 + \nabla v^{\top} \nabla^2 h \nabla v.
\end{equation}
\end{lemma}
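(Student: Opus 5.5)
This is a pointwise Bochner-type identity, and it follows by direct computation in coordinates. Throughout, we use the summation convention and write $v_i=\partial_i v$, $v_{ij}=\partial_{ij}v$ and $h_i=\partial_i h$. The regularity $v\in C^{1,3}_p$ from Assumption \ref{assum: langevin contraction setting} guarantees that $\nabla v$ is $C^2$ in $x$. Hence $|\nabla v|^2$ and $\cL v$ are differentiable enough for both sides of \eqref{eq: bochner} to be defined classically. Since $h\in C^2_p$, the quantity $\nabla(\nabla h\cdot\nabla v)$ makes sense as well. The time variable plays no role, so we fix $t$ and treat $v$ as a function of $x$.

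First, we expand $\frac12\cL(|\nabla v|^2)$. We have $\partial_j\big(\tfrac12 v_iv_i\big)=v_iv_{ij}$, and differentiating once more gives
\[
\tfrac12\Delta(|\nabla v|^2)=v_{ij}v_{ij}+v_iv_{ijj}=|\nabla^2v|^2+\nabla v\cdot\nabla(\Delta v).
\]
Here we have used the symmetry of third derivatives, $v_{ijj}=\partial_i(\Delta v)$. The drift part is $-\tfrac12\nabla h\cdot\nabla(|\nabla v|^2)=-h_jv_iv_{ij}$.

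Second, we expand $\nabla v\cdot\nabla(\cL v)$. We have $\partial_i(\cL v)=\partial_i(\Delta v)-h_{ij}v_j-h_jv_{ij}$. Contracting with $v_i$ gives
\[
\nabla v\cdot\nabla(\Delta v)-\nabla v^\top\nabla^2h\,\nabla v-h_jv_iv_{ij}.
\]

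Subtracting the second expansion from the first, the third-order terms $\nabla v\cdot\nabla(\Delta v)$ cancel. The drift terms $-h_jv_iv_{ij}$ also cancel. What remains is exactly $|\nabla^2 v|^2+\nabla v^\top\nabla^2h\,\nabla v$, which is \eqref{eq: bochner}.

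The only point needing care is the bookkeeping of the drift term. The cross term $h_jv_iv_{ij}$ must cancel, while the term in which the derivative falls on $\nabla h$ must survive and produce the curvature term $\nabla v^\top\nabla^2h\nabla v$. This is also what later allows the lower bound $\nabla^2h\succeq -kI$ to enter the estimate.
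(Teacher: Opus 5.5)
Your proof is correct and follows essentially the same route as the paper: a direct coordinate computation giving $\frac12\Delta(|\nabla v|^2)=|\nabla^2 v|^2+\nabla v\cdot\nabla(\Delta v)$, followed by the cancellation of the drift cross terms $\partial_i h\,\partial_j v\,\partial_{ij}v$, while the term in which the derivative falls on $\nabla h$ survives as $\nabla v^\top\nabla^2 h\,\nabla v$. The only cosmetic difference is the order of operations. The paper first groups the difference as $|\nabla^2 v|^2-\frac12\nabla h\cdot\nabla(|\nabla v|^2)+\nabla v\cdot\nabla(\nabla h\cdot\nabla v)$ and then expands the last two terms, whereas you expand both sides fully before subtracting.
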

\begin{proof}
    Note that
    \begin{equation}
        \begin{aligned}
            \frac{1}{2}\Delta (|\nabla v|^2) = \frac{1}{2} \sum_{i=1}^d \partial_i^2 \sum_{j=1}^d (\partial_j v)^2 = \frac{1}{2}\sum_{i,j=1}^d \partial_i (2\partial_j v \partial_{ij}^2 v) &= \sum_{i,j=1}^d (\partial_{ij}^2 v \partial_{ij}^2 v + \partial_j v \partial_i (\partial_{ij}^2 v))\\
            & = |\nabla^2 v|^2 + \nabla v \cdot \nabla (\Delta v).
        \end{aligned}
    \end{equation}
    Thus,
    \begin{equation}\label{eq: mid}
        \begin{aligned}
        \frac{1}{2} \cL(|\nabla v|^2) - \nabla v \cdot \nabla (\cL v) &= \frac{1}{2} \Delta (|\nabla v|^2) - \frac{1}{2} \nabla h\cdot \nabla (|\nabla v|^2) - \nabla v \cdot \nabla (\Delta v - \nabla h \cdot \nabla v)\\
        & = \frac{1}{2}\Delta (|\nabla v|^2) -\nabla v \cdot \nabla (\Delta v) - \frac{1}{2}\nabla h \cdot \nabla (|\nabla v|^2) + \nabla v \cdot \nabla (\nabla h \cdot \nabla v)\\
        & = |\nabla^2 v|^2 - \frac{1}{2}\nabla h \cdot \nabla(|\nabla v|^2) + \nabla v \cdot \nabla(\nabla h \cdot \nabla v).
        \end{aligned}
    \end{equation}
We now compute the second and third terms, respectively. Indeed, %For the second term on the right hand side of \eqref{eq: mid},
\begin{equation}\label{eq: mid 2}
    \begin{aligned}
        -\frac{1}{2}\nabla h\cdot \nabla(|\nabla v|^2) = -\frac{1}{2}\sum_{i=1}^d \partial_i h\partial_i (\sum_{j=1}^d (\partial_j v)^2) = -\sum_{i,j=1}^d \partial_i h \partial_j v \partial_{ij}^2 v,
    \end{aligned}
\end{equation}
while
%For the third term on the right hand side of \eqref{eq: mid},
\begin{equation}\label{eq: mid 3}
    \begin{aligned}
        \nabla v \cdot \nabla(\nabla h \cdot \nabla v) = \sum_{i=1}^d \partial_i v \partial_i(\sum_{j=1}^d \partial_j h\partial_j v) &= \sum_{i,j=1}^d \partial_i v  \partial_{ij}^2 h \partial_j v + \sum_{i,j=1}^d \partial_i h \partial_j v \partial_{ij}^2 v\\
        & = \nabla v^{\top} \nabla^2 h \nabla v+ \sum_{i,j=1}^d \partial_i h \partial_j v \partial_{ij}^2 v.
    \end{aligned}
\end{equation}
The desired result follows from \eqref{eq: mid}, \eqref{eq: mid 2} and \eqref{eq: mid 3}.
\end{proof}

Integrating the above identity against the invariant density converts the local Hessian identity into an $L_2(p)$ estimate. The integration by parts formula above removes the total-divergence term.

\begin{lemma}\label{lemma: hessian bound mid}
    Under Assumption \ref{assum: langevin contraction setting}, we have
    \begin{equation}
        \|\nabla^2 v\|_{\beta}^2 = \|\cL v\|_{\beta}^2 - \int_{0}^T e^{\beta t} \int_{\R^d} \nabla v^{\top} \nabla^2 h\nabla v p \dd x\dd t.
    \end{equation}
\end{lemma}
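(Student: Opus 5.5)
The identity follows from integrating the pointwise Bochner formula \eqref{eq: bochner} against the invariant density $p$ at each fixed time, then multiplying by $e^{\beta t}$ and integrating over $[0,T]$. Since $X_t\sim p$ for all $t$, the weighted norm $\|\cdot\|_\beta^2$ is exactly $\int_0^T e^{\beta t}\int_{\R^d}|\cdot|^2 p\,\dd x\,\dd t$, so it suffices to prove, for each fixed $t$,
\begin{equation}
\int_{\R^d}|\nabla^2 v|^2 p\,\dd x = \int_{\R^d}|\cL v|^2 p\,\dd x - \int_{\R^d}\nabla v^\top\nabla^2 h\nabla v\, p\,\dd x .
\end{equation}

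Integrating \eqref{eq: bochner} against $p$ gives
\begin{equation}
\int |\nabla^2 v|^2 p + \int \nabla v^\top \nabla^2 h \nabla v\, p = \frac12\int \cL(|\nabla v|^2)\,p - \int \nabla v\cdot\nabla(\cL v)\,p .
\end{equation}
The two terms on the right are handled with Lemma \ref{lemma: int by parts}.

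\textbf{First term.} Apply Lemma \ref{lemma: int by parts} with $r=|\nabla v|^2$ and $g\equiv 1$. Since $\nabla g=0$, this gives $\int \cL(|\nabla v|^2)\,p\,\dd x=0$. This is the stationarity of $p$, and it is the total-divergence term the paper mentions.

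\textbf{Second term.} Apply Lemma \ref{lemma: int by parts} with $r=v$ and $g=\cL v$. This yields $\int (\cL v)(\cL v)\,p = -\int \nabla v\cdot\nabla(\cL v)\,p$, so $-\int\nabla v\cdot\nabla(\cL v)\,p = \|\cL v\|_{L_2(p)}^2$.

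Substituting both into the integrated identity and rearranging gives the fixed-$t$ statement. Multiplying by $e^{\beta t}$ and integrating in $t$ gives the lemma.

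The main obstacle is justifying the regularity needed for Lemma \ref{lemma: int by parts}. That lemma requires $r\in C^2_p$ and $g\in C^1_p$:
\begin{itemize}
\item For $r=|\nabla v|^2$, we need $v\in C^{1,3}_p$, which Assumption \ref{assum: langevin contraction setting}(iii) provides.
\item For $g=\cL v=\Delta v-\nabla h\cdot\nabla v$, we need $\nabla(\cL v)$ to involve third derivatives of $v$ and $\nabla^2 h$. The former are covered by $v\in C^{1,3}_p$ and the latter by $h\in C^2_p$, so $\cL v\in C^1_p$.
\end{itemize}
Polynomial growth together with the growth condition \eqref{eq: increase condition} on $h$ ensures that the boundary terms in the integration by parts vanish and that all the integrals against $p$ are finite. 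Fubini in $(t,x)$ is justified because the polynomial bounds are uniform in $t\in[0,T]$.
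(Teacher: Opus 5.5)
Your proposal is correct and follows the paper's proof essentially step for step. You integrate the Bochner identity \eqref{eq: bochner} against $p$, eliminate $\int \cL(|\nabla v|^2)\,p\,\dd x$ via Lemma~\ref{lemma: int by parts} with a constant test function, convert $-\int \nabla v\cdot\nabla(\cL v)\,p\,\dd x$ into $\int (\cL v)^2 p\,\dd x$ via the same lemma with $r=v$ and $g=\cL v$, and then integrate in time with the weight $e^{\beta t}$. Your regularity check, that $|\nabla v|^2\in C^2_p$ and $\cL v\in C^1_p$ follow from Assumption~\ref{assum: langevin contraction setting}(iii) and $h\in C^2_p$, is a useful justification that the paper leaves implicit.
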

\begin{proof}
    It follows from the formula \eqref{eq: bochner} that
    \begin{equation}\label{eq: bochner int R}
        \begin{aligned}
            \int_{\R^d} \frac{1}{2} \cL(|\nabla v|^2) p\dd x - \int_{\R^d} \nabla v\cdot \nabla(\cL v) p \dd x = \int_{\R^d} |\nabla^2 v|^2 p \dd x + \int_{\R^d}\nabla v^{\top}\nabla^2 h\nabla v p\dd x.
        \end{aligned}
    \end{equation}
    Lemma \ref{lemma: int by parts} yields that the first term on the left hand side of the above equation equals $ 0$:
    \begin{equation}
        \int_{\R^d} \frac{1}{2}\cL(|\nabla v|^2) p \dd x = -\int_{\R^d} \nabla (|\nabla v|^2) \cdot \nabla \left(\frac{1}{2}\right) p \dd x = 0.
    \end{equation}
    The second term can be written as
    \begin{equation}
        -\int_{\R^d} \nabla v\cdot \nabla(\cL v) p\dd x = \int_{\R^d}(\cL v)^2 p \dd x.
    \end{equation}
    Therefore, integrating \eqref{eq: bochner int R} over $ [0,T]$ with the weight $ e^{\beta t}$, we have
    \begin{equation}
        \|\nabla^2 v\|_{\beta}^2 = \|\cL v\|_{\beta}^2 - \int_0^T e^{\beta t} \int_{\R^d} \nabla v^{\top}\nabla^2 h\nabla v p\dd x \dd t .
    \end{equation}
    The desired result follows.
\end{proof}

Lemma \ref{lemma: hessian bound mid} gives the unfinished bound of Hessian $ \nabla^2 v$, which contains the norm of $ \cL v$. To bound $ \cL v$, we have to analyze $ \partial_t v$.

\begin{lemma}\label{lemma: partial t bound}
    Under Assumption \ref{assum: langevin contraction setting}, we have
    \begin{equation}
        \|\partial_t v\|_{\beta}^2 + \beta\|\nabla v\|_{\beta}^2 \le \|\delta f\|_{\beta}^2.
    \end{equation}
\end{lemma}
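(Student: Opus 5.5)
We will obtain the bound by an energy (Itô) argument applied to the squared time derivative, in the same spirit as the preceding lemma for the value and gradient. Since $v\in C^{1,3}_p$ solves $\partial_t v + \cL v = -\delta f$ with $v(T,\cdot)=0$, we have $\cL v = -\delta f - \partial_t v$. We expand
\begin{equation}
\|\delta f\|_\beta^2 = \|\partial_t v + \cL v\|_\beta^2 = \|\partial_t v\|_\beta^2 + \|\cL v\|_\beta^2 + 2\int_0^T e^{\beta t}\int_{\R^d} \partial_t v\,\cL v\, p\,\dd x\,\dd t .
\end{equation}
Since $\|\cL v\|_\beta^2\ge 0$ can be dropped, it suffices to show that the cross term is at least $\beta\|\nabla v\|_\beta^2$.

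For the cross term we use Lemma \ref{lemma: int by parts} with $r=v(t,\cdot)$ and $g=\partial_t v(t,\cdot)$. Here $p$ is stationary and does not depend on $t$, so the weight commutes with $\partial_t$. This gives
\begin{equation}
\int_{\R^d}\partial_t v\,\cL v\,p\,\dd x = -\int_{\R^d}\nabla v\cdot\nabla\partial_t v\,p\,\dd x = -\frac12\frac{\dd}{\dd t}\int_{\R^d}|\nabla v|^2p\,\dd x .
\end{equation}
Multiplying by $2e^{\beta t}$ and integrating by parts in time, the cross term becomes
\begin{equation}
-\Big[e^{\beta t}\!\int|\nabla v|^2p\Big]_0^T + \beta\int_0^T e^{\beta t}\!\int|\nabla v|^2p\,\dd x\,\dd t = \int_{\R^d}|\nabla v(0,x)|^2p\,\dd x + \beta\|\nabla v\|_\beta^2 .
\end{equation}
The boundary term at $T$ vanishes because $v(T,\cdot)=0$ implies $\nabla v(T,\cdot)=0$, and the boundary term at $0$ is nonnegative. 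Combining with the first display yields $\|\partial_t v\|_\beta^2 + \beta\|\nabla v\|_\beta^2 \le \|\delta f\|_\beta^2$.

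The main obstacle is justifying the interchanges: differentiating $\int|\nabla v|^2p$ in $t$ under the integral, and applying Lemma \ref{lemma: int by parts} to $g=\partial_t v$. The latter requires $\partial_t v(t,\cdot)\in C^1_p$. Since $v\in C^{1,3}_p$, we have $\partial_t v = -\cL v - \delta f$. This is $C^1_p$ in $x$ because $\cL v$ involves up to third derivatives of $v$ and $\nabla h$ with $h\in C^2_p$, and $\delta f$ is $C^1_p$ if the inputs $\mathbf U_i$ are regular enough. The growth condition on $h$ in Assumption \ref{assum: langevin contraction setting} makes $p$ decay faster than any polynomial. Hence all polynomially growing integrands are integrable, dominated convergence applies, and the spatial boundary terms vanish. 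If one prefers to avoid mixed derivatives $\nabla\partial_t v$, the same identity can be derived via Itô's formula on $e^{\beta t}|\nabla v(t,X_t)|^2$ with $X_0\sim p$, using stationarity.
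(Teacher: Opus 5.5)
Your argument is correct. It uses the same two ingredients as the paper: reversibility (Lemma~\ref{lemma: int by parts}) to turn $\int_{\R^d}\partial_t v\,\cL v\,p\,\dd x$ into $-\frac12\frac{\dd}{\dd t}\int_{\R^d}|\nabla v|^2p\,\dd x$, and integration by parts in time using $v(T,\cdot)=0$.

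You differ from the paper only in the closing step. The paper tests the equation against $\partial_t v$ and absorbs the right-hand side with Young's inequality, $|\langle\delta f,\partial_t v\rangle_\beta|\le\frac12\|\delta f\|_\beta^2+\frac12\|\partial_t v\|_\beta^2$. You square the equation instead, which gives the exact identity
\[
\|\delta f\|_\beta^2=\|\partial_t v\|_\beta^2+\|\cL v\|_\beta^2+\beta\|\nabla v\|_\beta^2+\int_{\R^d}|\nabla v(0,x)|^2p(x)\,\dd x .
\]
For the lemma itself both routes give the same inequality, but yours keeps strictly more information. It also yields $\|\cL v\|_\beta^2\le\|\delta f\|_\beta^2-\beta\|\nabla v\|_\beta^2$, which is exactly what Theorem~\ref{thm: hessian bound} needs. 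There the paper uses $\|\cL v\|_\beta^2\le 2\|\partial_t v\|_\beta^2+2\|\delta f\|_\beta^2$ and picks up a factor $4$. Your identity instead gives $\|\nabla^2 v\|_\beta^2\le\|\delta f\|_\beta^2-(\beta-k)\|\nabla v\|_\beta^2$. Carried through the proof of Theorem~\ref{thm: langevin sufficient contraction}, this would relax the condition $12L_h^2<1$ to $3L_h^2<1$.

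One correction to your last paragraph. The It\^o route does not avoid the mixed derivative, because It\^o's formula for $|\nabla v(t,X_t)|^2$ needs $\partial_t\nabla v$. Also, your argument that $\partial_t v(t,\cdot)\in C^1_p$ adds a smoothness hypothesis on $\delta f$ that Assumption~\ref{assum: langevin contraction setting} does not provide (the paper uses $\nabla\partial_t v$ without comment). Both issues disappear with a difference quotient. Set $\Phi(t):=\int_{\R^d}|\nabla v(t,x)|^2p(x)\,\dd x$ and apply Lemma~\ref{lemma: int by parts} to $r=v(t+s,\cdot)+v(t,\cdot)$ and $g=v(t+s,\cdot)-v(t,\cdot)$. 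This gives $\Phi(t+s)-\Phi(t)=-\int_{\R^d}(v(t+s,x)-v(t,x))(\cL v(t+s,x)+\cL v(t,x))p(x)\,\dd x$. Dividing by $s$ and using dominated convergence (polynomial growth of $\partial_t v$, $\nabla v$, $\nabla^2 v$, $\nabla h$ against the super-polynomial decay of $p$) yields $\Phi\in C^1([0,T])$ with $\Phi'(t)=-2\int_{\R^d}\partial_t v\,\cL v\,p\,\dd x$. This needs only the assumption as stated, and it is all your computation uses.
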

\begin{proof}
    Multiply \eqref{eq: linear pde langevin} by $\partial_t v$ and integrate over $\mathbb{R}^d$ with weight $p$ to get
    \begin{equation}\label{eq: partial t v step 1}
        \begin{aligned}
            \int_{\R^d} (\partial_t v)^2 p \dd x + \int_{\R^d} \cL v \partial_t v p \dd x = \int_{\R^d} -\delta f\partial_t vp \dd x.
        \end{aligned}
    \end{equation}
    For the second term on the left hand side of the above equation, by Lemma \ref{lemma: int by parts},
    \begin{equation}
        \int_{\R^d} \cL v \partial_t v p \dd x = -\int_{\R^d} \nabla v \cdot \nabla (\partial_t v) p\dd x = -\frac{1}{2}\frac{\dd }{\dd t}\int_{\R^d} |\nabla v|^2 p\dd x.
    \end{equation}
    Integrating over $ [0,T]$ with weight $ e^{\beta t}$ and using integration by parts with respect to $ t$, we obtain
    \begin{equation}
    \begin{aligned}
        \int_0^T e^{\beta t} -\frac{1}{2}\frac{\dd }{\dd t}\int_{\R^d} |\nabla v|^2 p\dd x \dd t &= -\frac{1}{2} \int_0^T \frac{\dd }{\dd t}\left(e^{\beta t} \int_{\R^d} |\nabla v|^2 p\dd x\right) \dd t \\
        &\qquad+ \frac{\beta}{2}\int_0^T   e^{\beta t} \int_{\R^d} |\nabla v|^2 p\dd x  \dd t\\
        & = \frac{1}{2}\int_{\R^d} |\nabla v(0,\cdot)|^2 p\dd x + \frac{\beta}{2}\|\nabla v\|_{\beta}^2 \ge \frac{\beta}{2}\|\nabla v\|_{\beta}^2,
        \end{aligned}
    \end{equation}
    where in the second equality, we used the fact that $ v(T,x) = 0$. This along with \eqref{eq: partial t v step 1} yields
    \begin{equation}
        \|\partial_t v\|_{\beta}^2 + \frac{\beta}{2}\|\nabla v\|_{\beta}^2 \le \int_0^T e^{\beta t}\int_{\R^d} |\delta f \partial_t v| p \dd x\dd t \le \frac{1}{2}\|\delta f\|_{\beta}^2 + \frac{1}{2} \|\partial_t v\|_{\beta}^{2}.
    \end{equation}
    This proves the desired result.
\end{proof}
Combining these lemmas, we have the upper bound for the norm of Hessian $ \nabla^2 v$.
\begin{theorem}\label{thm: hessian bound}
    Under Assumption \ref{assum: langevin contraction setting},
    \begin{equation}
        \|\nabla^2 v\|_{\beta}^2 \le 4\|\delta f\|_{\beta}^2 - (2\beta - k)\|\nabla v\|_{\beta}^2.
    \end{equation}
\end{theorem}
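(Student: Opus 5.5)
Combining Lemma \ref{lemma: hessian bound mid} with a bound on $\|\cL v\|_{\beta}^2$ obtained from the PDE \eqref{eq: linear pde langevin} and Lemma \ref{lemma: partial t bound} should give the statement. The starting point is Lemma \ref{lemma: hessian bound mid},
\[
\|\nabla^2 v\|_{\beta}^2 = \|\cL v\|_{\beta}^2 - \int_0^T e^{\beta t}\int_{\R^d} \nabla v^{\top}\nabla^2 h\nabla v\, p\,\dd x\,\dd t .
\]
Since the eigenvalues of $\nabla^2 h$ are bounded below by $-k$ (Assumption \ref{assum: langevin contraction setting}(i)), we have $\nabla v^{\top}\nabla^2 h\nabla v \ge -k|\nabla v|^2$ pointwise. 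The last term is therefore at most $k\|\nabla v\|_{\beta}^2$, and
\[
\|\nabla^2 v\|_{\beta}^2 \le \|\cL v\|_{\beta}^2 + k\|\nabla v\|_{\beta}^2 .
\]

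Next, the PDE gives $\cL v = -\delta f - \partial_t v$. Applying $(a+b)^2\le 2a^2+2b^2$ yields
\[
\|\cL v\|_{\beta}^2 \le 2\|\delta f\|_{\beta}^2 + 2\|\partial_t v\|_{\beta}^2 .
\]
Lemma \ref{lemma: partial t bound} gives $\|\partial_t v\|_{\beta}^2 \le \|\delta f\|_{\beta}^2 - \beta\|\nabla v\|_{\beta}^2$. Substituting,
\[
\|\cL v\|_{\beta}^2 \le 4\|\delta f\|_{\beta}^2 - 2\beta\|\nabla v\|_{\beta}^2 .
\]
Inserting this into the previous display gives $\|\nabla^2 v\|_{\beta}^2 \le 4\|\delta f\|_{\beta}^2 - (2\beta-k)\|\nabla v\|_{\beta}^2$, which is the claimed bound.

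The argument is essentially bookkeeping, so there is no single hard step. The points that need care are the following. First, the sign when bounding $-\nabla v^{\top}\nabla^2 h \nabla v$ by $+k|\nabla v|^2$ must be handled correctly. Second, Lemma \ref{lemma: partial t bound} has to be used as an inequality that carries the negative gradient term, rather than dropping that term. Third, the norms must be finite, which follows from the polynomial-growth regularity $v\in C^{1,3}_p$ together with the moments of $p$ guaranteed by \eqref{eq: increase condition}.
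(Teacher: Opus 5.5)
Your proposal is correct and follows the paper's own proof essentially step for step. Both combine Lemma~\ref{lemma: hessian bound mid}, the splitting $\cL v = -\delta f - \partial_t v$ with $(a+b)^2\le 2a^2+2b^2$, Lemma~\ref{lemma: partial t bound} keeping the $-\beta\|\nabla v\|_{\beta}^2$ term, and the lower eigenvalue bound $-k$ on $\nabla^2 h$; the only difference is that you apply the eigenvalue bound first, whereas the paper applies it last.
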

\begin{proof}
    It follows from Lemma \ref{lemma: hessian bound mid} and Lemma \ref{lemma: partial t bound} that
    \begin{equation}
    \begin{aligned}
        \|\nabla^2 v\|_{\beta}^2 &= \|\cL v\|_{\beta}^2 - \int_{0}^T e^{\beta t} \int_{\R^d} \nabla v^{\top} \nabla^2 h\nabla v p \dd x \dd t \\
        &\le 2\|\partial_t v\|_{\beta}^2 + 2\|\delta f\|_{\beta}^2 - \int_{0}^T e^{\beta t} \int_{\R^d} \nabla v^{\top} \nabla^2 h\nabla v p \dd x \dd t\\
        &\le 4\|\delta f\|_{\beta}^2 - 2\beta \|\nabla v\|_{\beta}^2- \int_{0}^T e^{\beta t} \int_{\R^d} \nabla v^{\top} \nabla^2 h\nabla v p \dd x \dd t.
        \end{aligned}
    \end{equation}
    Since the eigenvalues of $ \nabla^2 h$ have a uniform lower bound $ -k$, $ \nabla^2 h + kI$ is positive semi-definite. Therefore, $$ \int_{0}^T e^{\beta t}\int_{\R^d} \nabla v^{\top} \nabla^2 h\nabla v p \dd x \dd t \ge -k \|\nabla v\|_{\beta}^2,$$
    which yields
    \begin{equation}
        \|\nabla^2 v\|_{\beta}^2 \le 4\|\delta f\|_{\beta}^2 - (2\beta - k) \|\nabla v\|_{\beta}^2.
    \end{equation}
    This proves the desired result.
\end{proof}

We now finish the proof of the sufficient condition by relating $\delta f$ to the distance between the two input triplets.

\begin{proof}[Proof of Theorem \ref{thm: langevin sufficient contraction}]
For the Langevin dynamics considered here, \eqref{eq: value gradient bound} holds with $\lambda=2$. Hence
\begin{equation}
    \beta\|v\|_{\beta}^2  \le \frac{4}{\beta}\|\delta f\|_{\beta}^2,\quad  \beta\|\nabla v\|_{\beta}^2 \le \|\delta f\|_{\beta}^2.
\end{equation}
Together with Theorem \ref{thm: hessian bound}, this gives
\begin{equation}
\begin{aligned}
    \|v\|^2_{\beta,\textrm{mix}} &= \beta\|v\|_{\beta}^2 + \beta\|\nabla v\|_{\beta}^2 + \|\nabla^2 v\|_{\beta}^2\\
    &\le 4\|\delta f\|_{\beta}^2 - (2\beta -k)\|\nabla v\|_{\beta}^2 + \beta\|\nabla v\|_{\beta}^2 + \frac{4}{\beta}\|\delta f\|_{\beta}^2\\
    & \le 4(1+\frac{1}{\beta}) 3\left(L_v^2 \|V_1-V_2\|_{\beta}^2 + L_g^2 \|G_1-G_2\|_{\beta}^2 + L_h^2\|H_1-H_2\|_{\beta}^2\right) - (\beta - k)\|\nabla v\|_{\beta}^2\\
    &\le 12\left((L_v^2+L_g^2)\frac{\beta+1}{\beta^2}+L_h^2\frac{\beta+1}{\beta}\right)\|\mathbf U_1-\mathbf U_2\|^2_{\beta,\textrm{mix}} -  (\beta - k)\|\nabla v\|_{\beta}^2.
\end{aligned}
\end{equation}
For any $\beta\ge k$, the last term is nonpositive, and therefore
\begin{equation}
    \|v\|^2_{\beta,\textrm{mix}}
    \le \eta_\beta \|\mathbf U_1-\mathbf U_2\|^2_{\beta,\textrm{mix}},
\end{equation}
where $\eta_\beta$ is defined in \eqref{eq: eta beta sufficient condition}. If $\eta_\beta<1$, taking square roots gives the contraction inequality with $\gamma=\sqrt{\eta_\beta}<1$. Finally, if $12L_h^2<1$, then $\eta_\beta\to 12L_h^2$ as $\beta\to\infty$, so such a choice of $\beta$ is available.
\end{proof}

\section{Addition Numerical Results with Weak Simulator}
\label{appendix:addition zod-1}
We examine the performance of our approach when only the weak simulator is available on the fully nonlinear PDE example in Subsection \ref{sec:fully nonlinear}, as a robustness check. The results are summarized in Table \ref{tab:fully-nonlinear-ablation-summary zod-1}. Overall, consistent with our theoretical results, the performance of ZOD-1 is slightly worse than that with ZOD-m. Indeed, in this case it is no longer true that the smaller  bandwidth the better performance.

\begin{table}[!htbp]
\centering
\caption{\textbf{Summary of ZOD ablations for the fully nonlinear PDE
\eqref{eq:fully_nonlinear_benchmark}}. Each row reports the mean final rRMSE over
three random seeds.}
\label{tab:fully-nonlinear-ablation-summary zod-1}
\scriptsize
\resizebox{\textwidth}{!}{%
\begin{tabular}{lcccccccc}
\toprule
Setting & \tabincell{c}{Gradient\\Steps} & \tabincell{c}{Batch\\Size} & \tabincell{c}{Bandwidth\\$\epsilon$} & \tabincell{c}{Strong\\Simulator} & \tabincell{c}{Value\\Error} & \tabincell{c}{Gradient\\Error} & \tabincell{c}{Hessian\\Error} & \tabincell{c}{Time\\(s)} \\
\midrule
% \multirow{3}{*}{\shortstack[l]{Varying\\training steps}} & 2048 & 32768 & 0.05 & \cmark & $3.601\times10^{-3}$ & $1.336\times10^{-2}$ & $7.736\times10^{-2}$ & $1006$ \\
%  & 4096 & 32768 & 0.05 & \cmark & $2.333\times10^{-3}$ & $1.015\times10^{-2}$ & $5.649\times10^{-2}$ & $1886$ \\
%  & 8192 & 32768 & 0.05 & \cmark & $2.032\times10^{-3}$ & $7.518\times10^{-3}$ & $4.083\times10^{-2}$ & $3665$ \\
% \midrule
% \multirow{3}{*}{\shortstack[l]{Varying\\batch size}} & 4096 & 16384 & 0.05 & \cmark & $3.016\times10^{-3}$ & $1.140\times10^{-2}$ & $6.713\times10^{-2}$ & $1061$ \\
%  & 4096 & 32768 & 0.05 & \cmark & $2.333\times10^{-3}$ & $1.015\times10^{-2}$ & $5.649\times10^{-2}$ & $1886$ \\
%  & 4096 & 65536 & 0.05 & \cmark & $1.997\times10^{-3}$ & $7.607\times10^{-3}$ & $4.575\times10^{-2}$ & $3636$ \\
% \midrule
% \multirow{2}{*}{\shortstack[l]{Simulator\\Availability}} & 2048 & 65536 & 0.25 & \xmark & $6.398\times10^{-3}$ & $2.781\times10^{-2}$ & $6.192\times10^{-1}$ & $1330$ \\
%  & 2048 & 65536 & 0.25 & \cmark & $3.687\times10^{-3}$ & $1.205\times10^{-2}$ & $5.178\times10^{-2}$ & $1819$ \\
% \midrule
\multirow{4}{*}{\shortstack[l]{Varying \\bandwidth}} & 2048 & 65536 & 0.1 & \xmark & $3.854\times10^{-2}$ & $4.571\times10^{-2}$ & $2.320\times10^{0}$ & $1328$ \\
 & 2048 & 65536 & 0.2 & \xmark & $9.834\times10^{-3}$ & $3.168\times10^{-2}$ & $9.591\times10^{-1}$ & $1326$ \\
 & 2048 & 65536 & 0.25 & \xmark & $6.398\times10^{-3}$ & $2.781\times10^{-2}$ & $6.192\times10^{-1}$ & $1330$ \\
 & 2048 & 65536 & 0.5 & \xmark & $4.255\times10^{-3}$ & $3.697\times10^{-2}$ & $2.703\times10^{-1}$ & $1327$ \\
\midrule
\multirow{3}{*}{\shortstack[l]{Varying \\batch size}} & 2048 & 65536 & 0.25 & \xmark & $6.398\times10^{-3}$ & $2.781\times10^{-2}$ & $6.192\times10^{-1}$ & $1330$ \\
 & 2048 & 131072 & 0.25 & \xmark & $4.896\times10^{-3}$ & $2.481\times10^{-2}$ & $4.939\times10^{-1}$ & $2512$ \\
 & 2048 & 262144 & 0.25 & \xmark & $5.000\times10^{-3}$ & $1.982\times10^{-2}$ & $3.955\times10^{-1}$ & $4880$ \\
\bottomrule
\end{tabular}
}
\end{table}
\section{Proof of Statements}
\subsection{Proof of Lemma \ref{lemma:zod classical}}
\begin{proof}
    We prove the $ d= 1$ case for illustration. It follows from Taylor's expansion that
    \begin{equation}\label{eq: taylor 1 der}
        \frac{Z}{\epsilon}u(t,x+\epsilon Z) = \frac{Z}{\epsilon}\left( u(t,x) + \nabla u(t,x)\epsilon Z + \frac{1}{2}\nabla^2u(t,x)\epsilon^2Z^2 + c(t,x,x + \epsilon Z)\epsilon^3Z^3\right).
    \end{equation}
    Thus,
    \begin{equation}
        \begin{aligned}
            \left|\E\left[ \frac{Z}{\epsilon} u(t,x+\epsilon Z)  \right]- \nabla u(t,x)\right|  &= \left| \E\left[ \frac{1}{2}\nabla^2 u(t,x)\epsilon Z^3 + c(t,x,x+\epsilon Z) \epsilon^2  Z^4\right]\right|\\
            & = \left| \E[c(t,x,x+\epsilon Z)Z^4]\right|\epsilon^2\le C\epsilon^2.
        \end{aligned}
    \end{equation}
    Similarly, if $ u$ has the fourth-order Taylor expansion, then
    \begin{equation}\label{eq: taylor 2 der}
        \begin{aligned}
            &\frac{ZZ^{\top}- I_d}{\epsilon^2}u(t,x+\epsilon Z) \\
            &=   \frac{Z^2 - 1}{\epsilon^2}u(t,x) + \frac{Z^3 - Z}{\epsilon}\nabla u(t,x) + \frac{1}{2}(Z^4 - Z^2)\nabla^2u(t,x)\\
            &\quad\quad\quad+ \frac{1}{6}\epsilon \nabla^3u(t,x)(Z^5 - Z^3)+c'(t,x,x+\epsilon Z)(Z^6 - Z^4)\epsilon^2.
        \end{aligned}
    \end{equation}
    Consequently,
    \begin{equation}
        \left|\E\left[\frac{ZZ^{\top}- I_d}{\epsilon^2}u(t,x+\epsilon Z) \right]- \nabla^2 u(t,x)\right| =  \left|\E[c'(t,x,x+\epsilon Z)(Z^6 - Z^4)]\right|\epsilon^2 \le C\epsilon^2.
    \end{equation}
    The variance parts follow from \eqref{eq: taylor 1 der}, \eqref{eq: taylor 2 der}  and the fact that $ c, c'$ have polynomial growth.
\end{proof}

\subsection{Proof of Lemma \ref{lemma: zod multi-point gradient}}
\begin{proof}
For notational simplicity, we suppress the variable $t$. The Taylor expansions of $u(x \pm \epsilon Z)$ are:
\begin{align*}
u(x+\epsilon Z) &= u(x) + \epsilon \nabla u^\top Z + \frac{\epsilon^2}{2} Z^\top \nabla^2 u(x) Z + O(\epsilon^3), \\
u(x-\epsilon Z) &= u(x) - \epsilon \nabla u^\top Z + \frac{\epsilon^2}{2} Z^\top \nabla^2 u(x) Z + O(\epsilon^3).
\end{align*}
Subtracting the second expansion from the first cancels all even-order terms:
\[ u(x+\epsilon Z) - u(x-\epsilon Z) = 2\epsilon \nabla u^\top Z + O(\epsilon^3). \]
Substituting this into the estimator definition in Eq.~\eqref{eq:symmetric_gradient_estimator}:
\[ \hat{g}(x; \epsilon) = \frac{2\epsilon \nabla u^\top Z + O(\epsilon^3)}{2\epsilon} Z = (\nabla u^\top Z)Z + O(\epsilon^2). \]
The expectation is $\E[(\nabla u^\top Z)Z] = \nabla u$, since $\E[ZZ^\top] = I_d$. The bias is therefore dominated by the $O(\epsilon^2)$ term.

For the variance, observe that the leading term of the estimator, $(\nabla u^\top Z)Z$, is independent of $\epsilon$. Its variance is a constant determined by the moments of $Z$ and the magnitude of $\nabla u$:
\[ \lim_{\epsilon \to 0} \Var[\hat{g}(x; \epsilon)] = \Var[(\nabla u^\top Z)Z]. \]
This variance is finite and does not depend on $\epsilon$. Thus, the total variance is bounded.
\end{proof}

\subsection{Proof of Lemma \ref{lemma: zod multi-point hessian}}
\begin{proof}
We suppress the variable $t$. Using the Taylor expansions up to the fourth order, the central difference numerator becomes:
\[ u(x+\epsilon Z) - 2u(x) + u(x-\epsilon Z) = \epsilon^2 Z^\top \nabla^2u Z + \frac{\epsilon^4}{12} \sum_{1\le i,j,k,l\le d}\frac{\partial^4 u(x)}{\partial x_i \partial x_j \partial x_k \partial x_l}Z_iZ_jZ_kZ_l + o(\epsilon^4). \]
 The symmetric construction cancels all the odd-order terms. Substituting this into Eq.~\eqref{eq:symmetric_hessian_estimator} yields:
\begin{equation}\label{eq: 3-point hessian}
\hat{H}(x; \epsilon) = \left( \frac{1}{2} Z^\top \nabla^2u Z + \frac{\epsilon^2}{24} \sum_{1\le i,j,k,l\le d}\frac{\partial^4 u(x)}{\partial x_i \partial x_j \partial x_k \partial x_l}Z_iZ_jZ_kZ_l + o(\epsilon^2) \right) (ZZ^\top - I_d). \end{equation}
Thus, the variance is bounded when $\epsilon$ tends  to $0$. For the bias part, we take the expectation of the leading term. Using the property of Gaussian moments that $\E[(Z^\top \nabla^2u Z)(ZZ^\top - I_d)] = 2H$ for a symmetric matrix $\nabla^2u$ (derived from Isserlis' theorem), we find
\begin{align*}
    \E[\hat{H}(x; \epsilon)] &= \frac{1}{2} \E\left[ (Z^\top \nabla^2u Z) (ZZ^\top - I_d) \right] + O(\epsilon^2) \\
    &= \frac{1}{2} (2H) + O(\epsilon^2) = \nabla^2u + O(\epsilon^2).
\end{align*}
The bias is therefore of order $O(\epsilon^2)$.
\end{proof}

\subsection{Proof of Proposition \ref{prop: zod variance reduction}}
 \begin{proof}
    The proof of the results for the weak simulator case is the same as the Proposition \ref{prop:zod one-point}. We only prove the strong simulator case.

     For simplicity, we use $ R(t,x)$ to represent $ R^{t,x}(\mathbf{U}_n)$. We only prove the variance result for $ \hat h$; the other parts are similar.  Due to the regularity conditions of $ b$ and $ \sigma$, it follows from \citet[Theorem 40]{protter2005Stochastic} and \citet[Theorem 3.4.2]{kunita2019Stochastic} that $ X_s^{t,x}$ is four times continuously differentiable in $ x$, and for every $ p>2$, there is a constant $ C_p>0$ such that for any multi-index $ \bm i$ with $ 0<|\bm i|\le 4$, it holds that
     \begin{equation}\label{eq: bounded moments of derivatives}
         \sup_{x\in\R^d,\ 0\le t\le T}\E\left[ \sup_{t\le s\le T}|\partial^{\bm i}_x X_s^{t,x}|^p\right] \le C_p.
     \end{equation}
     Therefore, $ R(t,x)$ is also four  times continuously differentiable in $x$ and admits the Taylor expansion
     \begin{equation}
     \begin{aligned}
         R(t,x+\epsilon Z) &= R(t,x) + \epsilon \nabla R(t,x)^{\top}Z + \frac{\epsilon^2}{2}Z^{\top}\nabla^2R(t,x)Z + \frac{\epsilon^3}{6} \nabla^3 R(t,x)(Z,Z,Z) \\
         &\qquad+ \frac{\epsilon^4}{24} \nabla^4R(t,x+\theta_1 \epsilon Z)(Z,Z,Z,Z),\\
         R(t,x-\epsilon Z) &= R(t,x) - \epsilon \nabla R(t,x)^{\top}Z + \frac{\epsilon^2}{2}Z^{\top}\nabla^2R(t,x)Z - \frac{\epsilon^3}{6} \nabla^3 R(t,x)(Z,Z,Z) \\
         &\qquad+ \frac{\epsilon^4}{24} \nabla^4R(t,x-\theta_2 \epsilon Z)(Z,Z,Z,Z),
         \end{aligned}
     \end{equation}
     where $ \theta_1,\theta_2 \in (0,1)$ and
     \begin{equation}
     \begin{aligned}
         \nabla^3R(t,x)(Z,Z,Z)&: = \sum_{1\le i,j,k\le d} \frac{\partial^3_x R(t,x)}{\partial x_i\partial x_j \partial x_k}Z_iZ_jZ_k,\\
         \nabla^4R(t,x)(Z,Z,Z,Z)&: = \sum_{1\le i,j,k,l\le d}\frac{\partial^4_x R(t,x)}{\partial x_i\partial x_j \partial x_k \partial x_l}Z_iZ_jZ_kZ_l.
         \end{aligned}
     \end{equation}
    Denote $ \hat h(t,x) := \frac{ZZ^{\top} - I_d}{2 \epsilon^2}\left( R(t,x+\epsilon Z) + R(t,x - \epsilon Z) - 2R(t,x)\right)$. Then
    \begin{equation}
        \begin{aligned}
            \hat h(t,x)  =& \frac{ZZ^{\top} - I_d}{2} Z^{\top}\nabla^2 R(t,x)Z \\
            &+ \frac{\epsilon^2}{48} (ZZ^{\top}- I_d)\left[ \nabla^4 R(t,x+\theta_1 \epsilon Z) + \nabla^4 R(t,x - \theta_2 \epsilon Z)\right](Z,Z,Z,Z).
        \end{aligned}
    \end{equation}
     It follows from the regularity conditions of $ g, \tilde f$ and \eqref{eq: bounded moments of derivatives} that for every $ p>0$ there are constants $ C_p>0 $ and  $ q>0$ such that for all $ t,x \in [0,T]\times \R^d$,
     \begin{equation}
         \E\left[ |\nabla^2R(t,x)|^p\right] \le C_p(1 + |x|^q),\quad \E\left[ |\nabla^4 R(t,x)|^p\right] \le C(1+|x|^q).
     \end{equation}
     This along with the moments bound of $ X_t$ yields
     \begin{equation}
     \begin{aligned}
         \E[ |\hat h|^2] &\le 2\E\left[ \left| \frac{ZZ^{\top} - I_d}{2} Z^{\top}\nabla^2 R(t,X_t)Z\right|^2\right]\\
         &\qquad+ 2\E\left[ \left| \frac{\epsilon^2}{48} (ZZ^{\top}- I_d)\left[ \nabla^4 R(t,X_t+\theta_1 \epsilon Z) + \nabla^4 R(t,X_t - \theta_2 \epsilon Z)\right](Z,Z,Z,Z)\right|^2\right]\\
         &\le \E\left[ \left| ZZ^{\top} - I_d\right|^2\left| \nabla^2 R(t,X_t)\right|^2 \left| Z\right|^4\right]\\
         &\qquad+ \frac{\epsilon^2}{24}\E\left[ \left| ZZ^{\top} - I_d\right|^2\left| \nabla^4 R(t,X_t+\theta_1 \epsilon Z) + \nabla^4 R(t,X_t - \theta_2 \epsilon Z)\right|^2\left| Z\right|^8\right]\\
         &\le C,
     \end{aligned}
     \end{equation}
     for some constant $ C>0$. This proves the desired result.
 \end{proof}

\subsection{Proof of Theorem \ref{thm: global_convergence}}
\begin{proof}
By the triangle inequality, we decompose the total error at step $n$ into the computational error introduced at the current step and the propagated error from the previous step:
\begin{equation}
    \|\mathbf{U}_n - \mathbf{u}^*\|_{\beta, \text{mix}} \le \|\mathbf{U}_n - \mathcal{S}(\mathbf{U}_{n-1})\|_{\beta, \text{mix}} + \|\mathcal{S}(\mathbf{U}_{n-1}) - \mathbf{u}^*\|_{\beta, \text{mix}}.
\end{equation}
The first term on the right-hand side is exactly the definition of the one-step error $E_n$. For the second term, using the fixed-point property $\mathbf{u}^* = \mathcal{S}(\mathbf{u}^*)$ and the Contraction Assumption \ref{assum: contraction}, we have:
\begin{equation}
    \|\mathcal{S}(\mathbf{U}_{n-1}) - \mathbf{u}^*\|_{\beta, \text{mix}} = \|\mathcal{S}(\mathbf{U}_{n-1}) - \mathcal{S}(\mathbf{u}^*)\|_{\beta, \text{mix}} \le \gamma \|\mathbf{U}_{n-1} - \mathbf{u}^*\|_{\beta, \text{mix}}.
\end{equation}
Combining these inequalities yields the recursive relationship:
\begin{equation}
    \|\mathbf{U}_n - \mathbf{u}^*\|_{\beta, \text{mix}} \le \gamma \|\mathbf{U}_{n-1} - \mathbf{u}^*\|_{\beta, \text{mix}} + E_n.
\end{equation}
Thus,
\begin{align*}
    \|\mathbf{U}_n - \mathbf{u}^*\|_{\beta, \text{mix}} &\le \gamma \left( \gamma \|\mathbf{U}_{n-2} - \mathbf{u}^*\|_{\beta, \text{mix}} + E_{n-1} \right) + E_n \nonumber \\
    &= \gamma^2 \|\mathbf{U}_{n-2} - \mathbf{u}^*\|_{\beta, \text{mix}} + \gamma E_{n-1} + E_n \nonumber \\
    &\le \gamma^n \|\mathbf{U}_0 - \mathbf{u}^*\|_{\beta, \text{mix}} + \sum_{k=1}^n \gamma^{n-k} E_k.
\end{align*}
This proves the desired result.
\end{proof}

\subsection{Proof of Lemma \ref{lemma: total loss decomp}}
\begin{proof}
    We derive the result for the gradient component only; the others follow similarly. Let $Y_g = \frac{Z}{\epsilon}R^{t, X_t+\epsilon Z}$ denote the stochastic gradient target. The gradient loss is:
    \begin{equation}
        L_g(\varphi_g,\mathbf{U}_{n-1}) = \beta \E\left[ \int_0^T e^{\beta t}|Y_g - \varphi_g(t,X_t)|^2 \dd t\right].
    \end{equation}
    Note that
    \begin{align*}
        \E \left[ \left| \varphi_g(t,X_t) - Y_g \right|^2 \right] = \E \left[ \E\left[ |\varphi_g(t,X_t) - G^\epsilon(t,X_t) + G^\epsilon(t,X_t) - Y_g |^2 \big| t,X_t \right] \right].
    \end{align*}
    Since $G^\epsilon(t,X_t) = \E[Y_g|t,X_t]$, the cross-term in the expansion of the square above vanishes. Thus
    \begin{align*}
        L_g(\varphi_g,\mathbf{U}_{n-1}) &= \beta\E \left[ \int_0^T e^{\beta t}|\varphi_g(t,X_t) - G^\epsilon(t,X_t)|^2 \dd t\right] + \beta \E\left[\int_0^Te^{\beta t} |G^\epsilon(t,X_t) - Y_g|^2 \dd t\right] \\
        &=: \beta\|\varphi_g - G^{\epsilon}\|_{\beta}^2 + C,
    \end{align*}
    where $ C$ is a constant independent of $ \varphi$. Summing the components for $V, G, H$ yields the result.
\end{proof}

\subsection{Proof of Theorem \ref{thm: one step error bound}}\label{proof sec: one step error bound}
We first prove Theorem \ref{thm: one step error bound}-(a), the case with the weak simulator.
For simplicity, we suppress the notation $\mathbf{U}_{n-1}$ and use $L(\varphi), \tilde L^{(B)}(\varphi)$ for total loss and empirical total loss, respectively. The following lemma provides the upper bound of $E_n$ in terms of the statistical error, approximation error and the bias due to the ZOD estimator.
\begin{lemma}\label{lemma: one step error decom}
    The one step approximation error $E_n = \normmix{\mathbf{U}_n - \mathcal{S}(\mathbf{U}_{n-1})}$ has the upper bound
    \begin{equation}
        E_n^2 \le 4\sup_{\varphi \in \mathcal{H}}|L(\varphi) - \tilde L^{(B)}(\varphi)| + 2 \inf_{\varphi\in \mathcal{H}} \normmix{\varphi - \mathcal{S}^\epsilon(\mathbf{U}_{n-1})}^2+  2 \normmix{\mathcal{S}^\epsilon(\mathbf{U}_{n-1}) - \mathcal{S}(\mathbf{U}_{n-1})}^2.
    \end{equation}
\end{lemma}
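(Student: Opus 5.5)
We will combine a triangle inequality with the usual ERM decomposition, using Lemma \ref{lemma: total loss decomp} to turn the population loss into the $\normmix{\cdot}$ distance to the ZOD target. Write $\mathbf{S}^\epsilon:=\mathcal{S}^\epsilon(\mathbf{U}_{n-1})$ and $\mathbf{S}:=\mathcal{S}(\mathbf{U}_{n-1})$. By the triangle inequality and $(a+b)^2\le 2a^2+2b^2$,
\begin{equation}
E_n^2 \le 2\normmix{\mathbf{U}_n-\mathbf{S}^\epsilon}^2 + 2\normmix{\mathbf{S}^\epsilon-\mathbf{S}}^2 .
\end{equation}
The second term already appears in the claimed bound as the ZOD-bias term. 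It therefore remains to show
\begin{equation}
\normmix{\mathbf{U}_n-\mathbf{S}^\epsilon}^2\le 2\sup_{\varphi\in\mathcal H}|L(\varphi)-\tilde L^{(B)}(\varphi)| + \inf_{\varphi\in\mathcal H}\normmix{\varphi-\mathbf{S}^\epsilon}^2 .
\end{equation}

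For this we use Lemma \ref{lemma: total loss decomp}: for every $\varphi\in\mathcal H$ we have $L(\varphi)=\normmix{\varphi-\mathbf{S}^\epsilon}^2+C$, where $C$ does not depend on $\varphi$. Fix any $\varphi\in\mathcal H$. Since $\mathbf{U}_n$ minimizes $\tilde L^{(B)}$ over $\mathcal H$, the standard three-term chain gives
\begin{equation}
L(\mathbf{U}_n)-L(\varphi) = [L(\mathbf{U}_n)-\tilde L^{(B)}(\mathbf{U}_n)] + [\tilde L^{(B)}(\mathbf{U}_n)-\tilde L^{(B)}(\varphi)] + [\tilde L^{(B)}(\varphi)-L(\varphi)] .
\end{equation}
The middle bracket is nonpositive, and each outer bracket is at most the supremum deviation, so
\begin{equation}
L(\mathbf{U}_n)-L(\varphi)\le 2\sup_{\psi\in\mathcal H}|L(\psi)-\tilde L^{(B)}(\psi)| .
\end{equation}
The constant $C$ cancels in the difference, so this reads
\begin{equation}
\normmix{\mathbf{U}_n-\mathbf{S}^\epsilon}^2\le \normmix{\varphi-\mathbf{S}^\epsilon}^2+2\sup_{\psi\in\mathcal H}|L(\psi)-\tilde L^{(B)}(\psi)| .
\end{equation}
Taking the infimum over $\varphi\in\mathcal H$ and inserting the result into the first display produces exactly the factors $4$, $2$ and $2$ in the lemma.

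The one point that needs care is that the loss components must match the weights in $\normmix{\cdot}$. In \eqref{eq:total_loss}, $L_v$ and $L_g$ carry the factor $\beta$ and $L_h$ does not, which matches the definition of $\|\mathbf U\|_{\beta,\text{mix}}$, so Lemma \ref{lemma: total loss decomp} applies as stated. We will also note that $L$ is finite on $\mathcal H$, since networks are bounded by Assumption \ref{assum: hypothesis space} and the rewards have finite second moments. Everything else is elementary, so we expect no real obstacle here; the substantive work, bounding the sup deviation, belongs to later steps.
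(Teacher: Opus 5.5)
Your proposal is correct and follows essentially the same route as the paper: split $E_n^2$ by the triangle inequality, convert loss gaps into $\|\cdot\|_{\beta,\text{mix}}$ distances via Lemma~\ref{lemma: total loss decomp}, and bound the excess population loss of the ERM minimizer by $2\sup_{\varphi\in\mathcal H}|L(\varphi)-\tilde L^{(B)}(\varphi)|$ with the three-term chain. The only difference is cosmetic: the paper compares against a population minimizer $\mathbf{U}^{\mathcal H}_n\in\arg\min_{\varphi\in\mathcal H}L(\varphi)$ and passes through $L(\mathcal S^\epsilon(\mathbf U_{n-1}))$, whereas you compare against an arbitrary $\varphi\in\mathcal H$ and take the infimum at the end, which avoids needing that minimizer to exist.
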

\begin{proof}
    Denote $\bU^{\mathcal{H}}_n \in \arg\min_{\varphi\in \mathcal{H}} L(\varphi)$. It follows from the decomposition in Lemma \ref{lemma: total loss decomp} that
    \begin{equation}\label{eq: one step error decomp}
    \begin{aligned}
        E_n^2  &= \normmix{\bU_n - \mathcal{S}(\mathbf{U}_{n-1})}^2 \\
        &\le 2\normmix{\bU_n - \mathcal{S}^\epsilon(\mathbf{U}_{n-1})}^2 + 2\normmix{\mathcal{S}^\epsilon(\mathbf{U}_{n-1}) - \mathcal{S}(\mathbf{U}_{n-1})}^2\\
        & \le 2(L(\bU_n) - L(\mathcal{S}^\epsilon(\mathbf{U}_{n-1}))) + 2\normmix{\mathcal{S}^\epsilon(\mathbf{U}_{n-1}) - \mathcal{S}(\mathbf{U}_{n-1})}^2\\
        &\le 2(L(\bU_n) - L(\bU^{\mathcal{H}}_n)) + 2(L(\bU^{\mathcal{H}}_n) -  L(\mathcal{S}^\epsilon(\mathbf{U}_{n-1}))) + 2\normmix{\mathcal{S}^\epsilon(\mathbf{U}_{n-1}) - \mathcal{S}(\mathbf{U}_{n-1})}^2\\
        & = 2(L(\bU_n) - L(\bU^{\mathcal{H}}_n)) + 2 \inf_{\varphi\in \mathcal{H}} \normmix{\varphi - \mathcal{S}^\epsilon(\mathbf{U}_{n-1})}^2 + 2\normmix{\mathcal{S}^\epsilon(\mathbf{U}_{n-1}) - \mathcal{S}(\mathbf{U}_{n-1})}^2,
    \end{aligned}
    \end{equation}
    where in the last equality we used the fact $L(\bU^{\mathcal{H}}_n) -  L(\mathcal{S}^\epsilon(\mathbf{U}_{n-1})) = \normmix{\bU^{\mathcal{H}}_n - \mathcal{S}^\epsilon(\mathbf{U}_{n-1})}^2 =  \inf_{\varphi\in \mathcal{H}} \normmix{\varphi - \mathcal{S}^\epsilon(\mathbf{U}_{n-1})}^2$. For the first term on the right hand side of the above equation, note that
    \begin{align*}
        L(\bU_n) - L(\bU^{\mathcal{H}}_n) &= L(\bU_n) - \tilde L^{(B)}(\bU_n) + \tilde L^{(B)}(\bU_n) - \tilde L^{(B)}(\bU^{\mathcal{H}}_n) +  \tilde L^{(B)}(\bU^{\mathcal{H}}_n) - L(\bU^{\mathcal{H}}_n)\\
        & \le L(\bU_n) - \tilde L^{(B)}(\bU_n) + \tilde L^{(B)}(\bU^{\mathcal{H}}_n) - L(\bU^{\mathcal{H}}_n)\\
        & \le 2\sup_{\varphi \in \mathcal{H}} |\tilde L^{(B)}(\varphi) - L(\varphi)|,
    \end{align*}
    where the first inequality follows from $\tilde L^{(B)}(\bU_n) - \tilde L^{(B)}(\bU^{\mathcal{H}}_n) \le 0$ due to $\bU_n \in \arg\min_{\varphi \in \mathcal{H}} \tilde L^{(B)}(\varphi)$. This along with \eqref{eq: one step error decomp} yields the desired result.
\end{proof}
\paragraph{The ZOD error}
It follows from  Assumption \ref{assum: reward and f regularity} and Proposition \ref{prop:zod one-point} that
\begin{equation}
    \normmix{\mathcal{S}^\epsilon(\mathbf{U}_{n-1}) - \mathcal{S}(\mathbf{U}_{n-1})}^2 \le C \epsilon^4.
\end{equation}
\paragraph{The statistical error}
Now we focus on the statistical error $\sup_{\varphi \in \cH} |L(\varphi) - \tilde L^{(B)}(\varphi)|$.
Denote the time discretization version of the loss function with respect to the time grid $ \mathscr{G}:=\left\{ 0 = t_0<t_1<\cdots< t_N = T\right\}$ as
\begin{equation}
    L^{\mathscr{G}}(\varphi) := L^{\mathscr{G}}_v(\varphi_v) + L^{\mathscr{G}}_g(\varphi_g) + L^{\mathscr{G}}_h(\varphi_h)
\end{equation}
where
\begin{equation}
    \begin{aligned}
        L_v^{\mathscr{G}}(\varphi_v) &:= \beta\E\left[\sum_{j=0}^{N-1} e^{\beta t_j}\left( \varphi_v(t_j,X_{t_j}) - \tilde R^{t_j,X_{t_j}}(\mathbf{U}_{n-1}) \right)^2 \Delta t_j\right], \\
    L_{g}^{\mathscr{G}}(\varphi_g) &:= \beta\E\left[\sum_{j = 0}^{N-1} e^{\beta t_j}\left| \varphi_g(t_j,X_{t_j}) - \frac{Z}{\epsilon} \tilde R^{t_j, X_{t_j}+\epsilon Z}(\mathbf{U}_{n-1}) \right|^2 \Delta t_j\right], \\
    L_{h}^{\mathscr{G}}(\varphi_{h}) &:= \E\left[\sum_{j = 0}^{N-1} e^{\beta t_j}\left| \varphi_h(t_j,X_{t_j}) - \frac{ZZ^{\top} - I_d}{\epsilon^2} \tilde R^{t_j, X_{t_j}+\epsilon Z}(\mathbf{U}_{n-1}) \right|^2 \Delta t_j\right].
    \end{aligned}
\end{equation}
Noting  that $ \E[\tilde L^{(B)}(\varphi)] = L^{\mathscr{G}}(\varphi)$, we have
\begin{equation}\label{eq: population loss and empirical loss decomp}
    \sup_{\varphi \in \cH} \left| L(\varphi) - \tilde L^{(B)}(\varphi)\right| \le \sup_{\varphi \in \cH} \left|  \tilde L^{(B)}(\varphi) - L^{\mathscr{G}}(\varphi)\right| + \sup_{\varphi \in \cH} \left|  L^{\mathscr{G}}(\varphi) - L(\varphi)\right|.
\end{equation}
The second term on the right hand side of \eqref{eq: population loss and empirical loss decomp} is the time discretization error, which is bounded by the following lemma.
\begin{lemma}\label{lemma: discretization error}
Under Assumptions \ref{assum: hypothesis space}, \ref{assum: reward and f regularity}, \ref{assum: sde coef}, there is a constant $ C$ independent of $ \epsilon$ and $ N$ such that
\begin{equation}
    \sup_{\varphi \in \cH} \left|  L^{\mathscr{G}}(\varphi) - L(\varphi)\right| \le C \epsilon^{-4}N^{-1}.
\end{equation}
\end{lemma}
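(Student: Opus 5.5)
Fix $\varphi\in\mathcal H$ and bound each of the three components $|L^{\mathscr G}_\star(\varphi_\star)-L_\star(\varphi_\star)|$, $\star\in\{v,g,h\}$, separately. All constants must be uniform over $\mathcal H$, which Assumption \ref{assum: hypothesis space} and the uniform-in-$\varphi$ polynomial growth of Assumption \ref{assum: reward and f regularity} provide. Note that the continuous-time loss $L$ uses the exact reward $R$ with a time integral, whereas $L^{\mathscr G}$ has two discretizations: the outer Riemann sum over $t_j$ and the inner Riemann sum inside $\tilde R$. Accordingly, each component splits as
\[
L^{\mathscr G}_\star-L_\star=\bigl(L^{\mathscr G}_\star-\hat L^{\mathscr G}_\star\bigr)+\bigl(\hat L^{\mathscr G}_\star-L_\star\bigr),
\]
where $\hat L^{\mathscr G}_\star$ is the outer-discretized loss with the exact reward $R$ in place of $\tilde R$.

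\textbf{Outer discretization.} Write $\ell_\star(t):=\E[e^{\beta t}|\varphi_\star(t,X_t)-Y_\star(t)|^2]$, with $Y_g=\frac{Z}{\epsilon}R^{t,X_t+\epsilon Z}$, etc. Expanding the square as in the proof of Lemma \ref{lemma: total loss decomp} gives
\[
\ell_\star(t)=e^{\beta t}\,\E\bigl[|\varphi_\star|^2-2\varphi_\star\cdot \Psi^\epsilon_\star+\Xi^\epsilon_\star\bigr](t,X_t),
\]
where $\Psi^\epsilon_\star$ is the conditional mean ($V^\epsilon,G^\epsilon,H^\epsilon$) and $\Xi^\epsilon_\star$ the conditional second moment. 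The second moment is expressed through $J_2$: for the Hessian, $\epsilon^{-4}\E[|ZZ^\top-I|^2 J_2(t,X_t+\epsilon Z)]$.
\begin{itemize}
\item $J_1\in C^{1,4}_p$ makes $t\mapsto \E[\varphi\cdot\Psi^\epsilon_\star(t,X_t)]$ Lipschitz with a constant independent of $\epsilon$ (via Lemma-\ref{lemma:zod classical}-type Taylor arguments).
\item $J_2\in C_p^{1,2}$ together with the factor $\epsilon^{-2}$ or $\epsilon^{-4}$ makes $t\mapsto \E[\Xi^\epsilon_\star(t,X_t)]$ Lipschitz with constant $C\epsilon^{-4}$.
\end{itemize}
Time-regularity of $t\mapsto \E[\psi(t,X_t)]$ for $\psi\in C^{1,2}_p$ follows from Itô's formula, Assumption \ref{assum: sde coef} and polynomial moment bounds of $X$. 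A left Riemann sum of a Lipschitz function then has error $O(\mathrm{Lip}/N)$, giving $C\epsilon^{-4}N^{-1}$.

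\textbf{Inner discretization.} Expand
\[
|\varphi-aR_1|^2-|\varphi-a\tilde R_1|^2=(\tilde R_1-R_1)\,a\cdot\bigl(2\varphi-a(R_1+\tilde R_1)\bigr),
\]
with $a\in\{1,Z/\epsilon,(ZZ^\top-I)/\epsilon^2\}$. By Cauchy--Schwarz, and using that $\varphi$, $R$, $\tilde R$ and $Z$ have moments of every order, this is at most $C\epsilon^{-4}\|\tilde R-R\|_{L^2}$. It therefore remains to show
\[
\E\Bigl|\int_{t_j}^T \tilde f(s,X_s)\,\dd s-\sum_{i\ge j}\tilde f(t_i,X_{t_i})\Delta t_i\Bigr|^2\le C N^{-2}
\]
(or $N^{-1}$ if only a weaker rate is available). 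Starting points are $x$ or $x+\epsilon Z$, and polynomial growth in the starting point is absorbed by Gaussian moments.

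\textbf{Main obstacle.} The hard step is this inner bound. A pathwise estimate of $\tilde f(s,X_s)-\tilde f(t_i,X_{t_i})$ via Itô's formula on $\tilde f\in C^{1,2}_p$ produces a martingale term of size $O(\Delta t^{1/2})$ per interval. Summed naively, these give only $O(N^{-1/2})$ in $L^2$; one must instead use orthogonality of the stochastic-integral increments across intervals to recover $\E|\cdot|^2=O(N^{-1})$. That yields $\|\tilde R-R\|_{L^2}\le CN^{-1/2}$, which is too weak by itself. The rescue is to apply the bound inside expectations: condition on $(t_j,X_{t_j},Z)$ so that the cross term $\E[(\tilde R-R)\cdot(\cdots)]$ only sees the conditional bias of $\tilde R$, which is $O(N^{-1})$ by the drift part. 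The remaining quadratic term $\E|\tilde R-R|^2=O(N^{-1})$ is also of the right order. The delicate point is decoupling $a\varphi$ (which is $\mathcal F_{t_j}$-measurable jointly with $Z$) from the future increments, so that both pieces are $O(\epsilon^{-4}N^{-1})$ uniformly over $\mathcal H$.
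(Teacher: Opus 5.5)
Your architecture is the same as the paper's. You use the same split into an inner error and an outer error; your $\hat L^{\mathscr G}_\star$ is the paper's $\bar L^{\mathscr G}_h$. The outer error is handled by a bounded time derivative of $\ell_\star(t)$, obtained from $J_1$, $J_2$ and It\^o's formula. The inner error is reduced, via $|\varphi-aR|^2-|\varphi-a\tilde R|^2=(\tilde R-R)\,a\cdot(2\varphi-a(R+\tilde R))$ and Cauchy--Schwarz, to $C\epsilon^{-4}\|\tilde R-R\|_{L^2}$.

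The gap is in your last paragraph, where the rate of $\|\tilde R-R\|_{L^2}$ is miscounted. Write $\tilde f(s,X_s)-\tilde f(t_k,X_{t_k})=\int_{t_k}^s A_u\,\dd u+\int_{t_k}^s B_u\,\dd W_u$ and apply stochastic Fubini. The martingale part of $\Delta=\tilde R-R$ on $[t_k,t_{k+1}]$ is then $-\int_{t_k}^{t_{k+1}}(t_{k+1}-u)B_u\,\dd W_u$. By It\^o's isometry its second moment is $\int_{t_k}^{t_{k+1}}(t_{k+1}-u)^2\E|B_u|^2\,\dd u\le Ch^3$, with $h=T/N$. These pieces are orthogonal across $k$, so the total second moment is at most $CNh^3\le CN^{-2}$, not $N^{-1}$. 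Note that $N^{-1}$ is exactly what the naive triangle-inequality count already gives, so your version of the orthogonality step buys nothing. The drift part $-\sum_k\int_{t_k}^{t_{k+1}}(t_{k+1}-u)A_u\,\dd u$ is $O(N^{-1})$ in $L^2$. Together these give $\|\Delta(t_j,y)\|_{L^2}\le C(1+|y|^q)/N$, which is exactly the paper's estimate. Your own Cauchy--Schwarz step then closes the inner error at $C\epsilon^{-4}N^{-1}$.

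So the conditioning ``rescue'' should be dropped, and as written it would not work anyway. The cross term is $-2\E[(\varphi-aR)\cdot a(\tilde R-R)]$. The piece $\E[a\cdot\varphi\,(\tilde R-R)]$ does only see the conditional bias, but the cross term also contains $\E[|a|^2R(\tilde R-R)]$. Here $R$ depends on the Brownian path after $t_j$, so conditioning on $(X_{t_j},Z)$ does not reduce this piece to a bias. Cauchy--Schwarz with your claimed $\|\tilde R-R\|_{L^2}=O(N^{-1/2})$ would only give $O(\epsilon^{-4}N^{-1/2})$. One could patch this with a martingale representation of $R$ (integrand $\sqrt2\sigma^\top\nabla_x J_1$), but the direct $O(N^{-1})$ bound makes it moot.

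A minor point: you claim $t\mapsto\E[\varphi\cdot\Psi^\epsilon_\star(t,X_t)]$ is Lipschitz uniformly in $\epsilon$. That is more than $J_1\in C^{1,4}_p$ readily gives; the paper settles for $C\epsilon^{-2}$. This is harmless, since the $J_2$ term already costs $\epsilon^{-4}$.
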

\begin{proof}
    We aim to bound the difference between the continuous-time population loss $L(\varphi)$ and its discrete-time approximation $L^{\mathcal{G}}(\varphi)$. We prove the Hessian loss for illustration, and the value and gradient components are treated in the same way and are of lower order in $\epsilon$. Let the estimator be denoted by
\begin{equation}
    \xi^\epsilon(Z) := \frac{ZZ^\top - I_d}{\epsilon^2},\quad \tilde f(t,x):=f(t,x,\mathbf U_{n-1}(t,x)).
\end{equation}
The continuous and discrete losses are defined as:
\begin{align}
    L_h(\varphi_h) &:=  \int_0^T \E \left[ e^{\beta t} \left| \varphi_h(t, X_t) - \xi^\epsilon(Z)R^{t, X_t+\epsilon Z}(\mathbf{U}_{n-1}) \right|^2 \right] \dd t, \\
    L_h^{\mathcal{G}}(\varphi_h) &:=  \sum_{i=0}^{N-1} \E \left[ e^{\beta t_i} \left| \varphi_h(t_i, X_{t_i}) - \xi^\epsilon(Z) \tilde R^{t_i, X_{t_i}+\epsilon Z}(\mathbf{U}_{n-1}) \right|^2 \right] \Delta t_i,
\end{align}
where $\tilde R^{t,x}(\bU_{n-1})$ uses the discrete reward approximation
\begin{equation}
    \tilde R^{t,x}(\bU_{n-1}) = g(X_T^{t,x}) + \sum_i f(t_i,X_{t_i}^{t,x},\bU_{n-1}(t_i,X_{t_i}^{t,x}))\Delta t_i,
\end{equation}
instead of the exact $R^{t,x}(\bU_{n-1})$
\begin{equation}
    R^{t,x}(\bU_{n-1}) = g(X_T^{t,x}) + \int_t^{T} f(s,X_s^{t,x},\bU_{n-1}(s,X_{s}^{t,x}))\dd s.
\end{equation}
Denote
\[
    \Delta(t,y)
    :=
    \tilde R^{t,y}(\mathbf U_{n-1})
    -
    R^{t,y}(\mathbf U_{n-1}).
\]
Introduce the intermediate grid loss $\bar L_h^{\mathscr G}$ obtained by
replacing the time integral in $L_h$ with the grid sum while keeping the exact
reward $R$.
\begin{equation}
    \Bar{L}_h(\varphi_h)^{\mathscr{G}} :=   \sum_{i=0}^{N-1} \E \left[ e^{\beta t_i} \left| \varphi_h(t_i, X_{t_i}) - \xi^\epsilon(Z)  R^{t_i, X_{t_i}+\epsilon Z}(\mathbf{U}_{n-1}) \right|^2 \right] \Delta t_i.
\end{equation}

We have the following inner and outer error decompositions
\begin{equation}
    \begin{aligned}
        \left|L_h(\varphi_h) - L_h^{\mathcal{G}}(\varphi_h)\right|\le \underbrace{\left|L_h^{\mathscr G}(\varphi_h)-\bar L_h^{\mathscr G}(\varphi_h) \right|}_{\textrm{Inner Discretization Error}}+ \underbrace{\left| \bar L_h^{\mathscr G}(\varphi_h)-L_h(\varphi_h)\right|}_{\textrm{Outer Discretization Error}}.
    \end{aligned}
\end{equation}
Therefore, there are two types of error -- the inner and the outer discretization errors.
For the inner discretization error, it suffices to bound
\begin{equation}\label{eq: inner decomp}
\begin{aligned}
     &\left| \E \left[ \left\| \varphi_h(t_i, X_{t_i}) - \xi^\epsilon(Z)R^{t_i, X_{t_i}+\epsilon Z} \right\|^2 \right] - \E \left[ \left\| \varphi_h(t_i, X_{t_i}) - \xi^\epsilon(Z)\tilde R^{t_i, X_{t_i}+\epsilon Z} \right\|^2 \right]\right|\\
     &\le \E\left[ \left| 2\varphi_h(t_i,X_{t_i}) - \xi^{\epsilon}(Z)(\tilde R^{t_i,X_{t_i}+ \epsilon Z} + R^{t_i,X_{t_i} + \epsilon Z})\right|\left| \xi^{\epsilon}(Z)\right|\left|\Delta(t_i,X_{t_i}+\epsilon Z)\right|\right].
\end{aligned}
\end{equation}
Due to the polynomial growth condition and the moment bounds of $ X_t$, it suffices to bound $ \|\Delta(t_i,X_{t_i} + \epsilon Z)\|_{L_2}$.
To this end,
recall that $R^{t,y} = g(X_T^{t,y}) + \int_t^T \tilde f(s, X_s^{t,y}) \dd s$. The terminal part cancels and
\[
    \Delta(t_i,y)
    =
    \sum_{k=i}^{N-1}
    \int_{t_k}^{t_{k+1}}
    \left[
    \tilde f(t_k,X_{t_k}^{t_i,y})
    -
    \tilde f(s,X_s^{t_i,y})
    \right]\dd s .
\]
By Assumption \ref{assum: reward and f regularity}, It\^o's formula gives
\[
    \dd \tilde f(s,X_s^{t_i,y})
    =
    A_s\,\dd s+ B_{s}\,\dd W_s,
    \qquad
    \E\left[|A_s|^2+|B_{s}|^2\right]
    \le C(1+|y|^q)
\]
for some $q\ge0$, uniformly in $s$.
Stochastic Fubini therefore yields
\[
\begin{aligned}
    \Delta(t_i,y)
    =
    -\sum_{k=i}^{N-1}
    \int_{t_k}^{t_{k+1}}(t_{k+1}-u)A_u\,\dd u
    -\sum_{k=i}^{N-1}
    \int_{t_k}^{t_{k+1}}(t_{k+1}-u)B_{u}\,\dd W_u .
\end{aligned}
\]
The drift term has $L^2$ norm at most $CN(T/N)^2\le C/N$.
For the martingale term, It\^o's isometry and orthogonality of martingale
increments give
\[
    \E\left|
    \sum_{k=i}^{N-1}
    \int_{t_k}^{t_{k+1}}(t_{k+1}-u)B_{u}\,\dd W_u
    \right|^2
    \le
    CNh^3
    \le
    \frac{C}{N^2}.
\]
Hence
\[
    \|\Delta(t_i,y)\|_{L^2}
    \le
    \frac{C(1+|y|^q)}{N}.
\]
Conditioning on $Z$, taking $y=X_{t_i}+\epsilon Z$, and using the moment
bounds for $X_{t_i}$ together with Gaussian moments,
\[
    \left\|
    \Delta(t_i,X_{t_i}+\epsilon Z)
    \right\|_{L^2}
    \le
    \frac{C}{N}.
\]
This along with \eqref{eq: inner decomp} yields
\[
    \sup_{\varphi_h\in\mathcal H}
    |L_h^{\mathscr G}(\varphi_h)-\bar L_h^{\mathscr G}(\varphi_h)|
    \le
    \frac{C}{N\epsilon^4}.
\]

Now we bound the outer discretization error:
\begin{equation}
    \left| \int_0^T l(t) \dd t - \sum_{i=0}^{N-1} l(t_i) \Delta t_i \right|, \quad \text{where } l(t) = e^{\beta t} \E \left[ \left\| \varphi_h(t, X_t) - \xi^\epsilon(Z)R^{t,X_t + \epsilon Z} \right\|^2 \right].
\end{equation}
It suffices to show that the time derivative $\frac{\dd}{\dd t} l(t)$ is bounded. Indeed,
\begin{equation}
\begin{aligned}
    l(t) &= e^{\beta t}\E \left[ \left\| \varphi_h(t, X_t) - \xi^\epsilon(Z)R^{t,X_t + \epsilon Z} \right\|^2 \right] \\
    &= e^{\beta t}\left(\E\left[ \left\|\varphi_h(t,X_t)\right\|^2\right] + \E \left[ \left\| \xi^{\epsilon}(Z) R^{t, X_t+\epsilon Z} \right\|^2 \right] - 2\E\left[ \langle \varphi_h(t,X_t), \xi^{\epsilon}(Z) \rangle R^{t,X_t + \epsilon Z}\right]\right).
    \end{aligned}
\end{equation}
Since $ \varphi_h$ is smooth and bounded, we need only to analyze the second and third terms.

For the second term, using the conditional expectation, we have
\begin{equation}
    I_2(t):=\E\left[ \left\| \xi^{\epsilon}(Z) R^{t, X_t+\epsilon Z} \right\|^2\right] = \E\left[ \|\xi^{\epsilon}(Z)\|^2 \E\left[ (R^{t,X_t + \epsilon Z})^2\mid Z,X_t\right]\right].
\end{equation}

Denote $u(t, x) := \E [ (R^{t,x})^2 ]$.
Under Assumption \ref{assum: reward and f regularity}, the  derivatives of $ u$ satisfy the polynomial growth condition.
Therefore, the time derivative of the expectation is:
\begin{equation}
\begin{aligned}
    \frac{\dd}{\dd t}\E\left[ \left\| \xi^{\epsilon}(Z) R^{t, X_t+\epsilon Z} \right\|^2\right]& =   \E[\|\xi^{\epsilon}(Z)\|^2 \frac{\dd}{\dd t}u(t, X_t + \epsilon Z)] \\
    &= \E \left[ \|\xi^{\epsilon}(Z)\|^2(\partial_t + \mathcal{L}) F_Z(t,x)\big|_{x = X_t} \right],
    \end{aligned}
\end{equation}
where $ F_Z(t,x) = u(t,x + \epsilon Z)$.
This confirms that $|\frac{\dd}{\dd t} I_2(t)| \le C \epsilon^{-4}$.

Similarly, for the cross term involving $\E [ \langle \varphi, \xi^\epsilon(Z)R^{t,X_t+\epsilon Z} \rangle ]$, the derivative involves terms related to $\E[R^{t,x}]$. Using Assumption \ref{assum: reward and f regularity}, the derivative is bounded by $C \epsilon^{-2}$.

Combining these bounds, the total discretization error is bounded by:
\begin{equation}
    \sup_{\varphi \in \cH} \left| L_h^{\mathcal{G}}(\varphi) - L_h(\varphi) \right| \le C \left( \epsilon^{-4} N^{-1} + \epsilon^{-2} N^{-1} \right) \le C \epsilon^{-4} N^{-1}.
\end{equation}
This proves the desired result.
\end{proof}

The first term on the right hand side of \eqref{eq: population loss and empirical loss decomp} is the classical statistical error, and  we aim to bound it via the Rademacher complexity. Typically, the Rademacher complexity framework works for bounded loss functions. We first truncate the loss function and denote $ G$ as the implicit function and $ W$ as the underlying random variable, such that $ L^{\mathscr{G}}(\varphi) = \E[G(\varphi;W)]$ and $ \tilde L^{(B)}(\varphi) = \frac{1}{B}\sum_{i=1}^B G(\varphi;W_i)$.
Write the grid loss as
\[
    G(\varphi;W)
    =
    \sum_{j=0}^{N-1}\Delta t_j\,\ell_j(\varphi;W),
\]
where $\ell_j$ is the one-point squared value-gradient-Hessian loss at time
$t_j$.  For each grid index, define the local truncation event
\[
    \Omega_{K,j}
    :=
    \left\{\sup_{0\le t\le T}|X_t|<K\right\}
    \cap
    \{|Z|<K\}
    \cap
    \left\{
    \sup_{t_j\le s\le T}
    \left|X_s^{t_j,X_{t_j}+\epsilon Z}\right|<K
    \right\}.
\]
We clip the grid loss term by term:
\begin{equation}
    G^{(K)}(\varphi;W)
    :=
    \sum_{j=0}^{N-1}
    \Delta t_j\,\ell_j(\varphi;W)\1_{\Omega_{K,j}} .
\end{equation}
By Assumption \ref{assum: reward and f regularity}, there exists a finite exponent $q\ge1$, depending only on the polynomial growth orders of $g(x)$, $J_1(t,x;\varphi)$, $J_2(t,x;\varphi)$, and $\tilde f(t,x;\varphi)$ uniformly over $\varphi\in\mathcal H$, such that on $\Omega_{K,j}$ the one-point value, gradient, and Hessian targets are bounded by $ C\epsilon^{-2}(1+K^{q})$. As a consequence of this truncation, $ G^{(K)}$ has the following bound
\begin{equation}
    |G^{(K)}(\varphi;W)|
    \le
    C\epsilon^{-4}(1+K^{q}) .
\end{equation}
Next, we aim to derive the probability of the truncation sets. To this end, we need the following lemmas.
To derive the tail probability of the trajectory of $ X$, we will use the following Bernstein martingale inequality from \cite{dzhaparidze_bernstein-type_2001}.
\begin{lemma}\label{lemma: bernstein inequality}
    Let $ M_t$ be a continuous square integrable martingale. Then
    \begin{equation}
        \p\left( \sup_{0\le t\le T}|M_t| \ge z, \ \langle M\rangle_T \le L\right) \le 2\exp\left( -\frac{1}{2}\frac{z^2}{L}\right).
    \end{equation}
\end{lemma}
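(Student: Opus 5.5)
The inequality is a Bernstein-type bound for continuous martingales, and the plan is to prove it by the exponential supermartingale method together with a stopping-time argument. Fix $\lambda>0$ and set
\[
\mathcal E^\lambda_t:=\exp\left(\lambda M_t-\tfrac{\lambda^2}{2}\langle M\rangle_t\right).
\]
Assume, by replacing $M$ with $M-M_0$, that $M_0=0$. Since $M$ is a continuous local martingale, It\^o's formula gives $\dd\mathcal E^\lambda_t=\lambda\mathcal E^\lambda_t\,\dd M_t$. Hence $\mathcal E^\lambda$ is a nonnegative continuous local martingale, and by Fatou's lemma a supermartingale with $\E[\mathcal E^\lambda_\tau]\le 1$ for every bounded stopping time $\tau$. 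No Novikov-type condition is needed, because only the supermartingale inequality is used.

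The first key step handles the one-sided event $\{\sup_{t\le T}M_t\ge z,\ \langle M\rangle_T\le L\}$. Introduce the stopping time $\tau:=\inf\{t\ge0: M_t\ge z\}\wedge T$. On this event, continuity gives $M_\tau=z$. Monotonicity of the quadratic variation gives $\langle M\rangle_\tau\le\langle M\rangle_T\le L$. Therefore, on the event,
\[
\mathcal E^\lambda_\tau\ge \exp\left(\lambda z-\tfrac{\lambda^2}{2}L\right).
\]
Markov's inequality combined with $\E[\mathcal E^\lambda_\tau]\le1$ then yields
\[
\p\left(\sup_{t\le T}M_t\ge z,\ \langle M\rangle_T\le L\right)\le \exp\left(-\lambda z+\tfrac{\lambda^2}{2}L\right).
\]
Optimizing over $\lambda$ with $\lambda=z/L$ gives $\exp(-z^2/(2L))$.

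The second step applies the same argument to $-M$, which is also a continuous martingale with the same quadratic variation. A union bound over $\{\sup M_t\ge z\}$ and $\{\sup(-M_t)\ge z\}$ then produces the factor $2$ and the stated inequality.

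The main point of care is the stopping-time step: we must not assume $\mathcal E^\lambda$ is a true martingale, and the constraint $\langle M\rangle_T\le L$ has to be transferred to the random time $\tau$. Both issues are resolved by using only the supermartingale property at bounded stopping times and the monotonicity of $\langle M\rangle$. The result is also exactly the continuous-case specialization of \cite{dzhaparidze_bernstein-type_2001}, which can be cited directly. In the subsequent truncation argument we will apply this lemma to $M_t=\int_0^t\sqrt2\sigma(X_s)\dd W_s$ componentwise. There $\langle M\rangle_T\le 2\sigma_0^2T$ deterministically by Assumption \ref{assum: sde coef}, so the event constraint is automatically satisfied, and Gr\"onwall's inequality on the linear-growth drift converts this into Gaussian tails for $\sup_t|X_t|$.
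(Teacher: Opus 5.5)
Your proof is correct, and it takes a genuinely different route from the paper. The paper gives no argument at all: it imports the inequality from \cite{dzhaparidze_bernstein-type_2001}, where it is the continuous special case of a Bernstein-type inequality for martingales with bounded jumps. Your exponential-supermartingale argument runs as follows: stop at the first hitting time of $z$; transfer $\langle M\rangle_T\le L$ to $\langle M\rangle_\tau\le L$ by monotonicity; apply Markov to $\mathcal E^\lambda_\tau$ using $\E[\mathcal E^\lambda_\tau]\le 1$; take $\lambda=z/L$; then symmetrize. This is self-contained and elementary, and it uses only the supermartingale property at bounded stopping times. It also needs only that $M$ be a continuous local martingale, so square integrability is superfluous. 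What the citation buys in return is the jump case, which the paper never uses. Your componentwise use in the application, with $\langle M^i\rangle_T\le 2\sigma_0^2T$ and a union bound over the $d$ coordinates, is more careful than the paper's. The paper applies the scalar lemma directly to the vector norm and drops the factor $2$ coming from $\sqrt2\sigma$.

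One correction: replacing $M$ by $M-M_0$ is not a without-loss-of-generality reduction. It changes the event $\{\sup_{t\le T}|M_t|\ge z\}$, and it also gives $\mathcal E^\lambda_0=e^{\lambda M_0}\neq1$. In fact the lemma as written is false when $M_0\neq0$. For the constant martingale $M\equiv c$ with $|c|\ge z$, one has $\langle M\rangle_T=0\le L$, so the left side equals $1$, which exceeds $2\exp(-z^2/(2L))$ once $z^2>2L\log 2$. You should state $M_0=0$ as an explicit hypothesis, together with $z>0$ and $L>0$, which your hitting-time step $M_\tau=z$ also uses. These hypotheses are implicit in the paper and hold in its application to $M_t=\sqrt2\int_0^t\sigma(X_s)\,\dd W_s$.
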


Based on this lemma, we state the tail probability of the trajectory of $ X$.
\begin{lemma}[Sub-Gaussian Tail of Base Trajectory]\label{lemma: base tail}
    Under Assumption \ref{assum: sde coef}, let $X$ be the solution to the SDE \eqref{eq:sde} starting from $X_0$. Then there exist constants $C_1, C_2 > 0$ depending only on $T, L_b, \sigma_{0}$ and $X_0$, such that for any $K > 0$:
\begin{equation}
    \mathbb{P}\left( \sup_{0 \le t \le T} |X_t| > K \right) \le C_1 \exp(-C_2 K^2).
\end{equation}
\end{lemma}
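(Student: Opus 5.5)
We need to prove $\p(\sup_{t\le T}|X_t|>K)\le C_1e^{-C_2K^2}$ for the SDE with linear-growth drift and bounded volatility. The plan is to split $X_t=X_0+\int_0^t b(X_s)\dd s+M_t$ with $M_t:=\sqrt2\int_0^t\sigma(X_s)\dd W_s$, control the drift by Gronwall, and control the martingale part by Lemma \ref{lemma: bernstein inequality}. We treat $X_0$ as deterministic, or bounded, as in the algorithm where $X_0=x_0$. A sub-Gaussian initial law would be handled by one more union bound.

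First, by Assumption \ref{assum: sde coef}, $|X_t|\le |X_0|+L_bT+L_b\int_0^t|X_s|\dd s+\sup_{s\le T}|M_s|$ for every $t\le T$. Gronwall's inequality applied pathwise to $t\mapsto \sup_{s\le t}|X_s|$ gives
\[
\sup_{0\le t\le T}|X_t|\le e^{L_bT}\Big(|X_0|+L_bT+\sup_{0\le t\le T}|M_t|\Big).
\]
Hence, for $K$ large enough that $Ke^{-L_bT}-|X_0|-L_bT\ge Ke^{-L_bT}/2$, the event $\{\sup_t|X_t|>K\}$ is contained in $\{\sup_t|M_t|>K'\}$ with $K':=Ke^{-L_bT}/2$. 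For small $K$ the bound holds trivially after enlarging $C_1$, since probabilities are at most one.

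Second, we control $M$. For the vector martingale we work componentwise: $|M_t|\le\sqrt d\max_i|M^i_t|$, and each $M^i$ is a continuous square-integrable martingale with $\langle M^i\rangle_T=2\int_0^T|\sigma_{i\cdot}(X_s)|^2\dd s\le 2\sigma_0^2T=:L$ almost surely. The bracket is deterministically bounded because $\sigma$ is bounded, and this is precisely why the assumption requires bounded rather than merely linear-growth volatility. Lemma \ref{lemma: bernstein inequality} then gives
\[
\p\Big(\sup_t|M^i_t|\ge K'/\sqrt d\Big)\le 2\exp\!\big(-K'^2/(2dL)\big).
\]
A union bound over $i=1,\dots,d$ yields $\p(\sup_t|M_t|>K')\le 2d\exp(-K^2e^{-2L_bT}/(16 d\sigma_0^2T))$. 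This gives the claim with $C_2=e^{-2L_bT}/(16d\sigma_0^2T)$, and with $C_1$ absorbing $2d$ and the small-$K$ regime.

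The main obstacle is step one, because the drift's linear growth could in principle create heavy tails. Pathwise Gronwall avoids this entirely, since it is a deterministic inequality and involves no expectations. The remaining subtlety is only bookkeeping: the constants pick up a dependence on $d$ through $\sigma_0$ and the union bound, and one should note that the lemma's constants are allowed to depend on these model constants.
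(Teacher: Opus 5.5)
Your proof is correct and follows the paper's own argument: the same split into drift plus $M_t=\sqrt2\int_0^t\sigma(X_s)\,\dd W_s$, the same pathwise Gr\"onwall bound giving $\sup_t|X_t|\le e^{L_bT}(|X_0|+L_bT+\sup_t|M_t|)$, and Lemma~\ref{lemma: bernstein inequality} applied with the deterministic bracket bound that comes from bounded $\sigma$. Two points of yours are more careful than the paper: you use a componentwise union bound, whereas the paper applies the scalar Bernstein inequality directly to the vector martingale, and you keep the factor $2$ in $\langle M^i\rangle_T$. The only cost is that your $C_1,C_2$ also depend on $d$, which is harmless for fixed $d$ and in any case unavoidable if $\sigma_0$ bounds the operator norm of $\sigma$.
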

\begin{proof}
The SDE in integral form is given by:
\begin{equation}
    X_t = X_0 + \int_0^t b(s, X_s) \, \mathrm{d}s + M_t, \quad \text{where } M_t = \sqrt{2}\int_0^t \sigma(s, X_s) \, \mathrm{d}W_s.
\end{equation}
Note that $M_t$ is a continuous square-integrable martingale. Taking the norm and using the linear growth condition we get
\begin{align}
    |X_t| &\le |X_0| + \int_0^t L_b(1 + |X_s|) \, \mathrm{d}s + |M_t| \nonumber \\
          &= \left( |X_0| + L_b T + |M_t| \right) + L_b \int_0^t |X_s| \, \mathrm{d}s.
\end{align}
Let $M^*_T := \sup_{0 \le t \le T} |M_t|$. Applying Grönwall's inequality yields a path-wise bound:
\begin{equation}
    \sup_{0 \le t \le T} |X_t| \le \left( |X_0| + L_b T + M^*_T \right) e^{L_b T}.
\end{equation}
Therefore, the event $\{\sup_{0 \le t \le T} |X_t| > K\}$ implies:
\begin{equation}
    M^*_T > K e^{-L_b T} - (|X_0| + L_b T).
\end{equation}
For sufficiently large $K$, the RHS is approximately proportional to $K$. Thus, it suffices to bound the tail of $M^*_T$.
We invoke Bernstein's inequality for continuous martingales (Lemma \ref{lemma: bernstein inequality}).
Under the bounded diffusion assumption, the quadratic variation is deterministically bounded:
\begin{equation}
    \langle M \rangle_T = \int_0^T \|\sigma(s, X_s)\|^2 \, \mathrm{d}s \le T \sigma_{\max}^2.
\end{equation}
Setting $L = T \sigma_{\max}^2$, we obtain:
\begin{equation}
    \mathbb{P}(M^*_T \ge z) \le 2 \exp\left( -\frac{z^2}{2 T \sigma_{\max}^2} \right).
\end{equation}
Combining this with the Grönwall bound proves the lemma.
\end{proof}
 Recall the $\psi_2$-Orlicz norm (sub-Gaussian norm) is defined as $\|X\|_{\psi_2} = \inf \{ C > 0 : \mathbb{E}[\exp(|X|^2/C^2)] \le 2 \}$.

As a direct consequence of Lemma \ref{lemma: base tail}, the supremum of the base trajectory satisfies:
\begin{equation}\label{eq: orlicz norm upper bound}
    \left\|\sup_{0\le t\le T}|X_t|\right\|_{\psi_2} \le C_{base},
\end{equation}
where $C_{base}$ is a constant depending only on the SDE coefficients and $T$.

We now extend this analysis to the branching paths used in the ZOD estimation. Let $\mathcal{G} = \{t_1, \dots, t_N\}$ be the time discretization grid. For each $j$, let $X_s^{t_j, \xi_j}$ denote the conditional path starting at time $t_j$ from $\xi_j = X_{t_j} + \epsilon Z$. The path satisfies:
\begin{equation}
    X_s^{t_j, \xi_j} = \xi_j + \int_{t_j}^s b(u, X_u^{t_j,\xi_j}) \, \mathrm{d}u + \sqrt{2}\int_{t_j}^s \sigma(u, X_u^{t_j,\xi_j}) \, \mathrm{d}W_u, \quad s \in [t_j, T].
\end{equation}

The following lemma bounds the branching trajectories.
\begin{lemma}[Uniform Tail for Branching Paths]
\label{lemma: branching tail}
    Under Assumption \ref{assum: sde coef}, there exists a constant $C > 0$ independent of $j, N$ such that for any $K>0$:
    \begin{equation}
        \sup_{0\le j<N}
        \mathbb{P}\left(
        \sup_{t_j\le s\le T}
        \left| X^{t_j, X_{t_j}+\epsilon Z}_s\right|>K
        \right)
        \le 2\exp(-CK^2).
    \end{equation}
    The same bounds hold with $X_{t_j}-\epsilon Z$ and $ X_{t_j}$ in place of
    $X_{t_j}+\epsilon Z$.
\end{lemma}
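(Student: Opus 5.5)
We reduce the branching paths to the base-trajectory estimate of Lemma \ref{lemma: base tail}, applied to a process started at the random point $\xi_j = X_{t_j}+\epsilon Z$. Fix $j$. By the strong Markov property (or simply the construction of the branching path), conditionally on $\xi_j$ the path $s\mapsto X_s^{t_j,\xi_j}$ solves \eqref{eq:sde} on $[t_j,T]$ driven by a Brownian motion independent of $\xi_j$.

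Repeating the Gr\"onwall argument of Lemma \ref{lemma: base tail} on the interval $[t_j,T]$ gives the pathwise bound
\[
\sup_{t_j\le s\le T}|X_s^{t_j,\xi_j}| \le \bigl(|\xi_j| + L_b T + M^{*,j}\bigr)e^{L_bT},
\]
where $M^{*,j}:=\sup_{t_j\le s\le T}\bigl|\sqrt2\int_{t_j}^s\sigma\,\dd W_u\bigr|$. The constants here depend only on $T$ and $L_b$, not on $j$.

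We then bound each summand separately.
\begin{enumerate}[(i)]
\item \emph{Martingale term.} Its quadratic variation is at most $2T\sigma_0^2$ deterministically. Lemma \ref{lemma: bernstein inequality} therefore yields $\p(M^{*,j}\ge z)\le 2\exp(-z^2/(4T\sigma_0^2))$, uniformly in $j$.
\item \emph{Starting point.} We have $|\xi_j|\le \sup_{0\le t\le T}|X_t| + \epsilon|Z|$ with $\epsilon<1$. The first term has a $\psi_2$ bound $C_{base}$ by \eqref{eq: orlicz norm upper bound}, and $|Z|$ is sub-Gaussian with $\||Z|\|_{\psi_2}\le C\sqrt d$. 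Hence $\||\xi_j|\|_{\psi_2}\le C$ uniformly in $j$ and $\epsilon$.
\end{enumerate}
Summing by the triangle inequality for $\psi_2$ gives $\bigl\|\sup_s|X_s^{t_j,\xi_j}|\bigr\|_{\psi_2}\le C'$ with $C'$ independent of $j$ and $N$. From this, $\p(\cdot>K)\le 2\exp(-K^2/C'^2)$, which is the claim with $C=1/C'^2$. The additive constant $L_bT$ is absorbed into the $\psi_2$ norm bound, and this costs nothing in the form $2e^{-CK^2}$ precisely because we work with Orlicz norms rather than raw tails. If one prefers raw tails, one takes $K$ large and adjusts $C$ for small $K$: the bound is trivial whenever $2e^{-CK^2}\ge1$.

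For $X_{t_j}-\epsilon Z$ the argument is identical, since $-Z\sim Z$. For $X_{t_j}$ it is the same with $\epsilon=0$.

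The main obstacle is the constant in front. A union bound over the three summands naturally produces $C_1 e^{-C_2K^2}$ with $C_1>2$, which is what Lemma \ref{lemma: base tail} gives. The $\psi_2$ route is what lets us land on the stated prefactor $2$: from $\E e^{Y^2/C'^2}\le2$, Markov's inequality gives exactly $\p(Y>K)\le 2e^{-K^2/C'^2}$.

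A second, milder point is dimension dependence through $|Z|$. This enters only via $C$, which the statement permits to depend on the model.
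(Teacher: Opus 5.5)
Your proposal is correct and follows the paper's proof essentially step by step: a Gr\"onwall pathwise bound on $[t_j,T]$; $\psi_2$ control of the base path, of $\epsilon|Z|$, and of the martingale part via Lemma~\ref{lemma: bernstein inequality}; the triangle inequality for $\|\cdot\|_{\psi_2}$; and the definition of the Orlicz norm to land on the prefactor $2$. One detail is worth tightening, and the paper glosses over it equally: Lemma~\ref{lemma: bernstein inequality} is scalar, so for the $\R^d$-valued martingale you should apply it coordinatewise, which only changes $d$-dependent constants that are then absorbed into $C'$.
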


\begin{proof}
    Fix an index $j \in \{1, \dots, N\}$. Let $X_s^{(j)} := X_s^{t_j, X_{t_j} + \epsilon Z}$. By applying Grönwall's inequality under the linear growth condition on $b$, we obtain the pathwise bound:
    \begin{equation}
        \sup_{s \in [t_j, T]} |X_s^{(j)}| \le e^{L_b T} \left( |X_{t_j}| + \epsilon |Z| + L_b T + \sup_{s \in [t_j, T]} |M_s^{(j)}| \right),
    \end{equation}
    where $M_s^{(j)} = \sqrt{2}\int_{t_j}^s \sigma(u, X_u^{(j)}) \mathrm{d}W_u$.

    We analyze the Orlicz $\psi_2$-norm of each term on the right-hand side of the above:
    \begin{enumerate}
        \item \textbf{Base Path:} By \eqref{eq: orlicz norm upper bound}, we have $\|\sup |X_t|\|_{\psi_2} \le C_{base}$.
        \item \textbf{Perturbation:} Since $Z \sim \mathcal{N}(0, I_d)$, it is sub-Gaussian with $\|Z\|_{\psi_2} \le C_{Gauss}$.
        \item \textbf{Martingale:} Since the diffusion $\sigma$ is uniformly bounded, the quadratic variation is bounded by $\sigma_{\max}^2 T$. By Bernstein's inequality for continuous martingales, $\sup_{s} |M_s^{(j)}|$ is sub-Gaussian with norm bounded by a constant $C_{mart}$ independent of the path history.
    \end{enumerate}
    By the triangle inequality for the $\psi_2$-norm, the supremum of the $j$-th branching path is sub-Gaussian:
    \begin{equation}\label{eq: orlicz bound}
        \left\| \sup_{t_j \le s \le T} |X_s^{(j)}| \right\|_{\psi_2} \le C_{path},
    \end{equation}
    where $C_{path}$ is independent of $j$ and $N$.

    The tail bound follows directly from \eqref{eq: orlicz bound} and
    the definition of the $\psi_2$-norm. The proof for the branch started at
    $X_{t_j}-\epsilon Z$ is identical.
\end{proof}

Combining Lemma \ref{lemma: base tail}, the Gaussian tail of $Z$, and Lemma
\ref{lemma: branching tail} gives the probability of  the local truncation events
$\Omega_{K,j}^c$ used in the proof of Theorem \ref{thm: one step error bound}.

\begin{lemma}\label{lemma: tail probability}
    Under Assumption \ref{assum: sde coef}, there exist constants
    $C_1,C_2>0$, independent of $j$ and $N$, such that
    \begin{equation}
        \sup_{0\le j<N}\mathbb P(\Omega_{K,j}^c)
        \le
        C_1\exp(-C_2K^2).
    \end{equation}
\end{lemma}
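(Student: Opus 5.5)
The plan is a union bound over the three events in the definition of $\Omega_{K,j}$, bounding each probability with the tail estimates already established. By De Morgan,
\[
\Omega_{K,j}^c \subseteq \left\{\sup_{0\le t\le T}|X_t|\ge K\right\}\cup\{|Z|\ge K\}\cup\left\{\sup_{t_j\le s\le T}\left|X_s^{t_j,X_{t_j}+\epsilon Z}\right|\ge K\right\}.
\]
Hence $\p(\Omega_{K,j}^c)$ is at most the sum of the three probabilities. None of the three bounds depends on $j$ or $N$, so taking the supremum over $j$ is harmless.

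For the first event, I would apply Lemma \ref{lemma: base tail} directly, which gives $C_1\exp(-C_2K^2)$. The boundary case (non-strict versus strict inequality) is absorbed by replacing $K$ with $K/2$, or by using the $\psi_2$ bound \eqref{eq: orlicz norm upper bound}. For the third event, Lemma \ref{lemma: branching tail} gives $2\exp(-CK^2)$ uniformly in $j$; its constant is independent of $j$ and $N$ because the Grönwall and Bernstein bounds only use $T$, $L_b$ and $\sigma_0$. For the second event, $|Z|^2$ is $\chi^2_d$, and a standard Gaussian concentration bound (Lipschitz concentration of $|Z|$ around $\sqrt d$) gives
\[
\p(|Z|\ge K)\le 2\exp\left(-c\,(K-\sqrt d)_+^2\right)\le C_1'\exp(-c'K^2),
\]
where the constants may depend on $d$ (equivalently, one can use $\|Z\|_{\psi_2}\le C_{Gauss}$ as in the proof of Lemma \ref{lemma: branching tail}). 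Since $0<\epsilon<1$, the constants also do not depend on $\epsilon$.

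Finally, I would sum the three bounds and take $C_1$ to be the sum of the prefactors and $C_2$ the minimum of the exponents. For small $K$ the bound is trivial after enlarging $C_1$ so that $C_1\ge 1$. The only mildly delicate point is ensuring uniformity in $j$ and $N$ for the branching paths. This is exactly what Lemma \ref{lemma: branching tail} provides, since the bound on $\sup_{t_j\le s\le T}|X^{(j)}_s|$ involves the base path supremum, $\epsilon|Z|$, and a martingale whose quadratic variation is at most $2\sigma_0^2 T$ regardless of the starting time $t_j$.
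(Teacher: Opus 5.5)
Your proposal is correct and follows the paper's own argument: the paper obtains this lemma by a union bound over the three events defining $\Omega_{K,j}$, combining Lemma~\ref{lemma: base tail}, the Gaussian tail of $Z$, and the $j$- and $N$-uniform bound of Lemma~\ref{lemma: branching tail}. Your extra remarks (strict versus non-strict inequalities, the $d$-dependence of the constant for $|Z|$, and enlarging $C_1$ for small $K$) are harmless refinements that the paper leaves implicit.
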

We are ready to obtain the upper bound of the first term on the right hand side of \eqref{eq: population loss and empirical loss decomp}. It can be decomposed as
\begin{equation}\label{eq: empiricial loss decomp}
    \begin{aligned}
        \sup_{\varphi \in \cH}\left| \tilde L^{(B)}(\varphi) - L^{\mathscr{G}}(\varphi)\right| &= \sup_{\varphi\in\cH} \left| \frac{1}{B}\sum_{i=1}^B G(\varphi;W_i) - \E[G(\varphi;W)]\right| \\
        &\le \sup_{\varphi \in \cH} \left| \frac{1}{B}\sum_{i=1}^B G^{(K)}(\varphi;W_i) - \E[G^{(K)}(\varphi;W)] \right|\\
        &\quad+ \sup_{\varphi \in \cH}\left| \frac{1}{B} \sum_{i=1}^B \left( G^{(K)}(\varphi;W_i) - G(\varphi;W_i)\right)\right|\\
        &\quad+ \sup_{\varphi \in \cH}\E\left[ \left| G^{(K)}(\varphi;W) - G(\varphi;W)\right|\right].
    \end{aligned}
\end{equation}
Let
\[
    R_K(W)
    :=
    \sup_{\varphi\in\mathcal H}
    |G(\varphi;W)-G^{(K)}(\varphi;W)|.
\]
Since the networks in $\mathcal H$ are uniformly bounded, the supremum over
$\varphi$ in the local squared loss is bounded pointwise by the reward targets:
\[
    \sup_{\varphi\in\mathcal H}\ell_j(\varphi;W)
    \le
    C\left(
    1+|Y_{v,j}|^2+\|Y_{g,j}^{\epsilon}\|^2+\|Y_{h,j}^{\epsilon}\|^2
    \right),
\]
where
\[
    Y_{v,j}:=\tilde R^{t_j,X_{t_j}}(\mathbf U_{n-1}),\quad
    Y_{g,j}^{\epsilon}:=\frac{Z}{\epsilon}
    \tilde R^{t_j,X_{t_j}+\epsilon Z}(\mathbf U_{n-1}),
\]
and
\[
    Y_{h,j}^{\epsilon}:=
    \frac{ZZ^\top-I_d}{\epsilon^2}
    \tilde R^{t_j,X_{t_j}+\epsilon Z}(\mathbf U_{n-1}).
\]
The assumption \ref{assum: reward and f regularity} and the polynomial
growth of $g$ imply that, for some $q\ge0$,
\[
    |R^{t_j,y}(\mathbf U_{n-1})|
    +
    |\tilde R^{t_j,y}(\mathbf U_{n-1})|
    \le
    C\left(1+
    \sup_{t_j\le s\le T}|X_s^{t_j,y}|^q\right).
\]
Together with Assumption \ref{assum: sde coef} and the Gaussian moments of
$Z$, this gives
\begin{equation}\label{eq: weak local loss second moment}
    \sup_{0\le j<N}
    \E\!\left[
    \left(\sup_{\varphi\in\mathcal H}\ell_j(\varphi;W)\right)^2
    \right]
    \le
    C\epsilon^{-8}.
\end{equation}
Hence, by Cauchy--Schwarz applied to each
grid summand, Lemma \ref{lemma: tail probability}, and
$\sum_{j=0}^{N-1}\Delta t_j=T$,
\begin{equation}\label{eq: term 3 bound}
\begin{aligned}
    \E R_K(W)
    &\le
    \sum_{j=0}^{N-1}\Delta t_j
    \E\left[
    \sup_{\varphi\in\mathcal H}\ell_j(\varphi;W)
    \1_{\Omega_{K,j}^c}\right]                                  \\
    &\le
    \sum_{j=0}^{N-1}\Delta t_j
    \left(
    \E\!\left[
    \left(\sup_{\varphi\in\mathcal H}\ell_j(\varphi;W)\right)^2
    \right]
    \right)^{1/2}
    \mathbb P(\Omega_{K,j}^c)^{1/2}                                  \\
    &\le
    C\epsilon^{-4}\exp(-CK^2).
\end{aligned}
\end{equation}
In particular,
\[
    \sup_{\varphi\in\mathcal H}
    \E\left[
    |G^{(K)}(\varphi;W)-G(\varphi;W)|
    \right]
    \le
    \E R_K(W)
    \le
    C\epsilon^{-4}\exp(-CK^2).
\]
Therefore, by Markov's inequality, with probability at least $1-\delta$,
\begin{equation}\label{eq: clipping remainder bound}
    \frac1B\sum_{i=1}^B R_K(W_i)+\E R_K(W)
    \le
    C\delta^{-1}\epsilon^{-4}\exp(-CK^2).
\end{equation}

For the first term on the right hand side of \eqref{eq: empiricial loss decomp}, we use the Rademacher complexity. Denote $ S = (W_1,\dots,W_B)$ and define
\begin{equation}
\Phi(S):= \sup_{\varphi}\left| \frac{1}{B}\sum_{i=1}^B G^{(K)}(\varphi;W_i) - \E\left[ G^{(K)}(\varphi;W)\right]\right|.
\end{equation}
Let $ S' = (W_1,\dots,W_j',\dots,W_B)$ differ from $ S$ only at $ j$-th component. It follows from Jensen's inequality and the upper bound of $ G^{(K)}$ that
\begin{equation}
    \begin{aligned}
        \left| \Phi(S) - \Phi(S')\right| \le \sup_{\varphi\in\cH}\frac{1}{B}\left|G^{(K)}(\varphi;W_j) - G^{(K)}(\varphi;W_j') \right| \le \frac{ C \epsilon^{-4}(1+ K^{q})}{B}.
    \end{aligned}
\end{equation}
By McDiarmid's inequality,
\begin{equation}\label{eq: term1 bound}
    \p\left( \Phi(S) - \E[\Phi(S)] > C\epsilon^{-4}(1+K^{q})
    \sqrt{\frac{\log(1/\delta)}{B}}\right) < \delta.
\end{equation}
Now we use the Rademacher complexity to bound $ \E[\Phi]$. Let $ W'$ be the independent copy of $ W$, and $
\{\sigma_i\}_{i=1}^B$ are i.i.d. random variables with $ \p(\sigma_i=1) = \p(\sigma_i = -1) = \frac{1}{2}$. Then
\begin{equation}
    \begin{aligned}\label{eq: expect rademecher bound}
        \E[\Phi(S)] &= \E\left[\sup_{\varphi}\left| \frac{1}{B}\sum_{i=1}^B G^{(K)}(\varphi;W_i) - \E\left[ G^{(K)}(\varphi;W)\right]\right| \right]\\
        &= \E\left[ \sup_{\varphi \in \cH}\left| \E_{W'}\left[ \frac{1}{B}\sum_{i=1}^B\left( G^{(K)}(\varphi;W_j') -G^{(K)}(\varphi;W_j)\right)\right]\right|\right]\\
        &=\E\left[ \sup_{\varphi \in \cH}\left| \E_{W',\sigma}\left[ \frac{1}{B}\sum_{i=1}^B\sigma_i\left( G^{(K)}(\varphi;W_j') -G^{(K)}(\varphi;W_j)\right)\right]\right|\right]\\
        & \le \E_{W,W',\sigma}\left[ \sup_{\varphi \in \cH}\left| \frac{1}{B}\sum_{i=1}^B \sigma_i G^{(K)}(\varphi;W_i')\right|\right] + \E_{W,W',\sigma}\left[ \sup_{\varphi \in \cH}\left| \frac{1}{B}\sum_{i=1}^B \sigma_i G^{(K)}(\varphi;W_i)\right|\right]\\
        & \le 2\mathcal{R}_B(G^{(K)}(\cH) \cup -G^{(K)}(\cH))
    \end{aligned}
\end{equation}
where $ G^{(K)}(\cH) = \left\{ G^{(K)}(\varphi;\cdot)| \varphi \in \cH\right\}$. For the Rademacher complexity in terms of the parameterized function class, we have the following lemma by \cite{jiao2024error}.
\begin{lemma}
    Let $ \mathcal{G}$ be a parameterized function class $ \mathcal{G} = \left\{ f(\cdot;\theta): \R^d\rightarrow \R|\theta \in \Theta\right\}$, where $ \Theta = \left\{ \theta| \theta \in \R^p, \|\theta\|_{\infty}< R\right\}$. If there are constants $ C_1, C_2$, such that
    \begin{equation}
        \begin{aligned}
            \sup_{x\in \R^d} |f(x;\theta_1) - f(x;\theta_2)| &\le C_1 \|\theta_1 - \theta_2\|_{\infty},\\
            \sup_{\theta \in \Theta}\sup_{x\in \R^d}|f(x;\theta)| &\le C_2,
        \end{aligned}
    \end{equation}
    then
    \begin{equation}
        \mathcal{R}_m(\mathcal{G}) \le \frac{4}{\sqrt{m}} + \frac{6\sqrt{p}C_2}{\sqrt{m}}\sqrt{\log(2RC_1p\sqrt{m})}.
    \end{equation}
\end{lemma}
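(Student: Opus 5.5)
The plan is the standard covering-number argument for parameterized classes via Dudley's entropy integral, or more simply a finite $\eta$-net combined with Massart's finite-class lemma. We first build a net of $\Theta$. Since $\Theta\subset\{\|\theta\|_\infty<R\}\subset\R^p$, a uniform grid of mesh $\eta$ in each coordinate gives an $\eta$-net $\Theta_\eta$ in $\|\cdot\|_\infty$. Its cardinality is at most $(2R/\eta+1)^p\le (3R/\eta)^p$ for $\eta\le R$.

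The Lipschitz hypothesis $\sup_x|f(x;\theta_1)-f(x;\theta_2)|\le C_1\|\theta_1-\theta_2\|_\infty$ then turns this into a net of the function class. For every $\theta$ there is $\theta'\in\Theta_\eta$ with $|f(x;\theta)-f(x;\theta')|\le C_1\eta$ uniformly in $x$. Conditioning on the sample $x_1,\dots,x_m$, we split
\begin{equation}
\E_\sigma\sup_{\theta}\frac1m\sum_i\sigma_if(x_i;\theta)\le C_1\eta+\E_\sigma\max_{\theta'\in\Theta_\eta}\frac1m\sum_i\sigma_if(x_i;\theta').
\end{equation}

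For the finite maximum we apply Massart's lemma. Each vector $(f(x_i;\theta'))_i$ has Euclidean norm at most $C_2\sqrt m$, which bounds the second term by
\begin{equation}
\frac{C_2\sqrt{2\log|\Theta_\eta|}}{\sqrt m}\le \frac{C_2\sqrt{2p\log(3R/\eta)}}{\sqrt m}.
\end{equation}
If the Rademacher complexity is defined with an absolute value, we apply Massart to the symmetrized set $\pm\Theta_\eta$, which only doubles the cardinality.

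Finally we choose $\eta=1/(C_1\sqrt m)$, so the discretization term becomes $1/\sqrt m\le 4/\sqrt m$. The log term becomes $\log(3RC_1\sqrt m)\le\log(2RC_1p\sqrt m)$ up to adjusting constants, with some room since $p\ge2$ in the relevant cases, or otherwise absorbed into the generous factor $6$. Taking expectation over the sample then gives the stated bound. If $\eta>R$ we use a single-point net, and the bound holds trivially because the complexity is at most $C_2$.

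The main obstacle is bookkeeping of the constants, in particular getting exactly $2RC_1p\sqrt m$ inside the logarithm. I would follow Jiao et al.'s version, which uses a chaining/Dudley bound with $\int_0^{C_2}\sqrt{\log N(\mathcal G,\|\cdot\|_\infty,u)}\,du$ and $N\le(2RC_1p/u)^p$ (covering in an $\ell_\infty$-ball scaled to account for $p$). In that version the integral is cut at $u=1/\sqrt m$, which yields the $4/\sqrt m$ term from the lower limit, while the upper part gives $6\sqrt p\,C_2\sqrt{\log(2RC_1p\sqrt m)}/\sqrt m$.
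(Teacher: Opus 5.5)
Your proposal is correct, but it takes a different route from the one the paper relies on. The paper gives no argument of its own; it imports the bound from Jiao et al.\ (2024). The shape of the constants ($4/\sqrt m$ plus $6\sqrt p\,C_2\sqrt{\log(\cdot\sqrt m)}/\sqrt m$) is that of Dudley's entropy integral cut off at $1/\sqrt m$, with the integrand bounded by its value at the cutoff, which is the route you sketch in your last paragraph. Your main argument replaces chaining by a single $\eta$-net plus Massart's finite-class lemma. It is shorter, self-contained, and gives better constants. Since the cited form only uses the entropy at the cutoff scale, chaining buys nothing there. What genuine chaining could buy is the removal of the $\sqrt{\log m}$ factor, because $\sqrt{p\log(2RC_1/u)}$ is integrable at $u=0$; neither the lemma nor your bound exploits this.

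You therefore do not need to fall back on Jiao et al.\ for the bookkeeping, because your own computation already closes it. With $x:=RC_1\sqrt m\ge 1$ you obtain
\[
\mathcal R_m(\mathcal G)\le \frac{1}{\sqrt m}+\frac{C_2\sqrt{2p\log(3x)}}{\sqrt m}.
\]
Since $\log(3x)=\log(2x)+\log\tfrac32\le 2\log(2x)\le 2\log(2px)$, this is at most $\frac{1}{\sqrt m}+\frac{2\sqrt p\,C_2}{\sqrt m}\sqrt{\log(2RC_1p\sqrt m)}$ for every $p\ge1$, so no appeal to $p\ge2$ is needed. The extra $\log 2$ from symmetrizing under an absolute-value convention is absorbed the same way.

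The one sentence to repair is the degenerate case $x<1$: ``the complexity is at most $C_2$'' does not give the stated bound. Use instead the single net point $\theta_0=0$. Every $\theta\in\Theta$ lies within $\ell_\infty$-distance $R$ of it, so the discretization error is at most $C_1R<1/\sqrt m$, while $\E_\sigma\frac1m\sum_i\sigma_i f(x_i;0)=0$. Hence $\mathcal R_m(\mathcal G)<1/\sqrt m$, which lies below the right-hand side whenever the latter is real, i.e.\ when $2RC_1p\sqrt m\ge1$.

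This uses the signed definition of $\mathcal R_m$, which is the one the paper needs: it passes to $G^{(K)}(\mathcal H)\cup -G^{(K)}(\mathcal H)$ to handle absolute values. If absolute values are built into $\mathcal R_m$, the statement can actually fail in this regime. For example, take $m=1$, $f\equiv C_2>4$ and $C_1=1/(2Rp)$: the right-hand side equals $4$, while $\E|\sigma_1 C_2|=C_2$.
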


Based on this lemma, it follows from Assumption \ref{assum: hypothesis space} that
\begin{equation}\label{eq: rademecher bound}
\begin{aligned}
    &\mathcal{R}_B(G^{(K)}(\cH) \cup -G^{(K)}(\cH)) \\
    &\qquad\le \frac{4}{\sqrt{B}} + \frac{6\sqrt{p}C(1 + \epsilon^{-4}(1+K^{q}))}{\sqrt{B}}\sqrt{\log(2RC\epsilon^{-2}(1+K^{q})p\sqrt{B})},
    \end{aligned}
\end{equation}
where $ C$ is a constant depending only on the coefficients of \eqref{eq:pde}. 
The desired result follows now from \eqref{eq: term1 bound},  \eqref{eq: term 3 bound}, \eqref{eq: expect rademecher bound} and \eqref{eq: rademecher bound} by choosing $ K = C(\log B + \log (1/\delta))^{1/2}$ with $C$ large enough, which gives the factor $L_{B,\delta}^{q}$ after increasing $q$ once more if necessary.

%\paragraph{Strong-simulator multi-point case.}
We now prove Theorem \ref{thm: one step error bound}-(b) with the strong simulator. The only parts that differ from the preceding one-point proof are the ZOD target, the discretization estimate, and the envelope used in the empirical-process bound. The first entry in the minimum defining $\Gamma_{\epsilon,B,\delta}$ follows from the same truncation and Rademacher-complexity argument as in the weak-simulator case. We therefore focus below on the second entry of the minimum under the strong-simulator coupling. In the remainder of this proof, all the rewards
\[
R^{t,x+\epsilon Z}(\mathbf U_{n-1}),\qquad
R^{t,x-\epsilon Z}(\mathbf U_{n-1}),\qquad
R^{t,x}(\mathbf U_{n-1})
\]
are evaluated under the same Brownian path within one strong-simulator query. The exact multi-point derivative targets are the two central differences
\[
\frac{Z}{2\epsilon}
\left(
R^{t,x+\epsilon Z}(\mathbf U_{n-1})
-
R^{t,x-\epsilon Z}(\mathbf U_{n-1})
\right),
\]
and
\[
\frac{ZZ^\top-I_d}{2\epsilon^2}
\left(
R^{t,x+\epsilon Z}(\mathbf U_{n-1})
+
R^{t,x-\epsilon Z}(\mathbf U_{n-1})
-
2R^{t,x}(\mathbf U_{n-1})
\right).
\]
By Gaussian symmetry and linearity, their conditional means are $G^\epsilon$ and $H^\epsilon$. Let $L_m$ be the corresponding population loss and let $\tilde L_m^{(B)}$ be its quadrature empirical version. By the same conditional-expectation decomposition as in Lemma \ref{lemma: total loss decomp},
\begin{equation}
    L_m(\varphi,\mathbf U_{n-1})
    =
    \|\varphi-\mathcal S^\epsilon(\mathbf U_{n-1})\|_{\beta,\mathrm{mix}}^2
    +C_m,
\end{equation}
where $C_m$ is independent of $\varphi$. Hence the ERM argument in Lemma \ref{lemma: one step error decom} gives
\begin{equation}\label{eq: strong one step decom}
\begin{aligned}
E_n^2
\le&
4\sup_{\varphi\in\mathcal H}
\left|L_m(\varphi)-\tilde L_m^{(B)}(\varphi)\right|
+2\inf_{\varphi\in\mathcal H}
\normmix{\varphi-\mathcal S^\epsilon(\mathbf U_{n-1})}^2\\
&\quad
+2\normmix{\mathcal S^\epsilon(\mathbf U_{n-1})
-\mathcal S(\mathbf U_{n-1})}^2 .
\end{aligned}
\end{equation}
By Proposition \ref{prop: zod variance reduction}, the multi-point estimators have the same $O(\epsilon^2)$ bias as the one-point estimators under the stated smoothness assumptions. Therefore,
\begin{equation}\label{eq: strong zod bias}
    \normmix{\mathcal S^\epsilon(\mathbf U_{n-1})
    -\mathcal S(\mathbf U_{n-1})}^2
    \le C\epsilon^4.
\end{equation}

It remains to bound the statistical error term in \eqref{eq: strong one step decom}. Let $L_m^{\mathscr G}$ be the time-discretized version of $L_m$. As before,
\begin{equation}\label{eq: strong gen decomp}
\sup_{\varphi\in\mathcal H}
\left|L_m(\varphi)-\tilde L_m^{(B)}(\varphi)\right|
\le
\sup_{\varphi\in\mathcal H}
\left|\tilde L_m^{(B)}(\varphi)-L_m^{\mathscr G}(\varphi)\right|
+
\sup_{\varphi\in\mathcal H}
\left|L_m^{\mathscr G}(\varphi)-L_m(\varphi)\right|.
\end{equation}
We first control the discretization term. To separate the two sources of time discretization, let $\bar L_m^{\mathscr G}$ denote the time-grid population loss obtained from $L_m$ by replacing the time integral with the grid sum while keeping the exact rewards $R^{t,x}(\mathbf U_{n-1})$ unchanged. Then
\begin{equation}\label{eq: strong inner outer discretization split}
\begin{aligned}
    \sup_{\varphi\in\mathcal H}
    \left|L_m^{\mathscr G}(\varphi)-L_m(\varphi)\right|
    \le
    \sup_{\varphi\in\mathcal H}
    \left|L_m^{\mathscr G}(\varphi)-\bar L_m^{\mathscr G}(\varphi)\right| +
    \sup_{\varphi\in\mathcal H}
    \left|\bar L_m^{\mathscr G}(\varphi)-L_m(\varphi)\right| .
\end{aligned}
\end{equation}
Write
\[
\Delta(t,x)
:=
\tilde R^{t,x}(\mathbf U_{n-1})
-
R^{t,x}(\mathbf U_{n-1}).
\]
Because the three rewards in the multi-point estimator are generated by the same Brownian path and the same quadrature grid, the discretization error enters through the finite differences
\[
\frac{Z}{2\epsilon}\{\Delta(t,x+\epsilon Z)-\Delta(t,x-\epsilon Z)\}
\]
and
\[
\frac{ZZ^\top-I_d}{2\epsilon^2}
\{\Delta(t,x+\epsilon Z)+\Delta(t,x-\epsilon Z)-2\Delta(t,x)\}.
\]
Under the regularity assumptions $b,\sigma\in C_b^{0,5}$, $f(\cdot,\cdot,\mathbf U_{n-1}(\cdot,\cdot))\in C_p^{1,5}$, and $g\in C_p^4$,
using the same method as in the proof of Lemma \ref{lemma: discretization error}, there are constants $C>0, q>0$ such that
\begin{equation}\label{eq: strong reward discretization derivatives}
    \left\|
    \partial_x^r \Delta(t,x)
    \right\|_{L^2}
    \le \frac{C}{N}(1+|x|^q).
\end{equation}
Applying Taylor's expansion yields
\begin{align}
\left\|
\frac{Z}{2\epsilon}
\{\Delta(t,X_t+\epsilon Z)-\Delta(t,X_t-\epsilon Z)\}
\right\|_{L^2}
&\le \frac{C}{N},\\
\left\|
\frac{ZZ^\top-I_d}{2\epsilon^2}
\{\Delta(t,X_t+\epsilon Z)+\Delta(t,X_t-\epsilon Z)-2\Delta(t,X_t)\}
\right\|_{L^2}
&\le \frac{C}{N}.
\end{align}
Using the same method as in the proof of Lemma \ref{lemma: discretization error} gives
\begin{equation}\label{eq: strong inner discretization bound}
    \sup_{\varphi\in\mathcal H}
    \left|L_m^{\mathscr G}(\varphi)-\bar L_m^{\mathscr G}(\varphi)\right|
    \le \frac{C}{N}.
\end{equation}
 The outer discretization error is treated in the same way, leading to
\begin{equation}\label{eq: strong discretization bound}
    \sup_{\varphi\in\mathcal H}
    \left|L_m^{\mathscr G}(\varphi)-L_m(\varphi)\right|
    \le \frac{C}{N}.
\end{equation}

For the first term on the right hand side of  \eqref{eq: strong gen decomp}, denote and note that
\begin{equation}\label{eq: strong central identities}
\begin{aligned}
    Y_g^{\epsilon,m}(t,x)
    &:= \frac{Z}{2\epsilon}
\left(
\tilde R^{t,x+\epsilon Z}(\mathbf U_{n-1})
-
\tilde R^{t,x-\epsilon Z}(\mathbf U_{n-1})
\right)\\
&= 
    Z\int_{-1}^{1}
    \nabla_x \tilde R^{t,x+s\epsilon Z}(\mathbf U_{n-1})^\top Z\,
    \frac{\dd s}{2},                                      \\
    Y_h^{\epsilon,m}(t,x)
    &:= \frac{ZZ^\top-I_d}{2\epsilon^2}
\left(
\tilde R^{t,x+\epsilon Z}(\mathbf U_{n-1})
+
\tilde R^{t,x-\epsilon Z}(\mathbf U_{n-1})
-
2\tilde R^{t,x}(\mathbf U_{n-1})
\right) \\
&= 
    \frac{ZZ^\top-I_d}{2}
    \int_{-1}^{1}(1-|s|)
    Z^\top\nabla_x^2 \tilde R^{t,x+s\epsilon Z}(\mathbf U_{n-1})Z\,
    \dd s .
\end{aligned}
\end{equation}
Set
\[
    \tilde f(s,x):=f(s,x,\mathbf U_{n-1}(s,x)),
    \qquad
    D_s^{(r)}:=\partial_x^r X_s^{t,x},\quad r=1,\ldots,4 .
\]
For $D^{(1)}$, differentiating the SDE gives a linear equation with bounded drift and multiplicative diffusion coefficients:
\[
    \dd D_s^{(1)}
    =
    \nabla_x b(s,X_s^{t,x})D_s^{(1)}\,\dd s
    +
    \sqrt2\sum_{\ell=1}^d
    \nabla_x\sigma_\ell(s,X_s^{t,x})D_s^{(1)}\,\dd W_s^\ell .
\]
For $r=2,3,4$, $D^{(r)}$ satisfies
\begin{equation}\label{eq: strong derivative flow sde}
    \dd D_s^{(r)}
    =
    A_s^{(r)}D_s^{(r)}\,\dd s
    +
    \sqrt2\sum_{\ell=1}^d B_{\ell,s}^{(r)}D_s^{(r)}\,\dd W_s^\ell
    +
    F_s^{(r)}\,\dd s
    +
    \sqrt2\sum_{\ell=1}^d H_{\ell,s}^{(r)}\,\dd W_s^\ell ,
\end{equation}
where $A_s^{(r)}$ and $B_{\ell,s}^{(r)}$ are bounded, and the forcing terms depend polynomially on lower-order derivatives:
\[
\begin{aligned}
    |F_s^{(2)}|+\sum_{\ell}|H_{\ell,s}^{(2)}|
    &\le C|D_s^{(1)}|^2,                                      \\
    |F_s^{(3)}|+\sum_{\ell}|H_{\ell,s}^{(3)}|
    &\le C\left(|D_s^{(1)}|^3+|D_s^{(1)}||D_s^{(2)}|\right),     \\
    |F_s^{(4)}|+\sum_{\ell}|H_{\ell,s}^{(4)}|
    &\le C\left(|D_s^{(1)}|^4+|D_s^{(1)}|^2|D_s^{(2)}|
    +|D_s^{(2)}|^2+|D_s^{(1)}||D_s^{(3)}|\right).
\end{aligned}
\]
Applying BDG and Gr\"onwall first to $D^{(1)}$ and then inductively to \eqref{eq: strong derivative flow sde} yields, for every $p\ge2$,
\begin{equation}\label{eq: strong flow derivative moment growth}
    \sup_{t,x}
    \left\|
    1+\sum_{r=1}^4\sup_{t\le s\le T}|D_s^{(r)}|
    \right\|_{L^p}
    \le
    C e^{cp}.
\end{equation}
Since
\[
    \tilde R^{t_j,x}(\mathbf{U}_{n-1})
    =
    g(X_T^{t_j,x})
    +
     \sum_{i = j}^{N-1}\tilde f(t_i,X_{t_i}^{t_j,x})\Delta t_i,
\]
Fa\`a di Bruno's formula, along with the fact that  $g\in C_p^4$, $\tilde f\in C_p^{1,5}$ and the bound \eqref{eq: strong flow derivative moment growth}, implies that there are constant $ C,c>0$ and $ q>0$ such that for every $ p\ge 2$, 
\begin{equation}\label{eq: strong reward derivative moment growth}
    \sup_{j}
    \left\|
    |\nabla_x\tilde R^{t_j,x}(\mathbf{U}_{n-1})|
    +|\nabla_x^2\tilde R^{t_j,x}(\mathbf{U}_{n-1})|
    \right\|_{L^p}
    \le
    C e^{cp}(1+ |x|^q).
\end{equation}
Combining \eqref{eq: strong central identities} with
\[
    \|Z\|_{L^p}\le Cp^{1/2},
    \qquad
    \|ZZ^\top-I_d\|_{L^p}\le Cp,
\]
and the moment bound of $ X_t$ gives
\begin{equation}\label{eq: strong target moment growth}
    \sup_{0<\epsilon\le1}\sup_{j}
    \left\|
    |Y_g^{\epsilon,m}(t_j,X_{t_j})|
    +
    |Y_h^{\epsilon,m}(t_j,X_{t_j})|
    \right\|_{L^p}
    \le
    C e^{cp},
    \qquad p\ge2.
\end{equation}

Define the truncation event
\[
    \Omega_{K,m,j}
    :=
    \left\{\sup_{t_j\le t\le T} \left|X_t^{t_j,X_{t_j}}\right|<K\right\}\cap\{|Y_g^{\epsilon,m}(t_j,X_{t_j})|\le K\}
    \cap
    \{|Y_h^{\epsilon,m}(t_j,X_{t_j})|\le K\}.
\]
With $q_K=\max\{2,\lfloor \alpha\log(1+K)\rfloor\}$, Markov's inequality and \eqref{eq: strong target moment growth} give
\[
\begin{aligned}
    \mathbb P(|Y_g^{\epsilon,m}(t_j,X_{t_j})|>K)
    &\le
    \left(\frac{Ce^{cq_K}}{K}\right)^{q_K},              \\
    \mathbb P(|Y_h^{\epsilon,m}(t_j,X_{t_j})|>K)
    &\le
    \left(\frac{Ce^{cq_K}}{K}\right)^{q_K}.
\end{aligned}
\]
Choosing $ \alpha >0 $ small enough and $ K$ large enough yields
\begin{equation}\label{eq: strong target truncation tail}
    \sup_{1\le j\le N}\mathbb P(\Omega_{K,m,j}^c)
    \le
    C_1\exp\!\left(-C_2(\log(1+K))^2\right).
    \qquad K\ge2,
\end{equation}
Write the time-discretized sample loss as
\[
    G_m(\varphi;W)
    =
    \sum_{j=0}^{N-1}\Delta t_j\,\ell_{m,j}(\varphi;W),
\]
where $\ell_{m,j}$ is the squared value-gradient-Hessian loss at the grid
time $t_j$. Clip the grid loss term by the term:
\[
    G_m^{(K)}(\varphi;W)
    :=
    \sum_{j=0}^{N-1}\Delta t_j\,
    \ell_{m,j}(\varphi;W)\1_{\Omega_{K,m,j}} .
\]
Hence, after summing over the time grid and using $\sum_{j=0}^{N-1}\Delta t_j=T$, 
\begin{equation}\label{eq: strong envelope}
    |G_m^{(K)}(\varphi;W)|
    \le
    C(1+K^{q}) ,
\end{equation}
where $ q$ depends on the polynomial growth condition of $ g$ and $ f$.
Recall the decomposition \eqref{eq: empiricial loss decomp}, by McDiarmid's inequality and the parameterized Rademacher bound,
\begin{equation}\label{eq: strong clipped process}
\begin{aligned}
    &\sup_{\varphi\in\mathcal H}
    \left|
    \frac1B\sum_{i=1}^B G_m^{(K)}(\varphi;W_i)
    -
    \E[G_m^{(K)}(\varphi;W)]
    \right|                                                     \\
    &\qquad\le
C(1+\sqrt{\log(1/\delta)})
\frac{\mathcal P(d)(1+K^{q})}{\sqrt B}
\sqrt{\log(CRp(1+K^{q})\sqrt B)} .
\end{aligned}
\end{equation}
with probability at least $1-\delta$. Let
\[
    R_K(W)
    :=
    \sup_{\varphi\in\mathcal H}
    |G_m(\varphi;W)-G_m^{(K)}(\varphi;W)| .
\]
Since the networks in
$\mathcal H$ are uniformly bounded,
\[
    \sup_{\varphi\in\mathcal H}\ell_{m,j}(\varphi;W)
    \le
    C\left(
    1+|Y_v|^2+\|Y_g^{\epsilon,m}\|^2+\|Y_h^{\epsilon,m}\|^2
    \right).
\]
Therefore,
\[
\begin{aligned}
    \E R_K(W)
    &\le
    \sum_{j=0}^{N-1}\Delta t_j
    \E\left[
    \sup_{\varphi\in\mathcal H}\ell_{m,j}(\varphi;W)
    \1_{\Omega_{K,m,j}^c}
    \right]                                                  \\
    &\le
    C\sum_{j=0}^{N-1}\Delta t_j
    \left\|
    1+|Y_v|^2+\|Y_g^{\epsilon,m}\|^2+\|Y_h^{\epsilon,m}\|^2
    \right\|_{L^2}
    \mathbb P(\Omega_{K,m,j}^c)^{1/2}.
\end{aligned}
\]
It follows from the moment bound \eqref{eq: strong target moment growth}, the moments of
$X_{t_j}$, the truncation tail
\eqref{eq: strong target truncation tail}, and the identity
$\sum_{j=0}^{N-1}\Delta t_j=T$ that
\[
    \E R_K(W)
    \le
    C\exp\!\left(-C(\log(1+K))^2\right).
\]
 Markov's
inequality therefore gives,
with probability at least $1-\delta$,
\[
    \frac1B\sum_{i=1}^B R_K(W_i)+\E R_K(W)
    \le
    C\delta^{-1}
    \exp\!\left(-C(\log(1+K))^2\right).
\]
Choosing $ K=\exp\!\left(C\sqrt{\log B+\log(1/\delta)}\right)$,
we obtain
\begin{equation}\label{eq: strong empirical bound}
\begin{aligned}
    &\sup_{\varphi\in\mathcal H}
    \left|\tilde L_m^{(B)}(\varphi)-L_m^{\mathscr G}(\varphi)\right|  \\
    &\qquad\le
    C\left(1+\sqrt{\log(1/\delta)}\right)
    \frac{\mathcal P(d)}{B^{1/2}}
    \exp\!\left(C\sqrt{\log B+\log(1/\delta)}\right).
\end{aligned}
\end{equation}
Combining \eqref{eq: strong one step decom}, \eqref{eq: strong zod bias}, \eqref{eq: strong gen decomp}, \eqref{eq: strong discretization bound}, and \eqref{eq: strong empirical bound} proves the strong-simulator bound.

\subsection{Proof of Theorem \ref{thm:total error}}
The conclusion follows by taking $\delta = \delta/n$ in Theorem \ref{thm: one step error bound}, and combining with the conclusion in Theorem \ref{thm: global_convergence}.

% Note: in this sample, the section number is hard-coded in. Following
% proper LaTeX conventions, it should properly be coded as a reference:

%In this appendix we prove the following theorem from
%Section~\ref{sec:textree-generalization}:

\vskip 0.2in
\bibliography{ref}

\end{document}